\newcommand{\bbD}{\mathbb{D}}
\newcommand{\bbN}{\mathbb{N}}
\newcommand{\bbR}{\mathbb{R}}
\newcommand{\bbS}{\mathbb{S}}
\newcommand{\calC}{\mathcal{C}}
\newcommand{\calD}{\mathcal{D}}
\newcommand{\calE}{\mathcal{E}}
\newcommand{\calF}{\mathcal{F}}
\newcommand{\calG}{\mathcal{G}}
\newcommand{\calJ}{\mathcal{J}}
\newcommand{\calL}{\mathcal{L}}
\newcommand{\calM}{\mathcal{M}}
\newcommand{\calP}{\mathcal{P}}
\newcommand{\calR}{\mathcal{R}}
\newcommand{\bA}{\boldsymbol{A}}
\newcommand{\bB}{\boldsymbol{B}}
\newcommand{\bC}{\boldsymbol{C}}
\newcommand{\bD}{\boldsymbol{D}}
\newcommand{\bN}{\boldsymbol{N}}
\newcommand{\bP}{\boldsymbol{P}}
\newcommand{\bT}{\boldsymbol{T}}
\newcommand{\bX}{\boldsymbol{X}}
\newcommand{\bi}{\boldsymbol{i}}
\newcommand{\bj}{\boldsymbol{j}}
\newcommand{\br}{\boldsymbol{r}}
\newcommand{\bs}{\boldsymbol{s}}
\newcommand{\bt}{\boldsymbol{t}}
\newtheorem{theorem}{\bf Theorem}
\newtheorem{remark}[theorem]{\bf Remark}
\newtheorem{definition}[theorem]{\bf Definition}
\newtheorem{lemma}[theorem]{\bf Lemma}
\newtheorem{problem}[theorem]{\bf Problem}
\newtheorem{corollary}[theorem]{\bf Corollary}
\DeclareMathOperator{\NN}{NN}
\DeclareMathOperator{\diag}{diag}
\DeclareMathOperator{\blkdiag}{blkdiag}
\DeclareMathOperator{\chol}{chol}
\DeclareMathOperator{\Cayley}{Cayley}
\newcommand\signals[2]{\ell_{2e}^{#2}(\bbN_0^{#1})}
\newcommand{\lsb}{[}
\newcommand{\rsb}{]}
\newcommand{\sgrid}[1]{
      \coordinate (x) at #1;
      \draw[dotted] ($(x)+(-1.5,1)$) -- ($(x)+(1.5,1)$);
      \draw[dotted] ($(x)+(-1.5,-1)$) -- ($(x)+(1.5,-1)$);
      \draw[dotted] ($(x)+(-1,-1.5)$) -- ($(x)+(-1,1.5)$);
      \draw[dotted] ($(x)+(1,-1.5)$) -- ($(x)+(1,1.5)$);
}
\newcommand{\lgrid}[1]{
      \coordinate (x) at #1;
      \draw[dotted] ($(x)+(-2.3,1)$) -- ($(x)+(2.3,1)$);
      \draw[dotted] ($(x)+(-2.3,-1)$) -- ($(x)+(2.3,-1)$);
      \draw[dotted] ($(x)+(-2.3,2)$) -- ($(x)+(2.3,2)$);
      \draw[dotted] ($(x)+(-2.3,-2)$) -- ($(x)+(2.3,-2)$);
      \draw[dotted] ($(x)+(-1,-2.3)$) -- ($(x)+(-1,2.3)$);
      \draw[dotted] ($(x)+(1,-2.3)$) -- ($(x)+(1,2.3)$);
      \draw[dotted] ($(x)+(-2,-2.3)$) -- ($(x)+(-2,2.3)$);
      \draw[dotted] ($(x)+(2,-2.3)$) -- ($(x)+(2,2.3)$);
}
\definecolor{mycolor1}{rgb}{0.00000,0.44700,0.74100}%
\definecolor{mycolor2}{rgb}{0.85000,0.32500,0.09800}%
\definecolor{mycolor3}{rgb}{0.92900,0.69400,0.12500}%
\definecolor{mycolor4}{rgb}{0.49400,0.18400,0.55600}%
\definecolor{mycolor5}{rgb}{0.46600,0.67400,0.18800}%
\definecolor{mycolor6}{rgb}{0.30100,0.74500,0.93300}%
\definecolor{mycolor7}{rgb}{0.63500,0.07800,0.18400}%
\pgfplotsset{compat=1.18}
\begin{document}

\let\oldthebibliography=\thebibliography
\let\endoldthebibliography=\endthebibliography
\renewenvironment{thebibliography}[1]{%
    \begin{oldthebibliography}{#1}%
    \setlength{\parskip}{0ex}%
    \setlength{\itemsep}{0ex}%
}%
{%
    \end{oldthebibliography}%
}
\renewcommand*{\bibfont}{\normalfont\small}%

\begin{frontmatter}

\title{LipKernel: Lipschitz-Bounded Convolutional\\ Neural Networks via Dissipative Layers\thanksref{footnoteinfo}} 

\thanks[footnoteinfo]{P. Pauli was with the Institute for Systems Theory and Automatic Control, University of Stuttgart, while carrying out this work. F. Allgöwer acknowledges that this work was funded by Deutsche Forschungsgemeinschaft (DFG, German Research Foundation) under Germany's Excellence Strategy - EXC 2075 - 390740016 and under grant 468094890. P. Pauli thanks the International Max Planck Research School for Intelligent Systems (IMPRS-IS) for supporting her. The work of R. Wang and I. Manchester was supported in part by the Australian Research Council (DP230101014) and Google LLC.}

\author[Eindhoven]{Patricia Pauli},
\author[Sydney]{Ruigang Wang},
\author[Sydney]{Ian R. Manchester},
\author[Stuttgart]{Frank Allg\"ower}

\address[Eindhoven]{Department of Mechanical Engineering, Eindhoven University of Technology, 5600 MB Eindhoven, Netherlands,\\ (e-mail: p.d.pauli@tue.nl)}  
\address[Stuttgart]{Institute for Systems Theory and Automatic Control, University of Stuttgart, 70550 Stuttgart, Germany,\\ (e-mail: frank.allgower@ist.uni-stuttgart.de)}
\address[Sydney]{Australian Centre
for Robotics and School of Aerospace, Mechanical and Mechatronic Engineering, The University of Sydney, Australia, (e-mail: $\{$ruigang.wang, ian.manchester$\}$@sydney.edu.au)}

\begin{keyword}                           
Convolutional neural networks, Lipschitz bounds, dissipativity, 2-D systems.
\end{keyword}                             

\begin{abstract}                          
We propose a novel layer-wise parameterization for convolutional neural networks (CNNs) that includes built-in robustness guarantees by enforcing a prescribed Lipschitz bound. Each layer in our parameterization is designed to satisfy a linear matrix inequality (LMI), which in turn implies dissipativity with respect to a specific supply rate. Collectively, these layer-wise LMIs ensure Lipschitz boundedness for the input-output mapping of the neural network, yielding a more expressive parameterization than through spectral bounds or orthogonal layers. Our new method LipKernel directly parameterizes dissipative convolution kernels using a 2-D Roesser-type state space model. This means that the convolutional layers are given in standard form after training and can be evaluated without computational overhead. In numerical experiments, we show that the run-time using our method is orders of magnitude faster than state-of-the-art Lipschitz-bounded networks that parameterize convolutions in the Fourier domain, making our approach particularly attractive for improving the robustness of learning-based real-time perception or control in robotics, autonomous vehicles, or automation systems. We focus on CNNs, and in contrast to previous works, our approach accommodates a wide variety of layers typically used in CNNs, including 1-D and 2-D convolutional layers, maximum and average pooling layers, as well as strided and dilated convolutions and zero padding. However, our approach naturally extends beyond CNNs as we can incorporate any layer that is incrementally dissipative.
\end{abstract}

\end{frontmatter}

\section{Introduction}
Deep learning architectures such as deep neural networks (NNs), convolutional neural networks (CNNs) and recurrent neural networks have ushered in a paradigm shift across numerous domains within engineering and computer science \citep{lecun2015deep}. Some prominent applications of such NNs include image and video processing tasks, natural language processing tasks, nonlinear system identification, and learning-based control \citep{bishop1994neural,li2021survey}. In these applications, NNs have been found to exceed other methods in terms of flexibility, accuracy, and scalability. However, as black box models, NNs in general lack robustness guarantees, limiting their utility for safety-critical applications.

In particular, it has been shown that NNs are highly sensitive to small ``adversarial'' input perturbations \citep{szegedy2013intriguing}. This sensitivity can be quantified by the Lipschitz constant of an NN. In learning-based control, ensuring safety and stability of closed-loop systems with a neural component often requires the gain of the NN to be bounded  \citep{berkenkamp2017safe,brunke2022safe,jin2020stability}, and the Lipschitz constant bounds the NN gain. Numerous approaches have been proposed for Lipschitz constant estimation \citep{virmaux2018lipschitz,combettes2020lipschitz,fazlyab2019efficient,latorre2020lipschitz}. While calculating the Lipschitz constant is an NP-hard problem \citep{virmaux2018lipschitz,jordan2020exactly}, computationally cheap but loose upper bounds are obtained as the product of the spectral norms of the matrices \citep{szegedy2013intriguing}, and much tighter bounds can be determined using semidefinite programming (SDP) methods derived from robust control \citep{fazlyab2019efficient, revay2020convex,latorre2020lipschitz,pauli2023lipschitz-ana,pauli2024lipschitz}.

While analysis of a given NN is of interest, it naturally raises the question of \textit{synthesis} of NNs with built-in Lipschitz bounds, which is the subject of the present work. 
Motivated by the composition property of Lipschitz bounds, most approaches assume 1-Lipschitz activation functions \citep{anil2019sorting, prach2022almost} and attempt to constrain the Lipschitz constant (i.e., spectral bound) of matrices and convolution operators appearing in the network. However, this can be conservative, resulting in limited expressivity, i.e., the constraints restrict the ability to fit the underlying function behavior.

To impose more sophisticated linear matrix inequality (LMI) based Lipschitz bounds, \citet{pauli2021training,pauli2022neural,gouk2021regularisation} include constraints or regularization terms into the training problem. However, the resulting constrained optimization problem tends to have a high computational overhead, e.g., due to costly projections or barrier calculations \citep{pauli2021training,pauli2022neural}. Alternatively, \citet{revay2020lipschitz,revay2023recurrent,wang2023direct,pauli2023lipschitz-syn} construct so-called \emph{direct} parameterizations that map free variables to the network parameters in such a way that LMIs are satisfied by design, which in turn ensures Lipschitz boundedness for equilibrium networks \citep{revay2020lipschitz}, recurrent equilibrium networks \citep{revay2023recurrent}, deep neural networks \citep{wang2023direct}, and 1-D convolutional neural networks \citep{pauli2023lipschitz-syn}, respectively. The major advantage of direct parameterization is that it poses the training of robust NNs as an unconstrained optimization problem, which can be tackled with existing gradient-based solvers. In this work, we develop a new direct parameterization for Lipschitz-bounded CNNs. 

Lipschitz-bounded convolutions can be parameterized in the Fourier domain, as in the Orthogonal and Sandwich layers in \citep{trockman2021orthogonalizing, wang2023direct}. However, this adds computational overhead of performing nonlinear operations or alternatively full-image size kernels leading to longer computation times for inference. In contrast, in this paper, we use a Roesser-type 2-D systems representation \citep{roesser1975discrete} for convolutions \citep{gramlich2023convolutional,pauli2024state}. This in turn allows us to directly parameterize the kernel entries of the convolutional layers, hence we denote our method as \textit{LipKernel}. This direct kernel parameterization has the advantage that at inference time we can evaluate convolutional layers of CNNs in standard form, which can be advantageous for system verification and validation processes. It also results in significantly reduced compute requirements for inference compared to Fourier representations, making our approach especially suitable for real-time control systems, e.g. in robotics, autonomous vehicles, or automation. Furthermore, LipKernel offers additional flexibility in the architecture choice, enabling pooling layers and any kind of zero-padding to be easily incorporated. 

Our work extends and generalizes the results in \citep{pauli2023lipschitz-syn} for parameterizing Lipschitz-bounded 1-D CNNs. In this work, we frame our method in a more general way than in \citep{pauli2023lipschitz-syn} such that \textit{any} dissipative layer can be included in the Lipschitz-bounded NN and we discuss a generalized Lipschitz property. In doing so, we include the concept of dissipativity into the synthesis problem, which we previously only discussed for analysis problems \citep{pauli2023lipschitz-ana}. We then focus the detailed derivations of our layer-wise parameterizations on the important class of CNNs. One main difference to \citep{pauli2023lipschitz-syn} and a key technical contribution of this work is the non-trivial extension from 1-D to 2-D CNNs, also considering a more general form including stride and dilation. Our parameterization relies on the Cayley transform, as also used in  \citep{trockman2021orthogonalizing,wang2023direct}. Additionally, we newly construct solutions for a specific 2-D Lyapunov equation for 2-D finite impulse response (FIR) filters, which we then leverage in our parameterization. 

The remainder of the paper is organized as follows. In Section \ref{sec:problem}, we state the problem and introduce feedforward NNs and all considered layer types. Section~\ref{sec:prelims} is concerned with the dissipation analysis problem used for Lipschitz constant estimation via semidefinite programming, followed by Section~\ref{sec:parameterization}, wherein we discuss our main results, namely the layer-wise parameterization of Lipschitz-bounded CNNs via dissipative layers. Finally, in Section~\ref{sec:experiments}, we demonstrate the advantage in run-time at inference time and compare our approach to other methods used to design Lipschitz-bounded CNNs.

\textbf{Notation:} By $I_n$, we mean the identity matrix of dimension $n$. We drop $n$ if the dimension is clear from context. By $\bbS^n$ ($\bbS_{++}^n$), we denote (positive definite) symmetric matrices and by $\bbD^n$ ($\bbD_{++}^n$) we mean (positive definite) diagonal matrices of dimension $n$, respectively. By $\chol(\cdot)$ we mean the Cholesky decomposition $L=\chol(A)$ of matrix $A = L^\top L$. 
Within our paper, we study CNNs processing image signals. For this purpose, we understand an image as a sequence $(u\lsb i_1,\ldots,i_d\rsb )$ with free variables $i_1,\ldots,i_d \in \bbN_0$. In this sequence, $u\lsb i_1,\ldots,i_d\rsb$ is an element of $\bbR^c$, where $c$ is called the channel dimension (e.g., $c = 3$ for RGB images). The \emph{signal dimension} $d$ will usually be $d=1$ for time signals (one time dimension) and $d=2$ for images (two spatial dimensions). The space of such signals/sequences is denoted by $\signals{d}{c} := \{ u: \bbN_0^d \to \bbR^c\}$. Images are sequences in $\signals{d}{c}$ with a finite square as support. For convenience, we sometimes use multi-index notation for signals, i.\,e., we denote $u\lsb i_1,\ldots,i_d\rsb$ as $u\lsb\bi\rsb$ for $\bi \in \bbN_0^d$. For these multi-indices, we use the notation $\bi + \bj$ for $(i_1+j_1,\ldots,i_d+j_d)$ and $\bi\bj = (i_1j_1,\ldots,i_dj_d)$. 
We further denote by $[\bi,\bj] = \{ \bt \in \bbN_0^d \mid \bi \leq \bt \leq \bj \}$ the \emph{interval} of all multi-indices between $\bi,\bj \in \bbN_0^d$ and by $|[\bi,\bj]|$ the number of elements in this set and the interval $[\bi,\bj[ = [\bi,\bj-1]$. By $\|\cdot\|$ we mean the $\ell_2$ norm of a signal, which reduces to the Euclidean norm for vectors, i.e., signals of length $1$, and $\Vert u \Vert_X^2 \coloneqq \sum_{\bi=0}^{\bN-1} u[\bi]^\top X u[\bi] $ is a signal norm weighted by some positive semidefinite matrix $X\succeq 0$ of signals of length $\bN$. 

\section{Problem Statement and Neural Networks}\label{sec:problem}
In this work, we consider deep NNs as a composition of $l$ layers
\begin{align}\label{eq:NN}
    \mathrm{NN}_\theta = \calL_l \circ \calL_{l-1} \circ \cdots \circ \calL_2 \circ \calL_1.
\end{align}
The individual layers $\calL_k$, $k=1,\dots,l$ encompass many different layer types, including but not limited to convolutional layers, fully connected layers, activation functions, and pooling layers. Some of these layers, e.g., fully connected and convolutional layers, are characterized by parameters $\theta_k, k=1,\dots,l,$ that are learned during training. In contrast, other layers such as activation functions and pooling layers do not contain tuning parameters.

The mapping from the input to the NN $u_1$ to its output $y_l$ is recursively given by
\begin{align}
    y_k &= \calL_k(u_k) & u_{k+1} &= y_k & k = 1,\ldots , l, \label{eq:inputOutputNN}
\end{align}
where $u_k \in \calD_{k-1}$ and $y_k \in \calD_k$ are the input and the output of the $k$-th layer and $\calD_{k-1}$ and $\calD_k$ the input and output domains. In \eqref{eq:inputOutputNN}, we assume that the output space of the $k$-th layer coincides with the input space of the $k+1$-th layer, which is ensured by reshaping operations at the transition between different layer types.

The goal of this work is to synthesize Lipschitz-bounded NNs, i.e., NNs of the form \eqref{eq:NN}, \eqref{eq:inputOutputNN} that satisfy the generalized Lipschitz condition
    \begin{equation}\label{eq:lipschitz}
        \Vert \NN_\theta(u^a)-\NN_\theta(u^b)\Vert_Q\leq \Vert u^a-u^b \Vert_R\quad \forall u^a,u^b\in\bbR^n.
    \end{equation}
for given $Q\in\bbS_{++}^{c_l}$, $R\in\bbS_{++}^{c_0}$ with input and output dimension $c_0$ and $c_l$ by design, and we call such NNs  $(Q,R)$-Lipschitz NNs. Choosing $Q=I$ and $R=\rho^2I$, we recover the standard Lipschitz inequality
    \begin{equation}\label{eq:lipschitz_2}
        \Vert \NN_\theta(u^a)-\NN_\theta(u^b)\Vert\leq \rho \Vert u^a-u^b \Vert \quad \forall u^a,u^b\in\bbR^n
    \end{equation}
with Lipschitz constant $\rho$. However, through our choice of $Q$ and $R$, we can incorporate domain knowledge and enforce tailored dissipativity-based robustness measures with respect to expected or worst-case input perturbations, including direction information. 
In this sense, we can view $\tilde{u}^\top \tilde{u}= u^\top X_0 u = u^\top L_0^\top L_0 u$, i.e, $\tilde{u}=L_0u$ as a rescaling of the expected input perturbation set to the unit ball. We can also weigh changes in the output of different classes (= entries of the output vector) differently according to their importance. \citet{singla2022improved} suggest a last layer normalization, which corresponds to rescaling every row of $W_l$ such that all rows have norm $1$, i.e., using $L_lW_l$ instead of $W_l$ with some diagonal scaling matrix $L_l$. We can interpret this scaling matrix $L_l^\top L_l$ as the output gain $X_l=L_l^\top L_l$.

\begin{rem}
   To parameterize input incrementally passive (i.e. strongly monotone) NNs, i.e., NNs with mapping $f:\calD_0\to\calD_l$ with equal input and output dimension $c_0=c_l$, which satisfy
    \begin{equation*}
        (u^a-u^b)^\top (f(u^a)-f(u^b)) \geq 0 \quad \forall u^a,u^b\in\bbR^n,
    \end{equation*}
    one can include a residual path $f(u) = \mu u + \NN_\theta(u)$ 
    with $\mu >0$ and constrain $\NN_\theta$ to have a Lipschitz bound $< \mu$, see e.g. \citep{chen2019residual,behrmann2019invertible,perugachi2021invertible,wang2024monotone}.
    Recurrent equilibrium networks extend this to dynamic models with more general incremental $(QSR)$-dissipativity \citep{revay2023recurrent}.
\end{rem}

\subsection{Problem statement}
To train a $(Q,R)$-Lipschitz NN, one can include constraints on the parameters $\theta=(\theta_k)_{k=1}^l$ in the underlying optimization problem to ensure the desired Lipschitz property. This yields a constrained optimization problem
\begin{equation}\label{eq:opt_con}
    \min_{\theta}\frac{1}{m}\sum_{i=1}^m \calJ(f(x^{(i)},\theta),y^{(i)}) \quad \text{s.\,t.}\quad \theta\in \Theta(Q,R),
\end{equation}
wherein $(x^{(i)},y^{(i)})_{i=1}^m$ are the training data, $\calJ(\cdot,\cdot)$ is the training objective, e.g., the mean squared error or the cross-entropy loss, and $\Theta(Q,R)$ is the set of parameters
\begin{equation*}
    \Theta(Q,R) \coloneqq \{\theta \mid \eqref{eq:lipschitz}~\text{for given}~ Q\in\bbS_{++}^{c_l},R\in\bbS_{++}^{c_0} \}.
\end{equation*}
It is, however, an NP-hard problem to find constraints $\theta\in \Theta(Q,R)$ that characterize all $(Q,R)$-Lipschitz NNs, and conventional characterizations by NNs with norm constraints are conservative. This motivates us to derive LMI constraints that hold for a large subset of $(Q,R)$-Lipschitz NNs.
\begin{problem}
    \label{problem-ana}
    Given some matrices $Q\in\bbS_{++}^{c_l}$ and $R\in\bbS_{++}^{c_0}$, identify a subset of the parameter set $\Theta(Q,R)$, described by LMIs, such that for all $\NN_\theta$ with weights satisfying these LMIs, the generalized Lipschitz inequality \eqref{eq:lipschitz} holds.
\end{problem}

To avoid projections or barrier functions to solve such a constrained optimization problem \eqref{eq:opt_con} as utilized in \citep{pauli2021training,pauli2022neural}, we instead use a direct parameterization $\phi\mapsto\theta$. This means that we parameterize $\theta$ in such a way that the underlying LMI constraints are satisfied by design. We can then train the Lipschitz-bounded NN by solving an unconstrained optimization problem
\begin{equation}\label{eq:opt}
    \min_{\phi}\frac{1}{m}\sum_{i=1}^m \calJ(f(x^{(i)},\theta(\phi)),y^{(i)}),
\end{equation}
using common first-order optimizers. In doing so, we optimize over the unconstrained variables $\phi\in\bbR^N$, while the parameterization ensures that the NN satisfies the LMI constraints, which in turn imply \eqref{eq:lipschitz}. This leads us to Problem \ref{problem1} of finding a direct parameterization $\phi\mapsto\theta$.

\begin{problem}
    \label{problem1}
    Given some matrices $Q\in\bbS_{++}^{c_l}$ and $R\in\bbS_{++}^{c_0}$, construct a parameterization $\phi\mapsto\theta$ for $\NN_\theta$ such that $\NN_\theta$ satisfies the generalized Lipschitz inequality \eqref{eq:lipschitz}.
\end{problem}

\subsection{CNN architecture}\label{sec:layer_types}
This subsection defines all relevant layer types for the parameterization of $(Q,R)$-Lipschitz CNNs. An example architecture of \eqref{eq:NN} is a classifying CNN
\begin{align*}
    \resizebox{0.47\textwidth}{!}{%
    $\mathrm{CNN}_\theta = \calF_{l} \circ \sigma \circ \cdots \circ \sigma \circ \calF_{p+1} \circ \calR  \circ \calP \circ \sigma \circ \calC_{p} \circ \cdots \circ \calP \circ \sigma \circ \calC_1$,}
\end{align*}
with $\calL_k\in\{\calF,\calC,\calP,\sigma,\calR\}$, wherein $\calF$ denote fully connected layers, $\calC$ denote convolutional layers, $\calP$ denote pooling layers, $\sigma$ denote activation functions, and $\calR$ denote reshape layers. In what follows, we formally define these layers.

\paragraph*{Convolutional layer}\label{sec:layer_def_conv}
A convolutional layer with layer index $k$
\begin{align*}
    \calC_k:\quad y_k = K_k \ast u_k + b_k,
\end{align*}
where $\ast$ denotes the convolution operator, is characterized by a convolution kernel $K_k$, and a bias term $b_k$, i.e., $\theta_k=(K_k,b_k)$. The input signal $u_k$ may be a 1-D signal, such as a time series, a 2-D signal, such as an image, or even a d-D signal.

For general dimensions $d$, a convolutional layer maps from $\calD_{k-1} = \signals{d}{c_{k-1}}$ to $\calD_k = \signals{d}{c_k}$. Using a compact multi-index notation, we write
\begin{align}
    y_{k}[\bi] = b_k + \sum_{\bt\in[0,\br_k]} K_k[\bt] u_k[\bs_k\bi - \bt], \label{eq:conv_dD}
\end{align}
where $u_k[\bs_k\bi - \bt]$ is set to zero if $\bs_k\bi - \bt$ is not in the domain of $u_k\lsb\cdot\rsb$ to account for possible zero-padding. The convolution kernel $K_k\lsb\bt\rsb\in\bbR^{c_{k} \times c_{k-1}}$ for $0\leq\bt\leq\br_k$ and the bias $b_k \in \bbR^{c_{k}}$ characterize the convolutional layer, and the stride $\bs_k$ determines by how many propagation steps the kernel is shifted along the respective propagation dimension.

\begin{remark}
    We use the causal representation \eqref{eq:conv_dD} for convolutional layers, i.e., $y_k[\bi]$ is evaluated based on past information. By shifting the kernel, we can retrieve an acausal representation 
    \begin{align}
        y_{k}[\bi] = b_k + \sum_{\bt\in[-\br_k/2, \br_k/2]} K_k[\bt] u_k[\bs_k\bi - \bt] \label{eq:conv_dD_shifted}
    \end{align}
    with symmetric kernels. The outputs of \eqref{eq:conv_dD} and \eqref{eq:conv_dD_shifted} are shifted accordingly.
\end{remark}

In this work, we focus on 1-D and 2-D CNNs whose main building blocks are 1-D and 2-D convolutional layers, respectively. A 1-D convolutional layer is a special case of \eqref{eq:conv_dD} with $d=1$, given by
\begin{align}
    y_{k}[i] = b_k + \sum_{t=0}^{r_k} K_k[t] u_k[s_ki - t]. \label{eq:conv_1D}
\end{align}

Furthermore, a 2-D convolutional layer ($d=2$) reads
\begin{equation}
    \begin{split}
        &y_{k}[i_1,i_2]  =  b_k + \\
        & \sum_{t_1\in[0,r_{k}^1]}\sum_{t_2\in[0,r_{k}^2]} K_k[t_1,t_2] u_k[s_{k}^1i_1 - t_1,s_{k}^2i_2-t_2],  \label{eq:conv_2D}
    \end{split}
\end{equation}
where $\br_k=(r_k^1,r_k^2)$ is the kernel size and $\bs_k=(s_k^1,s_k^2)$ the stride.

\paragraph*{Fully connected layer}
Fully connected layers  $\calF_k$ are static mappings with domain space $\calD_{k-1} = \bbR^{c_{k-1}}$ and image space $\calD_k = \bbR^{c_k}$ with possibly large channel dimensions $c_{k-1}, c_k$ (= neurons in the hidden layers). We define a fully connected layer as
\begin{align}\label{eq:fc}
    \calF_k : \bbR^{c_{k-1}} \to \bbR^{c_{k}}, u_k \mapsto y_k = b_k + W_k u_k
\end{align}
with bias $b_k \in \bbR^{c_{k}}$ and weight matrix $W_k \in \bbR^{c_{k} \times c_{k-1}}$, i.e., $\theta_k=(W_k,b_k)$.

\paragraph*{Activation function}
Convolutional and fully connected layers are affine layers that are typically followed by a nonlinear activation function. These activation functions $\sigma$ can be applied to both domain spaces $\calD_{k-1} = \bbR^{c_{k-1}}$ or $\calD_{k-1} = \signals{d}{c_{k-1}}$, but they necessitate $\calD_{k} \cong \calD_{k-1}$. Activation functions $\sigma : \bbR \to \bbR$ are applied element-wise to the input $u_k \in \calD_{k-1}$. For vector inputs $u_k \in \bbR^{c_k}$, $\sigma$ is then defined as
\begin{align*}
    \sigma: \bbR^{c_k} \to \bbR^{c_k}, u_{k} \mapsto y_{k} = 
    \begin{bmatrix}
        \sigma(u_{k1}) & \cdots & \sigma(u_{kc_k})
    \end{bmatrix}^\top.
\end{align*}
Furthermore, we lift the scalar activation function to signal spaces $\signals{d_{k-1}}{c_{k-1}}$, which results in $\sigma : \signals{d}{c_k} \to \signals{d}{c_k}$,
\begin{align*}
    (u_k[\bi])_{\bi \in \bbN_0^{d}} \mapsto (y_k[\bi])_{\bi \in \bbN_0^{d}} = (\sigma(u_k[\bi]))_{\bi \in \bbN_0^{d}}.
\end{align*}

\paragraph*{Pooling layer}
A convolutional layer may be followed by an additional pooling layer $\calP$, i.\,e., a downsampling operation from $\calD_{k-1} = \signals{d}{c_{k}}$ to $\calD_k = \signals{d}{c_k}$ that is applied channel-wise. Pooling layers generate a single output signal entry $y[\bi]$ from the input signal batch $(u_k\lsb\bs_k\bi - \bt\rsb \mid \bt \in [0,\br_k])$. The two most common pooling layers are average pooling $\calP^{\mathrm{av}} : \signals{d}{c_k} \to \signals{d}{c_k}$,
\begin{align*}
    y_k\lsb\bi\rsb \coloneqq& \mathrm{mean} (u_k\lsb\bs_k\bi - \bt\rsb \mid \bt \in [0,\br_k])\\
    =& \frac{1}{|[0,\br_k]|} \sum_{0 \leq \bt \leq \br_k} u_k\lsb\bs_k\bi - \bt\rsb
\end{align*}
and maximum pooling $\calP^{\mathrm{max}} : \signals{d}{c_k} \to \signals{d}{c_k}$,
\begin{align*}
    y_k\lsb\bi\rsb &\coloneqq \max (u_k\lsb\bs_k\bi - \bt\rsb \mid \bt \in [0,\br_k]),
\end{align*}
where the maximum is applied channel-wise. Other than \citep{pauli2023lipschitz-syn}, we allow for all $\bs_k\leq\br_k$, meaning that the kernel size is either larger than the shift or the same.

\paragraph*{Reshape operation}
An NN \eqref{eq:NN} may include signal processing layers such as convolutional layers and layers that operate on vector spaces, such as fully connected layers. At the transition of such different layer types, we require a reshape operation
\begin{align*}
    \calR & : \signals{d}{c} \to \bbR^{|\lsb 0,\bN\rsb| \cdot c},~(y[\bi] )_{\bi\in\bbN_0^{d}} \mapsto \calR(y),
\end{align*}
that flattens a signal into a vector 
\begin{equation*}
\calR(y_{k}) =
    \begin{bmatrix}
        y_{k}[0]^\top & \dots & y_{k}[\bN_k]^\top
    \end{bmatrix}^\top = u_{k+1}
\end{equation*}
or vice versa, a vector into a signal. 

\section{Dissipation Analysis of Neural Networks}\label{sec:prelims}
Prior to presenting the direct parameterization of Lipschitz-bounded CNNs in Section \ref{sec:parameterization}, we address Problem \ref{problem-ana} of characterizing $(Q,R)$-Lipschitz NNs by LMIs in this section. In Subsection~\ref{sec:dissipative_layers}, we first discuss incrementally dissipative layers, followed by Subsection~\ref{sec:statespace}, wherein we introduce state space representations of the Roesser type for convolutions. In Subsection~\ref{sec:QCs_slope_restriction}, we then state quadratic constraints for slope-restricted nonlinearities and discuss layer-wise LMIs that certify dissipativity for the layers and \eqref{eq:lipschitz} for the CNN in Subsection~\ref{sec:layerwise_LMIs}. 
Throughout this section, where possible, we drop layer indices for improved readability. The subscript ``$-$'' refers to the previous layer; for example, $c$ is short for $c_k$, and $c_-$ is short for $c_{k-1}$.

\subsection{Incrementally dissipative layers}\label{sec:dissipative_layers}
To design Lipschitz-bounded NNs, previous works have parameterized the individual layers of a CNN to be orthogonal or to have constrained spectral norms \citep{anil2019sorting,trockman2021orthogonalizing,prach2022almost}, thereby ensuring that they are 1-Lipschitz. An upper bound on the Lipschitz constant of the end-to-end mapping is then given by
\begin{equation*}
    \rho = \prod_{k=1}^l \mathrm{Lip}(\calL_k),
\end{equation*}
where $\mathrm{Lip}(\calL_k)$ are upper Lipschitz bounds for the $k=1,\dots,l$ layers. In contrast, our approach does not constrain the individual layers to be orthogonal but instead requires them to be incrementally dissipative \citep{byrnes1994losslessness}, thus providing more degrees of freedom while also guaranteeing a Lipschitz upper bound for the end-to-end mapping.

\begin{definition}[Incremental dissipativity]\label{def:dissipativity}
    A layer $\calL_k: \calD_{k-1}\to\calD_{k}: u_k\mapsto y_k$ is incrementally dissipative with respect to a supply rate $s(\Delta u_k[\bi], \Delta y_k[\bi])$ if for all inputs $u_k^a,u_k^b\in\calD_{k-1}$ and all $\bN_k\in\bbN_0^d$
    \begin{equation} \label{eq:dissipativity}
        \sum_{\bi\in[0,\bN_k-1]} s(\Delta u_k[\bi], \Delta y_k[\bi]) \geq 0,
    \end{equation}
    where $\Delta u_k[\bi] = u_k^a[\bi]-u_k^b[\bi]$, $\Delta y_k[\bi] = y_k^a[\bi]-y_k^b[\bi]$.
\end{definition}
In particular, we design layers to be incrementally dissipative with respect to the supply
\begin{equation}\label{eq:supply}
    \sum_{\bi=0}^{\bN_k-1} s(\Delta u_k[\bi], \Delta y_k[\bi])=\Vert\Delta u_k \Vert_{X_{k-1}}^2 - \Vert \Delta y_k \Vert_{X_k}^2,
\end{equation}
which can be viewed as a generalized incremental gain/Lipschitz property with directional gain matrices $X_k\in\bbS_{++}^{c}$ and $X_{k-1}\in\bbS_{++}^{c_{k-1}}$. Note that \eqref{eq:dissipativity} includes vector inputs, in which case $\bN_k=0$. Furthermore, our approach naturally extends beyond the main layer types of CNNs presented in Section \ref{sec:layer_types} as \emph{any} function that is incrementally dissipative with respect to \eqref{eq:supply} can be included as a layer $\calL_k$ into a $(Q,R)$-Lipschitz feedforward NN \eqref{eq:NN}.

\begin{figure}[h]
\centering
    \resizebox{0.48\textwidth}{!}{%
    \begin{tikzpicture}
      \path[thick,->] (-2,0) edge (2,0);
      \path[thick,->] (0,-2) edge (0,2);
      \sgrid{(0,0)};
      \draw[thick] (0,0) circle (1);
      \fill[mycolor1] (0,0) circle (1);
      \draw[thick] (1,-0.2) edge (1,0.2);
      \draw (1.2,0.3) node {1};
      \draw[thick] (-0.2,1) edge (0.2,1);
      \draw (0.3,-3.8) node {1};
      \draw (2,-0.3) node {$u_1^{(1)}$};  
      \draw (0.5,1.9) node {$u_1^{(2)}$};

      \path[thick,->, bend left=22.5] (1.8,1) edge (3.8,1);
      \draw (2.8,1.5) node {$\calL_1$};

      \path[thick,->] (4,0) edge (8.8,0);
      \path[thick,->] (6.4,-2.4) edge (6.4,2.4);
      \lgrid{(6.4,0)};
      \draw (8.3,-0.3) node {$y_1^{(1)}=u_2^{(1)}$};   
      \draw (7.4,2.3) node {$y_1^{(2)}=u_2^{(2)}$}; 
      \draw[thick,rotate around={-45:(6.4,0)}] (6.4,0) ellipse (2 and 0.5);
      \fill[mycolor1, rotate around={-45:(6.4,0)}] (6.4,0) ellipse (0.9 and 0.4);
      (6.4,0) edge (8.4,0);
      (6.4,0) edge (8.31,0);
      edge (6.4,0.5);

      \path[thick,->, bend left=22.5] (9,1) edge (11,1);
      \draw (10,1.5) node {$\calL_2$};

      \path[thick,->] (11,0) edge (15,0);
      \path[thick,->] (13,-2) edge (13,2);
      \sgrid{(13,0)};
      \draw[thick] (13,0) circle (1);
      \fill[mycolor1] (13.1,-0.1) circle (0.55);
      \fill[mycolor1] (13.1,0.15) circle (0.55);
      \fill[mycolor1] (12.9,-0.15) circle (0.55);
      \fill[mycolor1] (12.9,0.1) circle (0.55);
      \draw (15,-0.3) node {$y_2^{(1)}$};  
      \draw (13.5,1.9) node {$y_2^{(2)}$};      
      \path[thick,->] (-2,-5) edge (2,-5);
      \path[thick,->] (0,-7) edge (0,-3);
      \sgrid{(0,-5)};
      \draw[thick] (0,-5) circle (1);
      \fill[mycolor1] (0,-5) circle (1);
      \draw[thick] (1,-5.2) edge (1,-4.8);
      \draw (1.2,-4.7) node {1};
      \draw[thick] (-0.2,-4) edge (0.2,-4);
      \draw (0.3,1.2) node {1};
      \draw (2,-5.3) node {$u_1^{(1)}$};  
      \draw (0.5,-3.1) node {$u_1^{(2)}$};

      \path[thick,->, bend left=22.5] (1.8,-4) edge (3.8,-4);
      \draw (2.8,-3.5) node {$\calL_1$};

      \path[thick,->] (4.4,-5) edge (8.4,-5);
      \path[thick,->] (6.4,-7) edge (6.4,-3);
      \sgrid{(6.4,-5)};
      \draw (8.3,-5.3) node {$y_1^{(1)}=u_2^{(1)}$};   
      \draw (7.4,-3.2) node {$y_1^{(2)}=u_2^{(2)}$};   
      \draw[thick] (6.4,-5) circle (0.94);
      \fill[mycolor1, rotate around={-45:(6.4,-5)}] (6.4,-5) ellipse (0.9 and 0.4);

      \path[thick,->, bend left=22.5] (8.5,-4) edge (10.5,-4);
      \draw (9.5,-3.5) node {$\calL_2$};

      \path[thick,->] (10.8,-5) edge (15.2,-5);
      \path[thick,->] (13,-7.2) edge (13,-2.8);
      \lgrid{(13,-5)};
      \draw[thick] (13,-5) circle (1.882);
      \fill[mycolor1] (13.1,-5.1) circle (0.55);
      \fill[mycolor1] (13.1,-4.85) circle (0.55);
      \fill[mycolor1] (12.9,-5.15) circle (0.55);
      \fill[mycolor1] (12.9,-4.9) circle (0.55);
      \draw (15.4,-5.3) node {$y_2^{(1)}$};  
      \draw (13.5,-2.7) node {$y_2^{(2)}$};      
    \end{tikzpicture}}
    \caption{For $\calF_2\circ\sigma\circ\calF_1$ with $c_0=c_1=c_2=2$, we compare over-approximations for reachability sets shown in blue, we obtain ellipsoidal sets using incrementally dissipative layers (top) and circles using Lipschitz bounds (bottom).}
    \label{fig:directional_gains}
\end{figure}
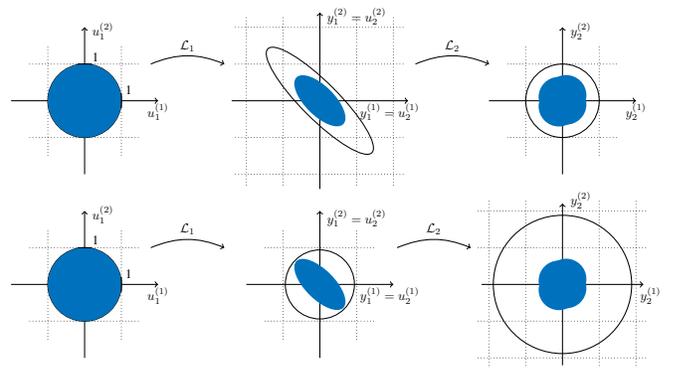

Fig.~\ref{fig:directional_gains} illustrates the additional degrees of freedom gained by considering incrementally dissipative layers rather than Lipschitz-bounded layers considering a fully connected, two-layer NN with an input, hidden and output dimension of two. For input increments taken from a unit ball, we find an ellipse $\calE=\{y_1^a,y_1^b\in\bbR^{c} \mid (y_1^a-y_1^b)^\top X_1 (y_1^a-y_1^b)\leq 1 \}$ that over-approximates the reachability set in the hidden layer or a Lipschitz bound that characterizes a circle for this purpose, respectively. 
The third and final reachability set is a circle scaled by a Lipschitz upper bound. This set is created using either the ellipse characterized by $X_1$ or the circle of the previous layer as inputs. These input sets were chosen such that the final reachability set is minimized. We see a clear difference between the two approaches. Using ellipses obtained by the incremental dissipativity approach, the Lipschitz bound is $1$, using circles as in the Lipschitz approach, it is almost 2, illustrating that we can find tighter Lipschitz upper bounds.

\begin{figure}
    \input{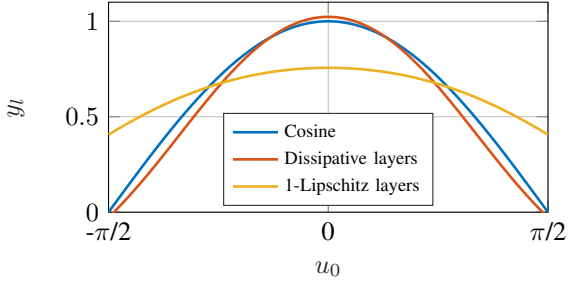}
    \caption{Fit of a cosine function using NN from LMI-based parameterization with dissipative layers and an NN with 1-Lipschitz layers with weights which are constrained to have spectral norm $1$.}
    \label{fig:cosine_motivaitional}
\end{figure}

In NN design this translates into a higher model expressivity. To illustrate this, we consider the regression problem of fitting the cosine function between $\frac{-\pi}{2}$ and $\frac{\pi}{2}$, also utilized in \citep{pauli2021training}. We use a simple NN architecture $\calF_2\circ\sigma\circ\calF_1$ with $c_0=c_2=1$, $c_1=2$, and activation function $\mathrm{tanh}$ and construct weights and biases as
\begin{equation*}
 W_1 = \begin{bmatrix}
     -1\\
     -1
 \end{bmatrix},~
 b_1 = \begin{bmatrix}
     -1\\
     1
 \end{bmatrix},~
 W_2 = \begin{bmatrix}
    -1 & 1
 \end{bmatrix},~
 b_2 = -0.5.
\end{equation*}
Both layers are incrementally dissipative and the weights satisfy LMI constraints that verify that the end-to-end mapping is guaranteed to be $1$-Lipschitz, as will be discussed in detail in Subsection \ref{sec:layerwise_LMIs}. Clearly the weights have spectral norms of $\sqrt{2}$, meaning that the individual layers are \emph{not} $1$-Lipschitz. Next, we construct an NN to best fit the cosine with $1$-Lipschitz weights obtained by spectral normalization as
\begin{align*}
 &W_1 = \frac{1}{\sqrt{2}}\begin{bmatrix}
     -1\\
     -1
 \end{bmatrix},~
 b_1 = \begin{bmatrix}
     -\sqrt{2}\\
     \sqrt{2}
 \end{bmatrix},\\
 &W_2 = \frac{1}{\sqrt{2}}\begin{bmatrix}
    -1 & 1
 \end{bmatrix},~
 b_2 = -0.5.
\end{align*}
In Fig.~\ref{fig:cosine_motivaitional}, we see the resulting fit of the two functions and a clear advantage in expressiveness of the LMI-based parameterization using dissipative layers.

\subsection{State space representation for convolutions}\label{sec:statespace}

In kernel representation \eqref{eq:conv_dD}, convolutions are not amenable to SDP-based methods. However, they can be reformulated as fully connected layers via Toeplitz matrices \citep{Goodfellow-et-al-2016,pauli2022neural,aquino2022robustness}, parameterized in the Fourier domain \citep{wang2023direct}, or represented in state space \citep{gramlich2023convolutional,pauli2024state}. In what follows, we present compact state space representations for 1-D and 2-D convolutions derived in \citep{pauli2024state}, which allow the direct parameterization of the kernel parameters.

\paragraph*{1-D convolutions}
A possible discrete-time state space representation of a convolutional layer \eqref{eq:conv_1D} with stride $s=1$ is given by
\begin{equation}\label{eq:ss_conv}
    \begin{split}
        x[i+1] &= \bA x[i] + \bB u[i],\\ 
        v[i]   &= \bC x[i] + \bD u[i] + b,
    \end{split}
\end{equation}
with initial condition $x[0]=0$, where
\begin{equation}\label{eq:ABCD}
\begin{aligned}
    &\bA =
    \begin{bmatrix}
        0 & I_{c_-(r-1)} \\
        0 &   0 \\
    \end{bmatrix},
    &\bB &= \begin{bmatrix}
        0 \\
        I_{c_-}
    \end{bmatrix},\\
    &\bC =
    \begin{bmatrix}
        K[r] & \cdots & K[1]
    \end{bmatrix},
    &\bD &= K[0].
\end{aligned}
\end{equation}
In \eqref{eq:ss_conv}, we denote the state, input, and output by $x[i]\in\bbR^{n}$, $u[i]\in\bbR^{c_-}$ and $y[i]\in\bbR^{c}$, respectively, and the state dimension is $n=rc_{-}$.

It should be noted that in the state space representation \eqref{eq:ABCD}, all parameters $K[i]$, $i=0,\dots,r$ are collected in the matrices $\bC$ and $\bD$, and $\bA$ and $\bB$ account for shifting previous inputs into memory. Accordingly, $\bC$ and $\bD$ are parameterized in Section~\ref{sec:parameterization_conv}.

\paragraph*{2-D convolutions}\label{sec:statespace_2D}
We describe a 2-D convolution using the Roesser model \citep{roesser1975discrete}
\begin{equation}
	\begin{aligned}
		\begin{bmatrix}
			x_1[i+1,j]\\
			x_2[i,j+1]
		\end{bmatrix}
		&=
            \underbrace{\begin{bmatrix}
                A_{11} & A_{12} \\
                A_{21} & A_{22}
            \end{bmatrix}}_{\eqqcolon\bA}
		\begin{bmatrix}
			x_1[i,j]\\
			x_2[i,j]
		\end{bmatrix}
		+
		\underbrace{\begin{bmatrix}
                B_1\\
                B_2
            \end{bmatrix}}_{\eqqcolon\bB}
            u[i,j]\\
		y[i,j]&=
            \underbrace{\begin{bmatrix}
                C_1 & C_2
            \end{bmatrix}}_{\eqqcolon\bC}
		\begin{bmatrix}
			x_1[i,j]\\
			x_2[i,j]
		\end{bmatrix}
		+ \underbrace{D}_{\eqqcolon\bD} u[i,j]+b.
	\end{aligned}
	\label{eq:RoesserSys}
\end{equation}
with states $x_1[i,j]\in\bbR^{n_{1}}$, $x_2[i,j]\in\bbR^{n_{2}}$, input $u[i,j]\in\bbR^{n_{u}}$, and output $y[i,j]\in\bbR^{n_{y}}$. A possible state space representation for the 2-D convolution \eqref{eq:conv_2D} is given by Lemma~\ref{lem:min_real_2D} \citep{pauli2024state}.

\begin{lemma}[Realization of 2-D convolutions]\label{lem:min_real_2D}
    Consider a convolutional layer $\calC : \signals{2}{c_{-}} \to \signals{2}{c}$ with representation \eqref{eq:conv_2D} and stride $s_1=s_2=1$ characterized by the convolution kernel $K$ and the bias $b$. This layer is realized in state space \eqref{eq:RoesserSys} by the matrices
    \begin{equation}\label{eq:ABCD_2D}
    \begin{split}
        &\left[
        \begin{array}{c|c}
            A_{12} & B_1 \\ \hline
            C_2 & D
        \end{array}
        \right]
        \!=\!\left[
        \begin{array}{ccc|c}
            K\lsb r_1,r_2 \rsb & \cdots & K \lsb r_1,1 \rsb & K \lsb r_1,0 \rsb \\
            \vdots & \ddots & \vdots & \vdots \\
            K\lsb 1,r_2 \rsb & \cdots & K \lsb 1,1 \rsb & K \lsb 1,0 \rsb \\ \hline
            K\lsb 0,r_2 \rsb & \cdots & K \lsb 0,1 \rsb & K \lsb 0,0 \rsb \\
        \end{array}
        \right]\!,\\
        &\left[
        \begin{array}{c}
            A_{11} \\ \hline
            C_1
        \end{array}
        \right]
        =
        \left[
        \begin{array}{cc}
            0 & 0\\
            I_{c(r_1-1)} & 0\\ \hline
            0 & I_{c}
        \end{array}
        \right],~A_{21} = 0,\\
        &\left[
        \begin{array}{c|c}
            A_{22} & B_2
        \end{array}
        \right]
        =
        \left[
        \begin{array}{cc|c}
            0 & I_{c_-(r_2-1)} & 0\\
            0 & 0 & I_{c_-}
        \end{array}
        \right],
    \end{split}
    \end{equation}
    where $K[i_1,i_2]\in\bbR^{c\times c_{-}},~i_1 = 0,\dots,r_1,~i_2= 0,\dots,r_2$ with initial conditions $x_1\lsb 0,i_2\rsb = 0$ for all $i_2\in\bbN_0$, and $x_2\lsb i_1,0\rsb = 0$ for all $i_1\in\bbN_0$. The state, input, and output dimensions are $n_{1}=c r_1$, $n_{2}=c_{-}r_2$, $n_u=c_{-}$, $n_y=c$.
\end{lemma}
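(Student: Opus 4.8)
The plan is to verify directly that the input–output map of the Roesser system \eqref{eq:RoesserSys} equipped with the matrices \eqref{eq:ABCD_2D} reproduces the double convolution sum \eqref{eq:conv_2D}. The key structural observation is that $A_{21}=0$, so the system is block-triangular: the state $x_2$ evolves as $x_2[i,j+1]=A_{22}x_2[i,j]+B_2u[i,j]$ independently of $x_1$. This lets me solve for $x_2$ first as a pure function of $u$, then treat $x_1$ as a second delay line driven by $x_2$ and $u$, and finally assemble the output. All recursions are well-posed thanks to this triangular structure together with the stated initial conditions $x_1[0,i_2]=0$ and $x_2[i_1,0]=0$, and I extend $u$ by zero outside its support to encode the zero-padding of \eqref{eq:conv_2D}.

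First I would partition $x_2$ into $r_2$ blocks of size $c_-$. The shift-and-inject form of $[A_{22}\;B_2]$ gives $x_2^{(m)}[i,j+1]=x_2^{(m+1)}[i,j]$ for $m<r_2$ and $x_2^{(r_2)}[i,j+1]=u[i,j]$. Solving this downward from $x_2[i_1,0]=0$ by induction on $j$ yields $x_2^{(m)}[i,j]=u[i,\,j-r_2+m-1]$; equivalently, block $x_2^{(r_2-t_2+1)}$ stores the delayed input $u[i,j-t_2]$ for $t_2=1,\dots,r_2$. Substituting this into $A_{12}x_2+B_1u$ and reading the kernel layout in \eqref{eq:ABCD_2D} (block $(a,b)$ of $A_{12}$ is $K[r_1-a+1,\,r_2-b+1]$ and the $a$-th block of $B_1$ is $K[r_1-a+1,0]$), the $a$-th block collapses, after the reindexing $t_2=r_2-b+1$, to the partial convolution $w_a[i,j]:=\sum_{t_2=0}^{r_2}K[r_1-a+1,t_2]\,u[i,j-t_2]$.

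Next I would solve the vertical recursion. Partitioning $x_1$ into $r_1$ blocks of size $c$, the form of $[A_{11};C_1]$ gives $x_1^{(1)}[i+1,j]=w_1[i,j]$ and $x_1^{(a)}[i+1,j]=x_1^{(a-1)}[i,j]+w_a[i,j]$ for $a\ge2$. Unrolling from $x_1[0,i_2]=0$ by induction on the block index gives $x_1^{(a)}[i,j]=\sum_{p=1}^{a}w_p[i-a+p-1,\,j]$, so the selected output block is $C_1x_1[i,j]=x_1^{(r_1)}[i,j]=\sum_{t_1=1}^{r_1}\sum_{t_2=0}^{r_2}K[t_1,t_2]\,u[i-t_1,j-t_2]$ after the substitution $t_1=r_1-p+1$. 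Finally, reading $C_2$ and $D$ from \eqref{eq:ABCD_2D} (the $b$-th block of $C_2$ is $K[0,r_2-b+1]$ and $D=K[0,0]$) contributes $C_2x_2[i,j]+Du[i,j]=\sum_{t_2=0}^{r_2}K[0,t_2]\,u[i,j-t_2]$, which is exactly the missing $t_1=0$ row. Adding the three output terms and the bias recovers $y[i,j]=b+\sum_{t_1=0}^{r_1}\sum_{t_2=0}^{r_2}K[t_1,t_2]\,u[i-t_1,j-t_2]$, i.e.\ \eqref{eq:conv_2D} with $s_1=s_2=1$.

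I expect the only genuine difficulty to be \emph{bookkeeping}: keeping the two block-index reversals $a\mapsto r_1-a+1$ and $b\mapsto r_2-b+1$ consistent between the matrix layout in \eqref{eq:ABCD_2D} and the delay indices produced by the two recursions, and ensuring the boundary/zero-padding conventions line up so that the index ranges in the final double sum start exactly at $t_1=t_2=0$. Well-posedness and the dimension count $n_1=cr_1$, $n_2=c_-r_2$, $n_u=c_-$, $n_y=c$ are immediate once the triangular structure $A_{21}=0$ is exploited, so no fixed-point or invertibility argument is required.
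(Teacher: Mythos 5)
Your verification is correct and complete: the block-triangular structure ($A_{21}=0$) does let you solve the horizontal delay line $x_2$ explicitly, feed it into the vertical accumulator $x_1$, and the two index reversals $a\mapsto r_1-a+1$, $b\mapsto r_2-b+1$ line up exactly as you claim, with the early termination of the $x_1$ unrolling at $x_1[0,\cdot]=0$ consistent with the zero-extension of $u$. The paper itself gives no proof of this lemma (it is imported by citation from \cite{pauli2024state}), and your direct unrolling of the Roesser recursions is precisely the standard argument used there, so there is nothing to flag.
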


\begin{rem}\label{rem:ss_stride}
For stride $\bs>1$, \citep{pauli2024state} constructs state space representations \eqref{eq:ABCD} and \eqref{eq:ABCD_2D}, based on which our parameterization directly extends to strided convolutions.
\end{rem}

\subsection{Slope-restricted activation functions}\label{sec:QCs_slope_restriction}
The nonlinear and large-scale nature of NNs often complicates their analysis. However, over-approximating activation functions with quadratic constraints enables SDP-based Lipschitz estimation and verification \citep{fazlyab2020safety,fazlyab2019efficient}. Common activations like $\mathrm{ReLU}$ and $\tanh$ are slope-restricted on $[0,1]$ and satisfy the following incremental quadratic constraint \citep{fazlyab2019efficient,pauli2021training}\footnote{Note that \citet{fazlyab2019efficient} suggest using full-block multipliers $\Lambda$, however this construction is incorrect as corrected by \citet{pauli2021training}.}.

\begin{lemma}[Slope-restriction ] \label{lem:sloperestriction}
    Suppose  $\sigma:\bbR\to\bbR$ is slope-restricted on $[0,1]$. Then for all $\Lambda\in\bbD_{++}^n$, the vector-valued function $\sigma(u)^\top=\begin{bmatrix}
        \sigma(u_1) & \dots & \sigma(u_n)
    \end{bmatrix}:\bbR^n\to\bbR^n$ satisfies
\begin{equation}\label{eq:slope_restriction}
    \begin{bmatrix}
        \Delta u\\
        \Delta y
    \end{bmatrix}^\top
    \begin{bmatrix}
        0 & \Lambda\\
        \Lambda & -2\Lambda
    \end{bmatrix}
    \begin{bmatrix}
        \Delta u\\
        \Delta y
    \end{bmatrix}\geq 0 \quad \forall u^a,u^b\in \bbR^n,
\end{equation}
where $\Delta u = u^a-u^b$ and $\Delta y = \sigma(u^a)-\sigma(u^b)$.
\end{lemma}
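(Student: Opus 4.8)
The plan is to reduce the $2n$-dimensional vector inequality to $n$ independent scalar inequalities, exploiting two features of the setup: that $\sigma$ acts element-wise, and that the multiplier $\Lambda$ is \emph{diagonal}. First I would translate the hypothesis into a pointwise quadratic inequality. Slope-restriction on $[0,1]$ means $0 \le (\sigma(a)-\sigma(b))/(a-b) \le 1$ for all $a\neq b$. Writing $\delta u = a-b$ and $\delta y = \sigma(a)-\sigma(b)$, a short case analysis (splitting on the sign of $\delta u$) shows this is equivalent to $\delta y\,(\delta u - \delta y) \ge 0$, i.e.\ $\delta u\,\delta y - \delta y^2 \ge 0$, which trivially also covers the case $a=b$.

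Next I would expand the scalar version of the claimed quadratic form. For a single scalar multiplier $\lambda > 0$ one computes
\begin{equation*}
\begin{bmatrix} \delta u \\ \delta y \end{bmatrix}^\top
\begin{bmatrix} 0 & \lambda \\ \lambda & -2\lambda \end{bmatrix}
\begin{bmatrix} \delta u \\ \delta y \end{bmatrix}
= 2\lambda\,(\delta u\,\delta y - \delta y^2),
\end{equation*}
which is nonnegative by the previous step since $\lambda>0$. This settles the $n=1$ case.

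For the general case the key step is the coordinate-wise decomposition. Because $\Lambda = \diag(\lambda_1,\dots,\lambda_n)$ is diagonal, the full quadratic form splits as a sum of $n$ scalar blocks,
\begin{equation*}
\begin{bmatrix} \Delta u \\ \Delta y \end{bmatrix}^\top
\begin{bmatrix} 0 & \Lambda \\ \Lambda & -2\Lambda \end{bmatrix}
\begin{bmatrix} \Delta u \\ \Delta y \end{bmatrix}
= \sum_{i=1}^n 2\lambda_i\,(\Delta u_i\,\Delta y_i - \Delta y_i^2).
\end{equation*}
Crucially, since $\sigma$ is applied element-wise, each $\Delta y_i = \sigma(u_i^a)-\sigma(u_i^b)$ depends only on the $i$-th coordinates $u_i^a,u_i^b$, so each summand is exactly the scalar slope-restriction quantity from the first step and is therefore nonnegative. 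Summing $n$ nonnegative terms yields the claim.

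I expect no serious obstacle, as the argument is elementary; the one point requiring care — and the reason the lemma insists on $\Lambda$ diagonal — is this decoupling step. A non-diagonal $\Lambda$ would produce cross terms of the form $\lambda_{ij}\,\Delta u_i\,\Delta y_j$ with $i\neq j$, and since $\Delta y_j$ carries no information about coordinate $i$, these cannot be controlled by the per-coordinate slope restriction. This is precisely the gap in the alternative construction flagged in the footnote.
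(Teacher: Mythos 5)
Your proof is correct and is essentially the standard argument that the paper defers to its references \cite{fazlyab2019efficient,pauli2021training}: the scalar slope-restriction condition is equivalent to $\Delta y_i(\Delta u_i - \Delta y_i)\ge 0$, and the diagonal structure of $\Lambda$ lets the quadratic form decompose into the sum $\sum_i 2\lambda_i(\Delta u_i\,\Delta y_i - \Delta y_i^2)$ of nonnegative terms. Your closing remark about why non-diagonal multipliers fail also correctly identifies the issue flagged in the paper's footnote.
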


\subsection{Layer-wise LMI conditions}\label{sec:layerwise_LMIs}

Using the quadratic constraints \eqref{eq:slope_restriction} to over-approximate the nonlinear activation functions, \citep{fazlyab2019efficient,gramlich2023convolutional,pauli2023lipschitz-ana,pauli2024lipschitz} formulate SDPs for Lipschitz constant estimation. The works \citep{pauli2023lipschitz-ana,pauli2024lipschitz} derive layer-wise LMI conditions for 1-D and 2-D CNNs, respectively. In this work, we characterize Lipschitz NNs by these LMIs, thus addressing Problem~\ref{problem-ana}. More specifically, the LMIs in \citep{pauli2023lipschitz-ana,pauli2024lipschitz} yield incrementally dissipative layers and, as a result, the end-to-end mapping satisfies \eqref{eq:lipschitz}, as detailed next in Theorem \ref{thm:certification}.

\begin{theorem}\label{thm:certification}
    Let every layer $k=1,\dots,l$ of an NN \eqref{eq:NN}, \eqref{eq:inputOutputNN} be incrementally dissipative with respect to the supply \eqref{eq:supply} and let $X_0=R$, $X_l=Q$. Then the input-output mapping $u_1\mapsto y_l$ satisfies \eqref{eq:lipschitz}. 
\end{theorem}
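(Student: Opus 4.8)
The plan is to establish the end-to-end bound \eqref{eq:lipschitz} by chaining the layer-wise dissipativity inequalities in a telescoping fashion. The structural observation that makes this work is that the supply rate \eqref{eq:supply} of layer $k$ is weighted by $X_{k-1}$ on the input side and by $X_k$ on the output side, so that the output weight of layer $k$ coincides with the input weight of layer $k+1$. Combined with the interconnection $u_{k+1}=y_k$ from \eqref{eq:inputOutputNN}, the successive bounds then cancel pairwise.

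First I would fix two inputs $u_1^a=u^a$, $u_1^b=u^b$ and propagate both through the network, writing $\Delta u_k$ and $\Delta y_k$ for the increments at layer $k$. Applying Definition~\ref{def:dissipativity} to layer $k$ with the supply \eqref{eq:supply}, incremental dissipativity is precisely the statement that $\Vert\Delta u_k\Vert_{X_{k-1}}^2-\Vert\Delta y_k\Vert_{X_k}^2\geq 0$, i.e.
\begin{equation*}
    \Vert\Delta y_k\Vert_{X_k}^2\leq\Vert\Delta u_k\Vert_{X_{k-1}}^2,\qquad k=1,\dots,l.
\end{equation*}
The interconnection yields $\Delta u_{k+1}=\Delta y_k$, hence $\Vert\Delta u_{k+1}\Vert_{X_k}=\Vert\Delta y_k\Vert_{X_k}$, and a short induction on $k$ gives $\Vert\Delta y_k\Vert_{X_k}^2\leq\Vert\Delta u_1\Vert_{X_0}^2$ for all $k$; in particular $\Vert\Delta y_l\Vert_{X_l}^2\leq\Vert\Delta u_1\Vert_{X_0}^2$.

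To conclude, I would insert the boundary weights $X_0=R$ and $X_l=Q$, note that $\Delta u_1=u^a-u^b$ and $\Delta y_l=\NN_\theta(u^a)-\NN_\theta(u^b)$, and take square roots; since $Q\in\bbS_{++}^{c_l}$ and $R\in\bbS_{++}^{c_0}$ are positive definite, $\Vert\cdot\Vert_Q$ and $\Vert\cdot\Vert_R$ are genuine norms and this step is valid, producing exactly \eqref{eq:lipschitz}. The core telescoping is routine; I expect the only delicate point to be the bookkeeping at reshape transitions, where a signal in $\signals{d}{c}$ is flattened into a vector. There one must read the weight on the vector side as the block-diagonal repetition of the per-pixel weight, so that the reshape is itself norm-preserving (hence trivially dissipative with matching weights) and the chain of inequalities closes without loss. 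The same care ensures that the ``for all lengths'' quantifier in Definition~\ref{def:dissipativity} lets the partial sums of consecutive layers be taken over the common signal $u_{k+1}=y_k$.
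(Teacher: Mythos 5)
Your proposal is correct and follows essentially the same route as the paper: the paper sums the layer-wise dissipation inequalities and observes that, via $u_{k+1}=y_k$, the weighted norms telescope to $\Vert\Delta u_1\Vert_R^2-\Vert\Delta y_l\Vert_Q^2\geq 0$, which is exactly your chained induction written as a single sum. Your additional remarks on taking square roots and on treating reshape layers as norm-preserving with block-diagonal weights are sound bookkeeping that the paper leaves implicit.
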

\begin{pf}
    All layers $k=1,\dots,l$ are incrementally dissipative with respect to the supply \eqref{eq:supply}, i.e.,
    \begin{equation}\label{eq:supply_pf}
        \|\Delta u_{k}\|_{X_{k-1}}-\|\Delta y_{k}\|_{X_k}\geq 0,\quad k=1,\dots,l.
    \end{equation}
    We sum up \eqref{eq:supply_pf} for all $k=1,\dots,l$ layers and insert $X_0=R$, $X_l=Q$. 
    This yields
    \begin{align}\label{eq:proof1}
        &\Vert \Delta u_1 \Vert_R^2 - \Vert \Delta y_1 \Vert_{X_{1}}^2 + \Vert \Delta u_2 \Vert_{X_{1}}^2 - \dots\\\nonumber
        &-\Vert \Delta y_{l-1} \Vert_{X_{l-1}}^2 + \Vert \Delta u_l \Vert_{X_{l-1}}^2 - \Vert \Delta y_l \Vert_Q^2  \geq 0.
    \end{align}
    Using $u_{k+1}=y_k$, cmp. \eqref{eq:inputOutputNN}, we recognize that \eqref{eq:proof1} entails a telescoping sum. We are left with $\Vert \Delta u_1 \Vert_R^2 - \Vert \Delta y_l \Vert_Q^2  \geq 0$.
\end{pf}

Note that at a layer transition the directional gain matrix $X_k$ is shared between the current and the subsequent layer, which is a natural consequence of the LMI derivation in \citep{pauli2024lipschitz} and accounts for the feedforward interconnection of the NN. During training, the parameters $\theta_k$ are learned. 
Activation function layers and pooling layers typically do not hold any parameters $\theta_k$ and it is convenient to combine fully connected layers and the subsequent activation function layer $\sigma\circ\calF$ and convolutional layers and the subsequent activation function layer  $\sigma\circ\calC$, or even a convolutional layer, an activation function and a pooling layer $\calP\circ\sigma\circ\calC$ and treat these concatenations as one layer. In this way, we split the CNN into subnetworks, each holding parameters $\theta_k$ to be learned. Previous approaches parameterize all convolutional and fully connected layers as 1-Lipschitz and leverage the fact that pooling layers and activation functions are Lipschitz by design. By choosing an LMI-based approach that includes pooling layers and activation functions in layer concatenations rather than using 1-Lipschitz linear layers, we account for the coupling of information between neurons. This results in better expressivity. In the following, we state LMIs that imply incremental dissipativity with respect to \eqref{eq:supply} for the layer types $\sigma\circ\calF$, $\sigma\circ\calC$, and $\calP\circ\sigma\circ\calC$.

\paragraph*{Convolutional layers}
\begin{lemma}[LMI for $\sigma\circ\calC$ \citep{pauli2024lipschitz}]\label{lem:conv+act}
    Consider a 2-D (1-D) convolutional layer $\sigma\circ\calC$ with activation functions that are slope-restricted in $[0,1]$. For some $X \in \bbS_{++}^{c}$ and $X_{-} \in \bbS_{++}^{c_{-}}$, $\sigma\circ\calC$ satisfies \eqref{eq:dissipativity} with respect to the supply \eqref{eq:supply}  if there exist positive definite matrices $P_1 \in \bbS_{++}^{n_{1}}$, $P_2 \in \bbS_{++}^{n_{2}}$, $\bP = \blkdiag (P_1, P_2)$ ($\bP \in \bbS_{++}^{n}$) and a diagonal matrix $\Lambda\in\bbD_{++}^{c}$ such that
    \begin{equation}\label{eq:cert_conv+act}
        \left[\begin{array}{cc|c}
            \bP-\bA^\top \bP\bA  & -\bA^\top \bP\bB & -\bC^\top\Lambda\\
            -\bB^\top \bP\bA &  X_{-}-\bB^\top \bP\bB  & -\bD^\top\Lambda\\\hline
            -\Lambda \bC & -\Lambda \bD & 2\Lambda-X
        \end{array}\right]\succeq 0.
    \end{equation}
\end{lemma}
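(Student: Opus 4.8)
The plan is to run a storage-function dissipativity argument on the state-space realization of the convolution from Lemma~\ref{lem:min_real_2D}, using the LMI \eqref{eq:cert_conv+act} to certify a pointwise dissipation inequality and then telescoping over the 2-D index grid. I work throughout with increments: fix two inputs $u^a,u^b$, write $\Delta u = u^a-u^b$, let $\Delta\xi = [\Delta x_1^\top\ \Delta x_2^\top]^\top$ be the stacked increment of the Roesser state, $\Delta v = \bC\Delta\xi + \bD\Delta u$ the increment of the pre-activation output (the bias $b$ cancels in the difference), and $\Delta y = \sigma(v^a)-\sigma(v^b)$ the increment after the activation.

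First I would take as incremental storage function $V(\Delta\xi)=\Delta\xi^\top\bP\Delta\xi$ with $\bP=\blkdiag(P_1,P_2)$, which for the Roesser model splits additively, $V = \Delta x_1^\top P_1\Delta x_1 + \Delta x_2^\top P_2\Delta x_2$. The crucial algebraic step is to verify that the quadratic form of the LMI matrix in \eqref{eq:cert_conv+act}, evaluated at $(\Delta\xi,\Delta u,\Delta y)$, equals
\begin{align*}
    &\big[V(\Delta\xi) - V(\bA\Delta\xi + \bB\Delta u)\big] + \big(\Delta u^\top X_- \Delta u - \Delta y^\top X \Delta y\big) \\
    &\quad - 2(\Delta v - \Delta y)^\top\Lambda\Delta y.
\end{align*}
This comes out by expanding $V(\bA\Delta\xi+\bB\Delta u)=(\bA\Delta\xi+\bB\Delta u)^\top\bP(\bA\Delta\xi+\bB\Delta u)$ to match the $(1,1),(1,2),(2,2)$ blocks, and collecting the activation cross-terms $-2\Delta\xi^\top\bC^\top\Lambda\Delta y-2\Delta u^\top\bD^\top\Lambda\Delta y=-2\Delta v^\top\Lambda\Delta y$ with the $(3,3)$ contribution $2\Delta y^\top\Lambda\Delta y$ via $\Delta v=\bC\Delta\xi+\bD\Delta u$. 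Since \eqref{eq:cert_conv+act} asserts this matrix is positive semidefinite the expression is $\geq 0$ at every grid point; and since Lemma~\ref{lem:sloperestriction} applied pointwise to $(\Delta v,\Delta y)$ gives $2(\Delta v-\Delta y)^\top\Lambda\Delta y\geq 0$, adding this nonnegative quantity back leaves the pointwise incremental dissipation inequality
\begin{equation*}
    V(\bA\Delta\xi + \bB\Delta u) - V(\Delta\xi) \leq \Delta u^\top X_- \Delta u - \Delta y^\top X \Delta y = s(\Delta u,\Delta y),
\end{equation*}
whose left-hand side is the Roesser storage increment $\Delta x_1[i{+}1,j]^\top P_1\Delta x_1[i{+}1,j] + \Delta x_2[i,j{+}1]^\top P_2\Delta x_2[i,j{+}1] - \Delta x_1[i,j]^\top P_1\Delta x_1[i,j] - \Delta x_2[i,j]^\top P_2\Delta x_2[i,j]$.

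Finally I would sum over the rectangle $0\leq i\leq N_1-1$, $0\leq j\leq N_2-1$. The $P_1$-terms telescope in the $i$-direction and the $P_2$-terms telescope in the $j$-direction; invoking the initial conditions $\Delta x_1[0,j]=0$ and $\Delta x_2[i,0]=0$ from Lemma~\ref{lem:min_real_2D}, the storage increments collapse to the boundary terms $\sum_j \Delta x_1[N_1,j]^\top P_1\Delta x_1[N_1,j] + \sum_i \Delta x_2[i,N_2]^\top P_2\Delta x_2[i,N_2]\geq 0$, using $P_1,P_2\succ 0$. Hence $\sum_{i,j} s(\Delta u[i,j],\Delta y[i,j]) = \Vert\Delta u\Vert_{X_-}^2 - \Vert\Delta y\Vert_X^2 \geq 0$, which is exactly \eqref{eq:dissipativity} with supply \eqref{eq:supply}. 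The 1-D claim is recovered as the special case of \eqref{eq:ss_conv} with a single state block and one-directional telescoping.

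I expect the main obstacle to be the 2-D bookkeeping in the telescoping step. Unlike the 1-D system, the Roesser state advances along two independent indices, so one must verify that the block-diagonal choice $\bP=\blkdiag(P_1,P_2)$ is precisely what makes the storage difference split into a horizontal and a vertical telescoping sum; an off-diagonal $\bP$ would couple $x_1[i{+}1,j]$ with $x_2[i,j{+}1]$ at mismatched indices and destroy the collapse. Getting the index ranges and the direction of each telescope right, and confirming that the leftover boundary terms are nonnegative rather than of indefinite sign, is the only genuinely delicate part; the algebraic identity itself is routine completion of squares once $\Delta v=\bC\Delta\xi+\bD\Delta u$ is substituted.
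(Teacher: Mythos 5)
Your argument is correct and is exactly the ``typical robust dissipativity'' proof the paper points to in \cite{pauli2024lipschitz}: the quadratic form of \eqref{eq:cert_conv+act} decomposes into the Roesser storage difference, the supply \eqref{eq:supply}, and the slope-restriction multiplier term of Lemma~\ref{lem:sloperestriction}, and the block-diagonal $\bP$ makes the two storage components telescope separately along $i$ and $j$ down to nonnegative boundary terms under the zero initial conditions. No gaps; your remark that an off-diagonal $\bP$ would break the index-matched telescoping is precisely why the lemma imposes $\bP=\blkdiag(P_1,P_2)$.
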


\begin{pf}
    The proof follows typical arguments used in robust dissipativity proofs, using Lemma \ref{lem:sloperestriction}, i.e., exploiting the slope-restriction property of the activation functions. The proof is provided in \citep{pauli2024lipschitz}.
\end{pf}

\begin{corollary}[LMI for $\calP\circ\sigma\circ\calC$]\label{cor:conv+act+pool}
    Consider  a 2-D (1-D) convolutional layer $\calP\circ\sigma\circ\calC$ with activation functions that are slope-restricted in $[0,1]$ and an average pooling layer / a maximum pooling layer. For some $X \in \bbS_{++}^{c}$ / $X \in \bbD_{++}^{c}$  and $X_{-} \in \bbS_{++}^{c_{-}}$, $\calP\circ\sigma\circ\calC$ satisfies \eqref{eq:dissipativity} with respect to supply \eqref{eq:supply} if there exist positive definite matrices $P_1 \in \bbS_{++}^{n_{1}}$, $P_2 \in \bbS_{++}^{n_{2}}$, $\bP = \blkdiag (P_1, P_2)$ ($\bP \in \bbS_{++}^{n}$) and a diagonal matrix $\Lambda\in\bbD_{++}^{c}$ such that
    \begin{equation}\label{eq:cert_conv+act+pool}
        \left[\begin{array}{cc|c}
            \bP-\bA^\top \bP\bA  & -\bA^\top \bP\bB & -\bC^\top\Lambda\\
            -\bB^\top \bP\bA &  X_{-}-\bB^\top \bP\bB  & -\bD^\top\Lambda\\\hline
            -\Lambda \bC & -\Lambda \bD & 2\Lambda-\rho_{\mathrm{p}}^2X
        \end{array}\right] \succeq 0,
    \end{equation}
    where $\rho_{\mathrm{p}}$ is the Lipschitz constant of the pooling layer.
\end{corollary}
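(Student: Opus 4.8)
The plan is to split the composite layer $\calP\circ\sigma\circ\calC$ at the intermediate signal $v=\sigma(\calC(u))$, treating the convolution-plus-activation block and the pooling block separately, and then to chain their dissipation inequalities. The crucial observation that makes this clean is that the LMI \eqref{eq:cert_conv+act+pool} is \emph{identical} to the LMI \eqref{eq:cert_conv+act} of Lemma~\ref{lem:conv+act}, except that the output gain $X$ in the lower-right block is replaced by $\rho_{\mathrm{p}}^2 X$ (which is again positive definite for $\rho_{\mathrm{p}}>0$).

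First I would invoke Lemma~\ref{lem:conv+act} directly, but applied with the modified output weight $\rho_{\mathrm{p}}^2 X$ in place of $X$. Since feasibility of \eqref{eq:cert_conv+act+pool} is exactly feasibility of the Lemma's LMI for this modified weight, the block $\sigma\circ\calC$ is incrementally dissipative with respect to the supply \eqref{eq:supply} with output gain $\rho_{\mathrm{p}}^2X$, i.e.\ for the intermediate signal $v$,
\[
    \Vert \Delta u\Vert_{X_-}^2 \;\geq\; \rho_{\mathrm{p}}^2\,\Vert \Delta v\Vert_{X}^2 .
\]

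Second I would show that the pooling layer is $\rho_{\mathrm{p}}$-Lipschitz in the $X$-weighted norm, i.e.\ $\Vert \Delta y\Vert_X^2 \le \rho_{\mathrm{p}}^2\,\Vert \Delta v\Vert_X^2$ with $y=\calP(v)$. For \emph{average} pooling this reduces to a linear-algebra fact: the channel-wise spatial averaging can be written as $Y=MV$, where $M$ is the (sparse, Toeplitz) spatial pooling matrix and $V$ stacks the increments $\Delta v[\bj]^\top$ as rows; writing $X=L^\top L$ and using the Kronecker/decoupling structure yields $\Vert \Delta y\Vert_X^2 = \Vert MVL^\top\Vert_F^2 \le \Vert M\Vert_2^2\,\Vert VL^\top\Vert_F^2 = \rho_{\mathrm{p}}^2\Vert \Delta v\Vert_X^2$, valid for \emph{any} $X\succ0$ with $\rho_{\mathrm{p}}=\Vert M\Vert_2$. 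For \emph{max} pooling the map is nonlinear, so I would instead exploit that $X\in\bbD_{++}^{c}$ is diagonal: then $\Vert \Delta y\Vert_X^2=\sum_m x_m\sum_{\bi}(\Delta y[\bi])_m^2$ decouples over channels $m$, on each channel the elementary bound $|\max_{\bt}(v^a)_m-\max_{\bt}(v^b)_m|\le\max_{\bt}|(\Delta v)_m|$ holds, and a count of how many pooling windows reuse each sample $v[\bj]$ (finite for all strides $\bs\le\br$) gives $\sum_{\bi}(\Delta y[\bi])_m^2\le\rho_{\mathrm{p}}^2\sum_{\bj}(\Delta v[\bj])_m^2$; multiplying by $x_m\ge0$ and summing preserves the inequality. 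This is exactly why the corollary admits a general $X\succ0$ for average pooling but restricts to diagonal $X$ for max pooling.

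Finally I would concatenate the two bounds, $\Vert \Delta u\Vert_{X_-}^2 \ge \rho_{\mathrm{p}}^2\Vert \Delta v\Vert_X^2 \ge \Vert \Delta y\Vert_X^2$, which is precisely incremental dissipativity \eqref{eq:dissipativity} of the composite layer $\calP\circ\sigma\circ\calC$ with respect to the supply \eqref{eq:supply} with input/output gains $X_-$ and $X$. The hard part will be the second step: making the weighted-norm pooling bound rigorous uniformly across both pooling types and across the overlapping-window regime $\bs\le\br$. The average-pooling estimate hinges on the channel weight $X$ decoupling from the linear spatial operator, whereas the max-pooling estimate crucially needs $X$ diagonal so that the nonlinear channel-wise maximum commutes with the weighting; in both cases one must carefully account for the multiplicity with which each intermediate sample $v[\bj]$ enters overlapping pooling windows.
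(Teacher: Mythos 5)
Your proposal is correct and follows the argument the paper intends: the paper itself gives no proof of Corollary~\ref{cor:conv+act+pool} (it defers to \cite{pauli2024lipschitz}), but your decomposition --- apply Lemma~\ref{lem:conv+act} with output weight $\rho_{\mathrm{p}}^2 X$ to get $\Vert\Delta u\Vert_{X_-}^2\geq\rho_{\mathrm{p}}^2\Vert\Delta v\Vert_X^2$, then compose with the $X$-weighted Lipschitz bound of the pooling map --- is exactly the standard route, and your explanation of why average pooling admits general $X\succ0$ (the linear spatial operator commutes with the channel weighting) while max pooling needs diagonal $X$ (channel-wise decoupling of the nonlinear maximum) matches the paper's own remark following the corollary.
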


\begin{rem}
    Lemma \ref{lem:conv+act} and Corollary \ref{cor:conv+act+pool} entail all kinds of zero-padding \citep{pauli2024lipschitz}, just like \citep{prach2022almost}, giving our method an advantage over \citep{trockman2021orthogonalizing,wang2023direct}, which are restricted to circular padding.
\end{rem}

\paragraph*{Fully connected layers}
\begin{lemma}[LMI for $\sigma\circ\calF$ \citep{pauli2024lipschitz}]\label{lem:FC+act}
    Consider a fully connected layer $\sigma\circ\calF$ with activation functions that are slope-restricted in $[0,1]$. For some $X \in \bbS_{++}^{c}$ and $X_{-} \in \bbS_{++}^{c_{-}}$, $\sigma\circ\calF$ satisfies \eqref{eq:dissipativity} with respect to \eqref{eq:supply} if there exists a diagonal matrix $\Lambda\in\bbD_{++}^{c}$ such that
    \begin{equation}\label{eq:cert_FC+act}
        \begin{bmatrix}
            X_{-}  & -W^\top\Lambda\\
            -\Lambda W & 2\Lambda-X
        \end{bmatrix}\succeq 0.
    \end{equation}
\end{lemma}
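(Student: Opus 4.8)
The plan is to run a standard S-procedure (robust dissipativity) argument, treating the slope-restriction quadratic constraint of Lemma~\ref{lem:sloperestriction} as the relaxation of the nonlinearity and the diagonal matrix $\Lambda$ as the S-procedure multiplier. The key simplification to exploit at the outset is that a fully-connected layer acts on \emph{vectors} rather than signals, so $\bN_k=0$ and the supply \eqref{eq:supply} collapses from a running sum to the single quadratic form $\Delta u^\top X_{-}\Delta u - \Delta y^\top X\Delta y$. Hence the incremental dissipativity condition \eqref{eq:dissipativity} reduces to showing that this scalar quantity is nonnegative for all increments. In this sense Lemma~\ref{lem:FC+act} is the memoryless degenerate case of the convolutional Lemma~\ref{lem:conv+act}, obtained formally by deleting the state ($\bP$, $\bA$, $\bB$) blocks; I would point this out but give the short direct argument since no state-space realization is needed here.

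First I would fix arbitrary inputs $u^a,u^b\in\bbR^{c_-}$ and set $\Delta u = u^a-u^b$. Using the definition \eqref{eq:fc}, the bias cancels in the difference, so the pre-activation increment is $\Delta v = W\Delta u$, and the layer output increment is $\Delta y = \sigma(Wu^a+b)-\sigma(Wu^b+b)$. The second step is to invoke Lemma~\ref{lem:sloperestriction} with this specific pairing: substituting $\Delta v = W\Delta u$ into \eqref{eq:slope_restriction} (with $\Lambda\in\bbD_{++}^{c}$ the multiplier furnished by the hypothesis) yields the inequality
\begin{equation*}
    2\,\Delta u^\top W^\top\Lambda\,\Delta y - 2\,\Delta y^\top\Lambda\,\Delta y \;\geq\; 0 .
\end{equation*}

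The third step is to apply a congruence transformation to the LMI \eqref{eq:cert_FC+act}: left- and right-multiplying by $\begin{bmatrix}\Delta u^\top & \Delta y^\top\end{bmatrix}$ and its transpose, positive semidefiniteness gives
\begin{equation*}
    \Delta u^\top X_{-}\Delta u - 2\,\Delta u^\top W^\top\Lambda\,\Delta y + 2\,\Delta y^\top\Lambda\,\Delta y - \Delta y^\top X\Delta y \;\geq\; 0 .
\end{equation*}
Adding the slope-restriction inequality from the previous step cancels the $-2\,\Delta u^\top W^\top\Lambda\,\Delta y$ cross term together with the $2\,\Delta y^\top\Lambda\,\Delta y$ contribution, leaving exactly $\Delta u^\top X_{-}\Delta u - \Delta y^\top X\Delta y \geq 0$, which is \eqref{eq:dissipativity} for the supply \eqref{eq:supply}. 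The argument holds for arbitrary $u^a,u^b$, completing the proof.

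I do not expect a genuine obstacle here; the only thing to get right is the sign and coefficient bookkeeping in the S-procedure, and the point worth emphasizing is \emph{why} it closes: the off-diagonal block $-W^\top\Lambda$ and the $2\Lambda$ summand in the bottom-right block of \eqref{eq:cert_FC+act} are placed precisely so that the congruence of the LMI and the slope-restriction certificate combine additively, with $\Lambda$ playing the dual role of multiplier in both. The one subtlety to state explicitly is the dimensional/memoryless reduction of the signal supply to a plain quadratic form for vector-valued layers, since that is what lets us avoid any state-space or telescoping machinery.
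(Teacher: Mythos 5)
Your proposal is correct and follows exactly the argument the paper points to: the paper defers the proof to \cite{pauli2024lipschitz}, describing it as the ``typical robust dissipativity argument'' via the slope-restriction constraint of Lemma~\ref{lem:sloperestriction}, which is precisely your congruence-plus-S-procedure computation (and your observation that the lemma is the memoryless $\bA=\bB=\bC=0$ case of Lemma~\ref{lem:conv+act} is the same point the paper makes in the remark following it). The sign and coefficient bookkeeping in your two inequalities checks out, so the cross terms cancel as claimed.
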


\begin{rem}
    Technically, we can interpret a fully connected layer as a 0-D convolutional layer with a signal length of 1, $\bD=W$ and $\bA=0$, $\bB=0$, $\bC=0$. Accordingly, \eqref{eq:cert_FC+act} is a special case of \eqref{eq:cert_conv+act}.
\end{rem}

\paragraph*{The last layer}
The last layer is treated separately, as it typically lacks an activation function and $X_l=Q$ is predefined. In classifying NNs the last layer typically is a fully connected layer $\calF_l$, for which the LMI
    \begin{equation}\label{eq:cert_last_FC}
        X_{l-1} - W_l^\top Q W_l \succeq 0
    \end{equation}
implies \eqref{eq:dissipativity} with respect to the supply \eqref{eq:supply}, cmp. Theorem \ref{thm:certification}.

We denote the LMIs \eqref{eq:cert_conv+act} to \eqref{eq:cert_last_FC} as instances of $\calG_k(X_k, X_{k-1}, \nu_k) \succeq 0$, where $\nu_k$ denote the respective multipliers and slack variables in the specific LMIs (for $\sigma \circ \calF_k$, $\nu_k=\Lambda_k$, for $\sigma \circ \calC_k$, $\nu_k=(\Lambda_k,\bP_k)$). 

\begin{rem}[Lipschitz constant estimation]
To determine an upper bound on the Lipschitz constant for a given NN, we solve the SDP
\begin{align}\label{eq:Lipschitz_estimation}
    \begin{split}
    \min_{\rho^2,X,\nu} ~\rho^2 ~ \text{s.\,t.} ~ \calG_k(X_k,X_{k-1},\nu_k)\succeq 0, ~k=1,\dots,l\\
    X_0=R=\rho^2 I,~X_l=Q= I.
    \end{split}
\end{align}
In \eqref{eq:Lipschitz_estimation}, $X=\{X_k\}_{k=1}^{l-1}$, $\nu=\{\nu_k\}_{k=1}^{l}$, $\rho^2$ serve as decision variables. Based on Theorem \ref{thm:certification}, the solution for $\rho$ is an upper bound on the Lipschitz constant for the NN \citep{pauli2024lipschitz}.
\end{rem}

\section{Synthesis of Dissipative Layers}\label{sec:parameterization}
In the previous section, we revisited LMIs, derived in \citep{pauli2024lipschitz} for Lipschitz constant estimation for NNs, which we use to characterize robust NNs that satisfy \eqref{eq:lipschitz} or \eqref{eq:lipschitz_2}. This work is devoted to the synthesis of such Lipschitz-bounded NNs. To this end, in this section, we derive layer-wise parameterizations for $\theta_k$ that render the layer-wise LMIs $\calG_k(X_k,X_{k-1},\nu_k)\succeq 0$, $k=1,\dots,l$ feasible by design, addressing Problem~\ref{problem1}. For our parameterization the Lipschitz bound $\rho$ or, respectively, the matrices $Q$, $R$ are hyperparameters that can be chosen by the user. Low Lipschitz bounds $\rho$ lead to high robustness, yet compromise the expressivity of the NN, as we will observe in Subsection~\ref{sec:exp_comparison}. Inserting the parameterizations $\phi\mapsto\theta$ presented in this section into \eqref{eq:opt_con} yields \eqref{eq:opt}, which can be conveniently solved using first-order solvers.

After introducing the Cayley transform in Subsection~\ref{sec:Cayley}, we discuss the parameterization of fully connected layers and convolutional layers in Subsections~\ref{sec:parameterization_FC} and \ref{sec:parameterization_conv}, respectively, based on the Cayley transform and a solution to the 1-D and 2-D Lyapunov equations. To improve readability, we drop the layer index $k$ in this section. If we refer to a variable of the previous layer, we denote it by the subscript ``$-$''.

\subsection{Cayley transform}\label{sec:Cayley}
The Cayley transform maps skew-symmetric matrices to orthogonal matrices, and its extended version parameterizes the Stiefel manifold from non-square matrices. The Cayley transform can be used to map continuous time systems to discrete time systems \citep{guo2006relation}. Furthermore, it has proven useful in designing NNs  with norm-constrained weights or Lipschitz constraints \citep{trockman2021orthogonalizing,helfrich2018orthogonal,wang2023direct}. 

\begin{lemma}[Cayley transform]\label{lem:Cayley}
    For all $Y\in\bbR^{n\times n}$ and $Z\in\bbR^{m\times n}$ the Cayley transform
\begin{equation*}
    \begin{split}
        \Cayley\left(
    \begin{bmatrix}
        Y \\
        Z
    \end{bmatrix}\right)=
    \begin{bmatrix}
        U \\
        V
    \end{bmatrix}=
    \begin{bmatrix}
        (I+M)^{-1}(I-M) \\
        2Z(I+M)^{-1}
    \end{bmatrix},
    \end{split}
\end{equation*}
where $M=Y-Y^\top+Z^\top Z$, yields matrices $U\in\bbR^{n\times n}$ and $V\in\bbR^{m\times n}$ that satisfy $U^\top U + V^\top V =  I$.
\end{lemma}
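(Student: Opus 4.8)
The plan is to verify the identity $U^\top U + V^\top V = I$ by direct computation, after first confirming that the inverse $(I+M)^{-1}$ in the definition is well-defined.

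First I would establish invertibility of $I+M$. Since $M = Y - Y^\top + Z^\top Z$, its symmetric part is $\tfrac12(M+M^\top) = Z^\top Z \succeq 0$, because the skew-symmetric contribution $Y - Y^\top$ cancels. Hence for any real $x \neq 0$ we have $x^\top (I+M) x = \|x\|^2 + \|Zx\|^2 > 0$, so $I+M$ has trivial kernel and is nonsingular. This makes $U$ and $V$ well-defined and, just as importantly, records the relation $M + M^\top = 2 Z^\top Z$ that will drive the final cancellation.

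Next, to streamline the algebra I would set $P = (I+M)^{-1}$ and rewrite both blocks in terms of $P$. Using $I - M = 2I - (I+M)$ gives $U = (I+M)^{-1}(I-M) = 2P - I$, while $V = 2ZP$. Expanding the Gram sum then yields $U^\top U + V^\top V = (2P^\top - I)(2P - I) + 4 P^\top Z^\top Z P = 4 P^\top(I + Z^\top Z) P - 2(P^\top + P) + I$, where I have grouped $4P^\top P + 4 P^\top Z^\top Z P = 4 P^\top(I + Z^\top Z)P$.

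The remaining task is to show $2 P^\top (I + Z^\top Z) P = P^\top + P$, which is equivalent to $U^\top U + V^\top V = I$. I would verify it by conjugation: multiplying on the left by $P^{-\top} = I + M^\top$ and on the right by $P^{-1} = I + M$ reduces the identity to $2(I + Z^\top Z) = P^{-1} + P^{-\top} = 2I + (M + M^\top)$, which holds exactly because $M + M^\top = 2 Z^\top Z$ from the first step; substituting back gives $U^\top U + V^\top V = I$. The only real subtlety, and thus the main ``obstacle,'' is the noncommutative bookkeeping, namely keeping $P^\top$ and $P$ on their correct sides and not prematurely collapsing $P + Z^\top Z P$ into $(P + Z^\top Z)P$, together with the invertibility argument. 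Once the relation $M + M^\top = 2Z^\top Z$ is in hand, the computation closes immediately.
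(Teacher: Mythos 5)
Your proof is correct. Note that the paper does not actually prove this lemma -- it imports it from \cite{shepard2015representation} and only adds the one-line remark that $I+M$ is nonsingular because $1\leq\lambda_{\min}(I+Z^\top Z)\leq \mathrm{Re}(\lambda_{\min}(I+M))$; your invertibility argument via $x^\top(I+M)x=\|x\|^2+\|Zx\|^2>0$ establishes exactly the same fact in a more elementary way. The rest of your computation is a valid self-contained verification: the substitution $U=2P-I$, $V=2ZP$ with $P=(I+M)^{-1}$ reduces everything to the single relation $M+M^\top=2Z^\top Z$, and the conjugation step is handled with the correct noncommutative bookkeeping. (A slightly more common route is to use that $(I-M)$ and $(I+M)^{-1}$ commute, so that $U^\top U+V^\top V=(I+M)^{-\top}\bigl[(I-M)^\top(I-M)+4Z^\top Z\bigr](I+M)^{-1}$, and then check $(I+M)^\top(I+M)-(I-M)^\top(I-M)=2(M+M^\top)=4Z^\top Z$; your version avoids invoking that commutation at the cost of one extra conjugation, and both are equally rigorous.)
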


Note that $I+M$ is nonsingular since $1\leq\lambda_\mathrm{min}(I+Z^\top Z)\leq Re(\lambda_\mathrm{min}(I+M))$.

\subsection{Fully connected layers}\label{sec:parameterization_FC}
For fully connected layers $\sigma\circ\calF$, Theorem \ref{thm:FC} gives a mapping $\phi\mapsto(W,b)$ from unconstrained variables $\phi$ that renders \eqref{eq:cert_FC+act} feasible by design, and thus the layer is dissipative with respect to the supply \eqref{eq:supply}. 
\begin{theorem}\label{thm:FC}
    A fully connected layer $\sigma\circ\calF$ parameterized by
    \begin{align}\label{eq:parameterization_FNN}
        W =  \sqrt{2}\Gamma^{-1}V^\top L_{-},
    \end{align}
wherein
    \begin{equation*}
        \Gamma=\diag(\gamma),~
        \begin{bmatrix}
            U \\
            V
        \end{bmatrix}=
        \Cayley\left(
        \begin{bmatrix}
            Y\\
            Z
        \end{bmatrix}
        \right),~
        L=\sqrt{2} U\Gamma,
    \end{equation*}
    satisfies \eqref{eq:cert_FC+act}. This yields the mapping
    \begin{equation*}
        (Y,Z,\gamma,b)\mapsto(W,b,L),    
    \end{equation*}
    where $Y\in\bbR^{c\times c}$, $Z\in\bbR^{c_-\times c}$, $\gamma,b\in\bbR^{c}$.
\end{theorem}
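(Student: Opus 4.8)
The plan is to make the single LMI \eqref{eq:cert_FC+act} hold by exhibiting explicit choices for the diagonal multiplier $\Lambda$ and the gain matrices $X,X_-$, and then to recognize the resulting block matrix as a Gram matrix. Following the square-root convention $X=L^\top L$ used throughout the paper (cf.\ the discussion of $X_0=L_0^\top L_0$), I would take $X_-=L_-^\top L_-$ inherited from the previous layer, $X=L^\top L$ with $L=\sqrt{2}\,U\Gamma$ (so that $X=2\Gamma U^\top U\Gamma$), and choose the multiplier $\Lambda=\Gamma^2$. Since $\Gamma=\diag(\gamma)$ must be invertible for $W$ to be defined, we automatically get $\Lambda=\Gamma^2\in\bbD_{++}^{c}$, matching the hypothesis of Lemma~\ref{lem:FC+act}; the bias $b$ enters neither the supply \eqref{eq:supply} nor the LMI, so it is left free.

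With these identifications I would substitute $W=\sqrt{2}\,\Gamma^{-1}V^\top L_-$ into the three blocks. Using $\Gamma^\top=\Gamma$ and $(\Gamma^{-1})^\top=\Gamma^{-1}$, the off-diagonal blocks collapse to $\Lambda W=\sqrt{2}\,\Gamma V^\top L_-$ and $W^\top\Lambda=\sqrt{2}\,L_-^\top V\Gamma$, i.e.\ the spurious $\sqrt 2$ and $\Gamma^{-1}$ factors cancel against $\Lambda=\Gamma^2$. The crucial step is the $(2,2)$ block, where I invoke the Cayley identity $U^\top U+V^\top V=I$ from Lemma~\ref{lem:Cayley}:
\begin{equation*}
    2\Lambda-X = 2\Gamma^2-2\Gamma U^\top U\Gamma = 2\Gamma(I-U^\top U)\Gamma = 2\Gamma V^\top V\Gamma .
\end{equation*}

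It then remains to observe that, setting $N=\begin{bmatrix} L_- & -\sqrt{2}\,V\Gamma\end{bmatrix}$, the whole matrix in \eqref{eq:cert_FC+act} factors as a Gram matrix,
\begin{equation*}
    \begin{bmatrix}
        L_-^\top L_- & -\sqrt{2}\,L_-^\top V\Gamma\\
        -\sqrt{2}\,\Gamma V^\top L_- & 2\Gamma V^\top V\Gamma
    \end{bmatrix}
    = N^\top N \succeq 0,
\end{equation*}
which closes the argument since $N^\top N$ is positive semidefinite for any $N$.

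The only genuinely delicate point is the bookkeeping that precedes the computation: recognizing that the correct multiplier is $\Lambda=\Gamma^2$ (not $\Gamma$ or $2\Gamma^2$) and the correct gain is $X=L^\top L=2\Gamma U^\top U\Gamma$, so that the $\sqrt 2$ and $\Gamma^{\pm1}$ factors in $W$ and $L$ align with the Gram structure and the Cayley identity converts $I-U^\top U$ into $V^\top V$. Once these choices are fixed, the factorization is essentially forced and everything else is routine matrix algebra; no positivity is needed beyond $\Gamma$ invertible and the orthogonality relation $U^\top U+V^\top V=I$.
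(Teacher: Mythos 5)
Your proof is correct: with $\Lambda=\Gamma^2$, $X_-=L_-^\top L_-$, and $X=L^\top L=2\Gamma U^\top U\Gamma$, the substitution of $W=\sqrt{2}\,\Gamma^{-1}V^\top L_-$ into \eqref{eq:cert_FC+act} and the Cayley identity $U^\top U+V^\top V=I$ do yield exactly the Gram matrix $N^\top N$ with $N=\begin{bmatrix} L_- & -\sqrt{2}\,V\Gamma\end{bmatrix}$, and the dimensions all check out ($V\in\bbR^{c_-\times c}$, so $W\in\bbR^{c\times c_-}$). Your identification $\Lambda=\Gamma^2$ is also the one the paper intends (it states this explicitly in the text following the theorem), and your observation that invertibility of $\Gamma$ is all that is needed is consistent with the paper's remark on parameterizing $\Gamma=\diag(e^\gamma)$.

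The paper defers the proof to an external reference, but its own proofs of the analogous statements (the last-layer corollary and Theorem~\ref{thm:conv}) follow a slightly different script: solve the parameterization for $V$ (e.g.\ $V=L_-^{-\top}W^\top\Gamma$ up to constants), insert it into $U^\top U+V^\top V=I$ to obtain a matrix inequality of the form $W X_-^{-1}W^\top\preceq(\cdot)$, and then recover the block LMI by one or two Schur complements. Your direct Gram factorization reaches the same conclusion with less machinery: it never inverts $L_-$ or $F$-type blocks and needs no Schur complement, only $N^\top N\succeq 0$. What the paper's Schur-complement route buys in exchange is a template that scales to the convolutional case, where the $(1,1)$ block $F$ is not itself a Gram matrix of the parameterized quantities and the elimination of blocks via Schur complements is genuinely needed; for the purely static fully-connected layer your factorization is the more economical argument.
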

A proof is provided in \cite[Theorem 5]{pauli2023lipschitz-syn}. 
We collect the free variables in $\phi = (Y,Z,\gamma,b)$ and the weight and bias terms in $\theta=(W,b)$. To train a Lipschitz-bounded NN, we parameterize the weights $W$ of all fully connected layers using \eqref{eq:parameterization_FNN} and then train over the free variables $\phi$ using \eqref{eq:opt}. Toolboxes are used to determine gradients with respect to $\phi$. The by-product $\Gamma$ parameterizes $\Lambda=\Gamma^2$, and $L$ parameterizes the directional gain $X=L^\top L$ and is passed on to the subsequent layer, where it appears as $X_-$. The first layer $k=1$ takes $ L_0=\chol(R)$, which is $L_0=\rho I$ when considering \eqref{eq:lipschitz_2}. Incremental properties such as Lipschitz boundedness are independent of the bias term such that $b\in\bbR^{c}$ is a free variable as well.

Note that the parameterization \eqref{eq:parameterization_FNN} for fully connected layers of a Lipschitz-bounded NN is the same as the one proposed in \citep{wang2023direct}. According to \citep[Theorem 3.1]{wang2023direct}, \eqref{eq:parameterization_FNN} is necessary and sufficient, i.\,e., the fully connected layers $\sigma\circ\calF$ satisfy \eqref{eq:cert_FC+act} if and only if the weights can be parameterized by \eqref{eq:parameterization_FNN}.

\begin{rem}
    To ensure that $\Gamma$ and $L$ are nonsingular, w.l.o.g., we may parameterize $\Gamma=\diag(e^{\gamma})$, $\gamma\in\bbR^c$ \citep{wang2023direct} and $L=U^\top \diag(e^{l})V$, $l\in\bbR^c$ with square orthogonal matrices $U$ and $V$, e.g., found using the Cayley transform. 
\end{rem}

\subsection{Convolutional layers}\label{sec:parameterization_conv}
The parameterization of convolutional layers is divided into two steps. We first parameterize the upper left block in \eqref{eq:cert_conv+act}, namely
\begin{equation}\label{eq:matrixF}
F \coloneqq
\left[\begin{array}{cc}
    \bP-\bA^\top \bP\bA  & -\bA^\top \bP\bB\\
    -\bB^\top \bP\bA &  X_{-}-\bB^\top \bP\bB 
\end{array}\right]\succ 0,
\end{equation}
by constructing a parameter-dependent solution of a 1-D or 2-D Lyapunov equation. Secondly, we parameterize $\bC$ and $\bD$ from the auxiliary variables determined in the previous step.

To simplify the notation of \eqref{eq:cert_conv+act} to
\begin{equation*}
\begin{bmatrix}
    F & -\widehat{\bC}^\top\Lambda\\
    -\Lambda \widehat{\bC} & 2\Lambda-X
\end{bmatrix}\succeq 0,
\end{equation*}
we introduce $\widehat{\bC} \coloneqq
        \begin{bmatrix}
            \bC & \bD
        \end{bmatrix}$. In the following, we distinguish between the parameterization of 1-D convolutional layers and 2-D convolutional layers.

\paragraph*{1-D convolutional layers}
The parameterization of 1-D convolutional layers uses the controllability Gramian \citep{pauli2023lipschitz-syn}, which is the unique solution to a discrete-time Lyapunov equation. The first parameterization step entails to parameterize $\bP$ such that \eqref{eq:matrixF} is feasible. To do so, we use the following lemma \citep{pauli2023lipschitz-syn}.

\begin{lemma}[Parameterization of $\bP$]\label{lem:Gramian_1D}
Consider the 1-D state space representation \eqref{eq:ABCD}. For some $\varepsilon>0$ and all $H\in\bbR^{n\times n}$, the matrix $\bP=\bT^{-1}$ with
\begin{equation}\label{eq:Gramian}
    \bT = \sum_{k=0}^{n-c_{-}} \bA^k(\bB{X}_{-}^{-1}\bB^\top+H^\top H+\epsilon I)(\bA^\top)^{k},
\end{equation}
renders \eqref{eq:matrixF} feasible.
\end{lemma}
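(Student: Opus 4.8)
The plan is to reverse-engineer the desired inequality $F\succ0$ in \eqref{eq:matrixF} into a chain of Schur complements whose final step collapses, via the Lyapunov structure of $\bT$, onto the manifestly positive definite matrix $H^\top H+\varepsilon I$.

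First I would record two structural facts about $\bT$. Since $\bA$ in \eqref{eq:AB} is a block shift with $\bA^{r}=0$ and the upper limit satisfies $n-c_{-}=(r-1)c_{-}\ge r-1$, every nonzero power of $\bA$ is captured in the finite sum \eqref{eq:Gramian}; telescoping the sum then gives the discrete-time Lyapunov identity $\bT-\bA\bT\bA^\top=Q$ with $Q\coloneqq\bB X_{-}^{-1}\bB^\top+H^\top H+\varepsilon I$, where the residual term $\bA^{\,n-c_{-}+1}Q(\bA^\top)^{\,n-c_{-}+1}$ vanishes because $(r-1)(c_{-}-1)\ge 0$. Since $Q\succ0$ (the $\varepsilon I$ term dominates the two positive semidefinite contributions) and $\bT\succeq Q$, the matrix $\bT$ is invertible and $\bP=\bT^{-1}\succ0$.

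Next, writing $N=[\,\bA\ \ \bB\,]$ so that $F=\diag(\bP,X_{-})-N^\top\bP N$, I would use the Schur complement with respect to $\bT=\bP^{-1}$ to obtain the equivalence
\[
F\succ0\quad\Longleftrightarrow\quad
G\coloneqq\begin{bmatrix}\bP & 0 & \bA^\top\\ 0 & X_{-} & \bB^\top\\ \bA & \bB & \bT\end{bmatrix}\succ0 .
\]
I would then establish $G\succ0$ by eliminating its blocks in the opposite order. Taking the Schur complement of the $(1,1)$ block $\bP\succ0$ and substituting $\bP^{-1}=\bT$ together with the Lyapunov identity yields $\left[\begin{smallmatrix}X_{-} & \bB^\top\\ \bB & \bT-\bA\bT\bA^\top\end{smallmatrix}\right]=\left[\begin{smallmatrix}X_{-} & \bB^\top\\ \bB & Q\end{smallmatrix}\right]$. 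Finally, the Schur complement of $X_{-}\succ0$ in this last matrix equals $Q-\bB X_{-}^{-1}\bB^\top=H^\top H+\varepsilon I\succ0$, which closes the chain and proves $F\succ0$.

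The Schur-complement bookkeeping is routine; the one point demanding care is the telescoping argument that converts the finite sum \eqref{eq:Gramian} into an \emph{exact} Lyapunov equation, since the upper limit $n-c_{-}$ must be verified to capture every nonzero power of the nilpotent $\bA$. The key idea is recognizing that $\bT=\bP^{-1}$ is precisely what allows the Lyapunov equation to enter the middle Schur complement, so that the explicit form of $Q$ cancels the $\bB X_{-}^{-1}\bB^\top$ term in the last step and leaves a term that is positive definite by construction.
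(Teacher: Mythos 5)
Your proposal is correct and follows essentially the same route as the paper: the paper defers the 1-D proof to \cite{pauli2023lipschitz-syn} but states the key idea (Schur complements reduce $F\succ 0$ to a discrete-time Lyapunov equation solved by the Gramian-type sum), and carries out exactly this nilpotency-plus-Schur-complement chain explicitly for the 2-D analogue in Lemma~\ref{lem:Gramian_2D}. Your telescoping check that the upper limit $n-c_-$ captures all nonzero powers of the nilpotent $\bA$ is the same observation the paper makes via ``$A_{22}$ is nilpotent, so the finite sum equals the infinite one,'' and your backward chain of Schur-complement equivalences matches the paper's forward chain through \eqref{eq:Schurcomplemnt_Gramian}.
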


A proof is provided in \cite[Lemma 7]{pauli2023lipschitz-syn}. The key idea behind the proof is that by Schur complements \eqref{eq:matrixF} can be posed as a Lyapunov equation. The expression \eqref{eq:Gramian} then provides the solution to this Lyapunov equation. The second step now parameterizes $\widehat{\bC}$ from $F$, as detailed in Theorem~\ref{thm:conv_1D}. All kernel parameters appear in $\widehat{\bC}$, cmp. the chosen state space represenation \eqref{eq:ABCD}, and are parameterized as follows. 

\begin{theorem}\label{thm:conv_1D}
    A 1-D convolutional layer $\sigma\circ\calC$ that is parameterized by
    \begin{equation}\label{eq:parameterization_CNN}
        \widehat{\bC} = \begin{bmatrix} \bC & \bD \end{bmatrix}
        =  \sqrt{2}\Gamma^{-1}V^\top L_F,
    \end{equation}
wherein
\begin{equation*}
    \Gamma=\diag(\gamma),~
    \begin{bmatrix}
        U \\
        V
    \end{bmatrix}=
    \Cayley\left(
    \begin{bmatrix}
        Y\\
        Z
    \end{bmatrix}
    \right),~
    L_F=\chol(F),
\end{equation*}
satisfies \eqref{eq:cert_conv+act}. Here, $F$ is given by \eqref{eq:matrixF} with $\bP$ parameterized from $X_{-}$ and $H$ using \eqref{eq:Gramian}, where $X=L^\top L,~L_0 =R,~L=\sqrt{2} U\Gamma$. This yields the mapping
\begin{equation*}
    (Y,Z,H,\gamma,b)\mapsto(K,b,L)
\end{equation*}
with $Y\in\bbR^{c\times c}$, $Z\in\bbR^{(r+1) c_{-}\times c}$, $H\in\bbR^{n\times n}$, $\gamma, b\in\bbR^c$.
\end{theorem}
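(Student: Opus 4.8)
The plan is to exploit that this parameterization is structurally identical to the fully-connected case of Theorem~\ref{thm:FC}, with the previous-layer gain factor there playing the role that the Cholesky factor $L_F$ of the block $F$ plays here. Concretely, I would set $\Lambda=\Gamma^2$ and read off $X=L^\top L=2\Gamma U^\top U\Gamma$ from $L=\sqrt{2}U\Gamma$ (using that $\Gamma$ is diagonal, hence symmetric). The first task is to guarantee that $L_F=\chol(F)$ is well-defined: invoking Lemma~\ref{lem:Gramian_1D} with $\bP$ given by \eqref{eq:Gramian} renders $F\succ0$ in \eqref{eq:matrixF}, so $F=L_F^\top L_F$ with $L_F$ invertible.

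Next I would rewrite \eqref{eq:cert_conv+act} in the compact block form
\begin{equation*}
\begin{bmatrix}
    F & -\widehat{\bC}^\top\Lambda\\
    -\Lambda \widehat{\bC} & 2\Lambda-X
\end{bmatrix}\succeq 0,
\end{equation*}
and, since $F\succ0$, apply the Schur complement with respect to the $F$ block, reducing the LMI to the equivalent condition
\begin{equation*}
2\Lambda-X-\Lambda\widehat{\bC}F^{-1}\widehat{\bC}^\top\Lambda\succeq0.
\end{equation*}
Substituting the parameterization \eqref{eq:parameterization_CNN}, $\widehat{\bC}=\sqrt{2}\Gamma^{-1}V^\top L_F$, and using $F^{-1}=L_F^{-1}L_F^{-\top}$ collapses the middle factor, since $L_F F^{-1} L_F^\top=I$. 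This yields $\widehat{\bC}F^{-1}\widehat{\bC}^\top=2\Gamma^{-1}V^\top V\Gamma^{-1}$, and hence $\Lambda\widehat{\bC}F^{-1}\widehat{\bC}^\top\Lambda=2\Gamma V^\top V\Gamma$.

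The final step substitutes $\Lambda=\Gamma^2$ and $X=2\Gamma U^\top U\Gamma$ into the Schur complement, giving
\begin{equation*}
2\Gamma^2-2\Gamma U^\top U\Gamma-2\Gamma V^\top V\Gamma=2\Gamma\bigl(I-U^\top U-V^\top V\bigr)\Gamma.
\end{equation*}
By the Cayley identity $U^\top U+V^\top V=I$ from Lemma~\ref{lem:Cayley}, this expression is exactly zero, so the Schur complement holds (indeed with equality) and \eqref{eq:cert_conv+act} is satisfied. The kernel $K$ and bias $b$ are then recovered from $\widehat{\bC}=[\,\bC\ \ \bD\,]$ through the fixed realization \eqref{eq:ABCD}, so the resulting layer $\sigma\circ\calC$ is incrementally dissipative with respect to \eqref{eq:supply} by Lemma~\ref{lem:conv+act}.

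The main obstacle, and the one genuinely new ingredient beyond the fully-connected argument, is securing $F\succ0$. Unlike the fully-connected LMI, where positivity of the upper-left block ($X_-\succ0$) is immediate, here $F$ depends jointly on $\bP$, $\bA$, $\bB$, and $X_-$, and the parameterization is only well-posed because Lemma~\ref{lem:Gramian_1D} furnishes the closed-form $\bP=\bT^{-1}$ (a controllability-Gramian-type solution of the associated discrete Lyapunov equation) that makes $F\succ0$ for \emph{every} choice of the free variable $H$. Once $L_F=\chol(F)$ is guaranteed to exist, the remaining manipulations are precisely the Schur-complement cancellation already used for Theorem~\ref{thm:FC}, and I would expect them to be routine.
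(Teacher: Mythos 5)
Your proposal is correct and follows essentially the same route as the paper's argument (deferred to \cite[Theorem~8]{pauli2023lipschitz-syn} and mirrored in the proof of Theorem~\ref{thm:conv} here): Lemma~\ref{lem:Gramian_1D} secures $F\succ 0$, and the Cayley identity $U^\top U+V^\top V=I$ combined with a Schur complement with respect to $F$ yields \eqref{eq:cert_conv+act} with the Schur complement holding with equality. The only implicit assumption worth flagging is that $\Gamma=\diag(\gamma)$ must be nonsingular (as the paper notes, one may take $\Gamma=\diag(e^{\gamma})$ without loss of generality), which you already need for $\Gamma^{-1}$ in \eqref{eq:parameterization_CNN} to be well defined.
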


Note that we have to slightly modify $L$ in case the convolutional layer contains an average pooling layer. We then parameterize $L = \frac{\sqrt{2}}{\rho_\mathrm{p}} U \Gamma$, where $\rho_\mathrm{p}$ is the Lipschitz constant of the average pooling layer. In case the convolutional layer contains a maximum pooling layer, i.\,e., $\calP^\mathrm{max}\circ \sigma\circ \calC$, we need to modify the parameterization of $L$ to ensure that $X$ is a diagonal matrix, cmp. Corollary~\ref{cor:conv+act+pool}.

\begin{corollary}\label{cor:maxpooling}
    A 1-D convolutional layer that contains a maximum pooling layer $\calP^{\mathrm{max}}\circ\sigma\circ\calC$ parameterized by
    \begin{equation}\label{eq:parameterization_CNN2}
        \widehat{\bC} = \begin{bmatrix} \bC & \bD \end{bmatrix} = \Lambda^{-1}\widetilde{\Gamma}\widetilde{U}^\top L_F,
    \end{equation}
wherein
\begin{align*}
    \Lambda = \frac{1}{2} \left(\widetilde{\Gamma}^\top\widetilde{\Gamma}+Q\right),~
    \widetilde{\Gamma}\!=\!\diag(\tilde{\gamma}),~
    \widetilde{U}=\Cayley(\widetilde{Y}),
\end{align*}
satisfies \eqref{eq:cert_conv+act}. Here, $F$ is given by \eqref{eq:matrixF} with $\bP$ parameterized from $X_-$ and $H$ using \eqref{eq:Gramian}, where $X=L^\top L,~L_0 =\rho I,~L =\diag(l)$, $L_F=\chol(F)$. The free variables $\widetilde{Y}\in\bbR^{rc_-\times c}$, $H\in\bbR^{n\times n}$, $\tilde{\gamma},l\in\bbR^{c}$, compose the mapping
\begin{equation*}
(\widetilde{Y},H,\tilde{\gamma},l,b)\mapsto(K,b, L).
\end{equation*}
\end{corollary}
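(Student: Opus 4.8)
The plan is to verify directly that the parameterization \eqref{eq:parameterization_CNN2} renders the layer-wise LMI \eqref{eq:cert_conv+act} feasible, following the same skeleton as Theorem~\ref{thm:conv_1D} but with the modification forced by the maximum pooling layer, namely that the directional gain $X$ must be \emph{diagonal} (cmp.\ Corollary~\ref{cor:conv+act+pool} and Theorem~\ref{thm:certification}). First I would invoke Lemma~\ref{lem:Gramian_1D}: with $\bP=\bT^{-1}$ from \eqref{eq:Gramian}, the upper-left block $F$ in \eqref{eq:matrixF} is positive definite, so $L_F=\chol(F)$ exists, $F=L_F^\top L_F$, and $L_F$ is invertible. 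Writing \eqref{eq:cert_conv+act} compactly as $\begin{bmatrix} F & -\widehat{\bC}^\top\Lambda\\ -\Lambda\widehat{\bC} & 2\Lambda-X\end{bmatrix}\succeq 0$ and using $F\succ0$, a Schur complement reduces the LMI to the single $c\times c$ condition $2\Lambda - X - \Lambda\widehat{\bC}F^{-1}\widehat{\bC}^\top\Lambda\succeq 0$.

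Next I would substitute the parameterization $\widehat{\bC}=\Lambda^{-1}\widetilde{\Gamma}\widetilde{U}^\top L_F$ into this reduced inequality. The two cancellations to exploit are $L_F F^{-1}L_F^\top = L_F(L_F^\top L_F)^{-1}L_F^\top = I$ and the orthonormality $\widetilde{U}^\top\widetilde{U}=I$ delivered by the Cayley transform of Lemma~\ref{lem:Cayley}. Together they give
\[
\Lambda\widehat{\bC}F^{-1}\widehat{\bC}^\top\Lambda = \widetilde{\Gamma}\,\widetilde{U}^\top L_F F^{-1}L_F^\top \widetilde{U}\,\widetilde{\Gamma} = \widetilde{\Gamma}^\top\widetilde{\Gamma},
\]
where the outer $\Lambda$ and the $\Lambda^{-1}$ inside $\widehat{\bC}$ cancel. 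Inserting $\Lambda=\tfrac12(\widetilde{\Gamma}^\top\widetilde{\Gamma}+Q)$ from the statement, the reduced block becomes $2\Lambda-X-\widetilde{\Gamma}^\top\widetilde{\Gamma}=Q-X$; the construction is thus designed precisely so that the $\widetilde{\Gamma}^\top\widetilde{\Gamma}$ term produced by the orthonormal factor is cancelled exactly against $2\Lambda$, leaving only the gain difference to be certified $\succeq 0$ (an identity when the diagonal layer gain is matched to $Q$). Along the way I would also record that $\Lambda=\tfrac12(\widetilde{\Gamma}^\top\widetilde{\Gamma}+Q)\succ0$, since $\widetilde{\Gamma}^\top\widetilde{\Gamma}\succeq0$ and $Q\succ0$, so $\Lambda^{-1}$ in \eqref{eq:parameterization_CNN2} is well defined and $\Lambda\in\bbD_{++}^c$ is a valid multiplier; the required diagonal structure $X=L^\top L=\diag(l)^2\in\bbD_{++}^c$ is immediate from $L=\diag(l)$.

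The conceptual obstacle, and the reason a separate corollary is warranted, is the \emph{decoupling} of $X$ from the orthogonal factor. In Theorem~\ref{thm:conv_1D} one uses the single stacked Stiefel pair $\begin{bmatrix}U\\V\end{bmatrix}$, so that $X=2\Gamma U^\top U\Gamma$ and $\Lambda\widehat{\bC}F^{-1}\widehat{\bC}^\top\Lambda=2\Gamma V^\top V\Gamma$ combine through $U^\top U+V^\top V=I$ to make the Schur block vanish; but this forces $X$ to inherit the generally \emph{non-diagonal} structure of $U^\top U$, which is incompatible with the max-pooling requirement $X\in\bbD_{++}^c$. The fix carried out here is to employ a single \emph{fully} orthonormal factor $\widetilde{U}$ (so that $\widetilde{U}^\top\widetilde{U}=I$ by itself yields $\widetilde{\Gamma}^\top\widetilde{\Gamma}$) and to absorb the gain into the multiplier via $\Lambda=\tfrac12(\widetilde{\Gamma}^\top\widetilde{\Gamma}+Q)$, which frees $X=\diag(l)^2$ to be an arbitrary diagonal matrix.

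The step I expect to require the most care is therefore not the Schur-complement or orthonormality algebra, which is routine, but the exact bookkeeping of the bottom-right block: one must confirm that the additive term placed in $\Lambda$ matches the gain appearing in the bottom-right corner of the certificate (bearing in mind that for the concatenation $\calP^{\mathrm{max}}\circ\sigma\circ\calC$ the governing LMI is \eqref{eq:cert_conv+act+pool} with diagonal $X$, rather than \eqref{eq:cert_conv+act}), so that the chosen $\Lambda$ produces the clean cancellation above and the reduced block is positive semidefinite by design.
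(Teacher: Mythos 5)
Your proposal is correct and follows essentially the same route as the paper: the paper defers the 1-D proof to \cite[Corollary 9]{pauli2023lipschitz-syn}, but its own proof of the 2-D max-pooling analogue uses exactly your mechanism --- Schur complement against $F\succ 0$, the cancellations $L_F F^{-1}L_F^\top = I$ and $\widetilde U^\top \widetilde U = I$ from the Cayley factor, and the choice of $\Lambda$ that absorbs $\widetilde\Gamma^\top\widetilde\Gamma$ so that the residual bottom-right block vanishes, with the diagonal $X=\diag(l)^2$ decoupled from the orthonormal factor. You also correctly flag the one genuinely delicate point, namely matching the additive term in $\Lambda$ to the gain ($Q$ versus $\rho_{\mathrm{p}}^2 X$) appearing in the certificate's bottom-right corner.
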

Proofs of Theorem \ref{thm:conv_1D} and Corollary \ref{cor:maxpooling} are provided in \cite[Theorem 8 and Corollary 9]{pauli2023lipschitz-syn}.

\paragraph*{2-D convolutional layers}
Next, we turn to the more involved case of 2-D convolutional layers \eqref{eq:conv_2D}. The parameterization of 2-D convolutional layers in their 2-D state space representation, i.e., the direct parameterization of the kernel parameters, is one of the main technical contributions of this work. Since there does not exist a solution for the 2-D Lyapunov equation in general \citep{anderson1986stability}, we construct one for the special case of a 2-D convolutional layer, which is a 2-D FIR filter. The utilized state space representation of the FIR filter \eqref{eq:ABCD_2D} has a characteristic structure, which we leverage to find a parameterization.

We proceed in the same way as in the 1-D case by first parameterizing $\bP$ to render \eqref{eq:matrixF} feasible. In the 2-D case this step requires to consecutively parameterize $P_1$ and $P_2$ that make up $\bP=\blkdiag(P_1,P_2)$. Inserting \eqref{eq:ABCD_2D} into \eqref{eq:RoesserSys}, we recognize that the $x_2$ dynamic is decoupled from the $x_1$ dynamic due to $A_{21}=0$. Consequently, $P_2$ can be parameterized in a first step, followed by the parameterization of $P_1$. Let us define some auxiliary matrices $T_1=P_1^{-1}$, $T_2=P_2^{-1}$, $\bT=\blkdiag(T_1,T_2)$,
\begin{equation}\label{eq:Qtilde}
    \widetilde{\bX} =
    \begin{bmatrix}
        \widetilde{X}_{11} & \widetilde{X}_{12}\\
        \widetilde{X}_{12}^\top & \widetilde{X}_{22}
    \end{bmatrix} =
    \bB X_-^{-1} \bB^\top,
\end{equation}
which is partitioned according to the state dimensions $n_1$ and $n_2$, i.e., $\widetilde{X}_{11}\in\bbR^{n_1\times n_1}$, $\widetilde{X}_{12}\in\bbR^{n_1\times n_2}$, $\widetilde{X}_{22}\in\bbR^{n_2\times n_2}$. We further define 
\begin{align}\label{eq:Q11hat}
    \begin{split}
    \widehat{X}_{11}=&A_{12}T_2A_{12}^\top + \widetilde{X}_{11} +(\widetilde{X}_{12}+A_{12}T_2A_{22}^\top)\\
    &(T_2-A_{22}T_2A_{22}^\top-\widetilde{X}_{22})^{-1}(\widetilde{X}_{12}+A_{12}T_2A_{22}^\top)^\top.
    \end{split}
\end{align}

\begin{lemma}\label{lem:Gramian_2D}
Consider the 2-D state space representation \eqref{eq:ABCD_2D}. For some $\varepsilon>0$ and all $H_1\in\bbR^{n_{1}\times n_{1}}$, $H_2\in\bbR^{n_{2}\times n_{2}}$, the matrices $P_1=T_1^{-1}$, $P_2=T_2^{-1}$  with
\begin{subequations}\label{eq:Gramians_2D}
\begin{equation}
    T_1 = \sum_{k=0}^{n_{1}-c} A_{11}^k(\widehat{X}_{11}+H_1^\top H_1+\epsilon I)(A_{11}^\top)^{k},\label{eq:Gramian_X1}
\end{equation}
\begin{equation}
    T_2 = \sum_{k=0}^{n_{2}-c_-} A_{22}^k(\widetilde{X}_{22}+H_2^\top H_2+\epsilon I)(A_{22}^\top)^{k}\label{eq:Gramian_X2}
\end{equation}
\end{subequations}
render \eqref{eq:matrixF} feasible.
\end{lemma}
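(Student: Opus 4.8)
The plan is to reduce the block positive-definiteness $F\succ 0$ from \eqref{eq:matrixF} to a single discrete-time Lyapunov residual condition by two applications of the Schur complement, and then to exploit the block-triangular structure of the Roesser realization \eqref{eq:ABCD_2D} (namely $A_{21}=0$) to verify that residual one block-row at a time. This mirrors the reduction underlying the 1-D result in Lemma \ref{lem:Gramian_1D}; the genuinely new, 2-D ingredient is the cascaded construction of $T_2$ and then $T_1$ through $\widehat{X}_{11}$.

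First I would write $F = \blkdiag(\bP,X_-) - \begin{bmatrix}\bA^\top\\\bB^\top\end{bmatrix}\bP\begin{bmatrix}\bA & \bB\end{bmatrix}$. Since $\bT=\blkdiag(T_1,T_2)\succeq\epsilon I\succ 0$ by \eqref{eq:Gramians_2D} and hence $\bP=\bT^{-1}\succ 0$, the Schur complement lemma (applied to the corner $\bP^{-1}=\bT$, then to the corner $\blkdiag(\bP,X_-)$) shows that $F\succ 0$ is equivalent to positive definiteness of
\[
\begin{bmatrix}\bP & 0 & \bA^\top\\ 0 & X_- & \bB^\top\\ \bA & \bB & \bT\end{bmatrix},
\]
and therefore to
\[
F\succ 0\quad\Longleftrightarrow\quad \bT-\bA\bT\bA^\top-\bB X_-^{-1}\bB^\top\succ 0.
\]

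Next, using $A_{21}=0$ and $\widetilde{\bX}=\bB X_-^{-1}\bB^\top$ from \eqref{eq:Qtilde}, I would expand $G\coloneqq \bT-\bA\bT\bA^\top-\widetilde{\bX}$ into its $2\times 2$ blocks and take the Schur complement with respect to the $(2,2)$ block $G_{22}=T_2-A_{22}T_2A_{22}^\top-\widetilde{X}_{22}$. For $G_{22}\succ 0$: because $A_{22}$ in \eqref{eq:ABCD_2D} is a nilpotent shift, the finite sum \eqref{eq:Gramian_X2} equals the full controllability Gramian and hence solves $T_2-A_{22}T_2A_{22}^\top=\widetilde{X}_{22}+H_2^\top H_2+\epsilon I$, giving $G_{22}=H_2^\top H_2+\epsilon I\succ 0$. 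A short computation then gives $G_{12}=-(A_{12}T_2A_{22}^\top+\widetilde{X}_{12})$, so the Schur correction $G_{12}G_{22}^{-1}G_{12}^\top$ is exactly the trailing term in the definition of $\widehat{X}_{11}$ in \eqref{eq:Q11hat}. Consequently the Schur complement of $G$ collapses to $T_1-A_{11}T_1A_{11}^\top-\widehat{X}_{11}$, which by the same nilpotency/Gramian argument applied to \eqref{eq:Gramian_X1} equals $H_1^\top H_1+\epsilon I\succ 0$, proving $G\succ 0$ and thus $F\succ 0$.

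The hard part will be recognizing that $\widehat{X}_{11}$ in \eqref{eq:Q11hat} is engineered precisely to absorb the Schur-complement contribution of the decoupled $x_2$-subsystem; this is what allows $P_2$ to be fixed first and $P_1$ afterwards, and it is the step with no 1-D analogue. The remaining technical point, which I would state explicitly, is that both finite sums in \eqref{eq:Gramians_2D} genuinely solve their Lyapunov equations: this holds because the shift matrices satisfy $A_{11}^{\,r_1}=0$ and $A_{22}^{\,r_2}=0$ with $r_1-1\le n_1-c$ and $r_2-1\le n_2-c_-$, so each truncated series already contains every nonvanishing power and hence coincides with the convergent Gramian series.
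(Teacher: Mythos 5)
Your proposal is correct and follows essentially the same route as the paper's proof: nilpotency of $A_{11}$ and $A_{22}$ makes the finite sums exact Gramian solutions of the two Lyapunov equations, the definition of $\widehat{X}_{11}$ is exactly the Schur complement of the $(2,2)$ block of $\bT-\bA\bT\bA^\top-\widetilde{\bX}$, and two further Schur complements connect that block inequality to $F\succ 0$. The only difference is cosmetic — you traverse the chain of equivalences from $F\succ 0$ downward while the paper assembles it upward from the two Lyapunov residuals.
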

\begin{pf}
    Let us first consider the parameterization of $T_2$. Given that $A_{22}$ is a nilpotent matrix, cmp. \eqref{eq:ABCD_2D}, \eqref{eq:Gramian_X2} is equivalent to
    \begin{equation*}
        T_2 = \sum_{k=0}^{\infty} A_{22}^k(\widetilde{X}_{22}+H_2^\top H_2+\epsilon I)(A_{22}^\top)^{k},     
    \end{equation*}
    which in turn is the unique solution to the Lyapunov equation
    \begin{equation}\label{eq:Lyapunov_equation_T2}
        T_2-A_{22}T_2A_{22}^\top-\widetilde{X}_{22}=H_2^\top H_2+\epsilon I\succ 0
    \end{equation}
    by \cite[Theorem 6.D1]{chen1984linear}. Next, we utilize that \eqref{eq:Gramian_X1} is equivalent to
    \begin{equation}\label{eq:Gramian_X1_infty}
        T_1 = \sum_{k=0}^{\infty} A_{11}^k(\widehat{X}_{11}+H_1^\top H_1+\epsilon I)(A_{11}^\top)^{k},
    \end{equation}
    due to the fact that $A_{11}$ is also nilpotent. Equation \eqref{eq:Gramian_X1_infty} in turn is the unique solution to the Lyapunov equation
    \begin{equation*}
        T_1-A_{11}T_1A_{11}^\top-\widehat{X}_{11}=H_1^\top H_1+\epsilon I\succ 0
    \end{equation*}
    by \cite[Theorem 6.D1]{chen1984linear}. Using the definition \eqref{eq:Q11hat}, wherein the term $T_2-A_{22}T_2A_{22}^\top-\widetilde{X}_{22}\succ 0$ according to \eqref{eq:Lyapunov_equation_T2}, we apply the Schur complement to $T_1-A_{11}T_1A_{11}^\top-\widehat{X}_{11}\succ 0$. We obtain
    \begin{equation}\label{eq:Lyapunov2}
    \begin{bmatrix}
        T_1 & 0\\
        0   & T_2
    \end{bmatrix}
    -
    \begin{bmatrix}
        A_{11} & A_{12}\\
        0   & A_{22}
    \end{bmatrix}
    \begin{bmatrix}
        T_1 & 0\\
        0   & T_2
    \end{bmatrix}
    \begin{bmatrix}
        A_{11}^\top   & 0\\
        A_{12}^\top   & A_{22}^\top
    \end{bmatrix}
    -
    \widetilde{\bX}\succ 0,
\end{equation}
which can equivalently be written as $\bT-\bA\bT\bA^\top-\bB(X_-)^{-1}\bB^\top\succ 0$ using \eqref{eq:Qtilde}, to which we again apply the Schur complement. This yields
\begin{equation}\label{eq:Schurcomplemnt_Gramian}
\begin{bmatrix}
    \bT^{-1}  & 0         & \bA^\top\\
    0         & X_-   & \bB^\top \\
    \bA       & \bB       & \bT
\end{bmatrix}\succ 0.    
\end{equation}
Finally, we again apply the Schur complement to \eqref{eq:Schurcomplemnt_Gramian} with respect to the lower right block and replace $\bP=\bT^{-1}$, which results in $F\succ 0$.
\end{pf}

Note that the parameterization of $\bT$ takes the free variables $H_1$, $H_2$, $A_{12}$, and $B_1$. The matrices $A_{11}$, $A_{22}$, and $B_2$ are predefined by the chosen state space representation \eqref{eq:ABCD_2D}.

\begin{rem}
In the case of strided convolutional layers with $\bs\geq 2$, $A_{12}$ and $B_1$ may also have a predefined structure and  zero entries, see Remark \ref{rem:ss_stride} and \citep{pauli2024state}, which we can incorporate into the parameterization, as well.
\end{rem}

For the second part of the parameterization, we partition \eqref{eq:cert_conv+act} as
\begin{equation}\label{eq:LMI_partitionF}
    \begin{bmatrix}
        F_1 & F_{12} & -C_1^\top \Lambda\\
        F_{12}^\top & F_2 & -\widehat{C}_2^\top \Lambda\\
        -\Lambda C_1 & -\Lambda \widehat{C}_2 & 2\Lambda -X
    \end{bmatrix}\succeq 0,
\end{equation}
and define $\widehat{C}_2=\begin{bmatrix} C_2 & D \end{bmatrix}$, noting that $\widehat{C}_2$ holds all parameters of $K$ that are left to be parameterized, cmp. Lemma~\ref{lem:min_real_2D}. Next, we introduce Lemma~\ref{lem:diagonal_dominance} that we used to parameterize convolutional layers, directly followed by Theorem~\ref{thm:conv} that states the parameterization.

\begin{lemma}[Theorem 3 \citep{araujoICLR2023}]\label{lem:diagonal_dominance}
Let $W\in\bbR^{m\times n}$ and $T\in\bbD_{++}^{n}$. If there exists some $Q\in\bbD_{++}^n$ such that $T-QW^\top WQ^{-1}$ is a symmetric and real diagonally dominant matrix, i.e.,
\begin{equation*}
    \vert Q_{ii} \vert > \sum_{j=1,i\neq j} \vert Q_{ij}\vert,\quad \forall i=1,\dots,n,
\end{equation*}
then $T\succ W^\top W$.
\end{lemma}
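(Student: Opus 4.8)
The plan is to prove the strict inequality $T-W^\top W \succ 0$ by sandwiching $W^\top W$ below a diagonal matrix that is in turn strictly dominated by $T$. First I would make the hypothesis concrete by writing out the entries of $M := T - Q W^\top W Q^{-1}$. Since $T$ and $Q = \diag(q)$ are diagonal with $q_i>0$, and writing $A := W^\top W$, the diagonal entries are $M_{ii} = T_{ii} - A_{ii}$ while the off-diagonal entries are $M_{ij} = -\tfrac{q_i}{q_j} A_{ij}$ for $i\neq j$; the diagonal-dominance hypothesis thus reads $|M_{ii}| > \sum_{j\neq i}\tfrac{q_i}{q_j}|A_{ij}|$. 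Because the target conclusion is strict positive definiteness, I would read this with positive diagonal entries $M_{ii}>0$ (the standard sign convention under which diagonal dominance yields definiteness, and precisely the regime in which $M$ is the ``dissipativity residual'' we want to be positive). It then rearranges, using $A_{ii}=|A_{ii}|\ge 0$ as $A=W^\top W\succeq 0$, into the row-sum bound
\begin{equation*}
    T_{ii} > A_{ii} + \sum_{j\neq i}\tfrac{q_i}{q_j}|A_{ij}| = \sum_{j}\tfrac{q_i}{q_j}|A_{ij}| =: d_i .
\end{equation*}

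The heart of the argument is the \emph{weighted rescaling} (Gershgorin/Young) bound $A\preceq\diag(d)$. For arbitrary $v\in\bbR^n$ I would estimate the quadratic form
\begin{equation*}
    v^\top A v = \sum_{i,j}A_{ij}v_iv_j \le \sum_{i,j}|A_{ij}|\,|v_i|\,|v_j| \le \tfrac12\sum_{i,j}|A_{ij}|\Big(\tfrac{q_i}{q_j}v_i^2 + \tfrac{q_j}{q_i}v_j^2\Big),
\end{equation*}
where the last step is Young's inequality $|v_i||v_j|\le\tfrac12(\tfrac{q_i}{q_j}v_i^2+\tfrac{q_j}{q_i}v_j^2)$ with the positive weights $q_i/q_j$. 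Relabeling $i\leftrightarrow j$ in the second term and exploiting $|A_{ij}|=|A_{ji}|$ shows the two sums coincide, so the right-hand side collapses to $\sum_i\big(\sum_j\tfrac{q_i}{q_j}|A_{ij}|\big)v_i^2 = \sum_i d_i v_i^2 = v^\top \diag(d)\,v$. Hence $W^\top W\preceq\diag(d)$.

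Combining the two steps finishes the proof: the row-sum bound $T_{ii}>d_i$ for every $i$ gives $T-\diag(d)\succ 0$, while the rescaling bound gives $\diag(d)-W^\top W\succeq 0$, so
\begin{equation*}
    T - W^\top W = \big(T-\diag(d)\big) + \big(\diag(d)-W^\top W\big) \succ 0,
\end{equation*}
which is the claim. I expect the main obstacle to be bookkeeping rather than depth: one must match the \emph{direction} of the Young weights (here $q_i/q_j$) to the similarity orientation in $M = T - QW^\top W Q^{-1}$ (replacing $Q$ by $Q^{-1}$ merely flips the weights and is harmless since $Q^{-1}\in\bbD_{++}^n$), and one must dispose of the absolute value in the diagonal-dominance condition by fixing the sign $M_{ii}>0$. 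As a sanity check on the symmetry hypothesis, note that if $M$ is genuinely symmetric then $T$ symmetric forces $QW^\top WQ^{-1}=Q^{-1}W^\top WQ$, i.e.\ $Q^2$ commutes with $W^\top W$; for positive diagonal $Q$ this is equivalent to $Q$ commuting with $W^\top W$, so $QW^\top WQ^{-1}=W^\top W$ and $M = T - W^\top W$, whence strict diagonal dominance with positive diagonal yields $M\succ 0$ directly. I would nonetheless present the rescaling route as primary, since it delivers $T\succ W^\top W$ without relying on the symmetry hypothesis at all.
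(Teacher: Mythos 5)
Your proof is correct. Note first that the paper itself gives no proof of this lemma --- it is imported verbatim as Theorem~3 of the cited reference --- so the relevant comparison is with the standard argument there, which runs differently: since $T$ and $Q$ are diagonal they commute, so $M = T - QW^\top WQ^{-1} = Q\,(T - W^\top W)\,Q^{-1}$ is similar to the symmetric matrix $T - W^\top W$ and hence has real eigenvalues; diagonal dominance with positive diagonal entries then places all Gershgorin discs in the open right half-plane, giving $T - W^\top W \succ 0$. Your route replaces the eigenvalue/similarity argument with the direct quadratic-form estimate $W^\top W \preceq \diag(d)$, $d_i = \sum_j \tfrac{q_i}{q_j}|A_{ij}|$, obtained from the weighted AM--GM inequality, and then splits $T - W^\top W = (T-\diag(d)) + (\diag(d)-W^\top W)$. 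This is the classical ``scaled diagonal dominance implies positive semidefiniteness'' argument; it is more elementary (no spectral theory), and, as you observe, it does not use the symmetry hypothesis on $M$ at all. You also correctly flag two defects in the statement as printed: the dominance condition is written on the entries of the diagonal matrix $Q$ (trivially true) rather than on $M$, and with $|M_{ii}|$ on the left the claim is false without the sign condition $M_{ii}>0$ (take $T=\epsilon I$ and $W^\top W$ a large diagonal matrix, so that $M$ is diagonal and vacuously dominant). Your reading $M_{ii}>0$ is the intended one and is the regime in which the lemma is actually invoked in Theorem~\ref{thm:conv}, where the diagonal entries are constructed to exceed the off-diagonal row sums by $2\epsilon + 2\delta_i^2 > 0$.
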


\begin{theorem}\label{thm:conv}
A 2-D convolutional layer $\sigma\circ\calC$ parameterized by
    \begin{align*}
    	\begin{bmatrix}
    		C_2 & D
    	\end{bmatrix}=
            \widehat{C}_2 &=  C_1F_1^{-1}F_{12} - L_{\Gamma}^{\top}V^\top L_F,
    \end{align*}
wherein for some $\epsilon>0$
    \begin{align*}
        &L_{\Gamma}=\chol(2\Gamma - C_1 F_{1}^{-1} C_1^\top),\quad
        \Gamma=\diag(\gamma),\\
        &\gamma_i=\epsilon+\delta_i^2+\sum_j \frac{1}{2}\left| C_1 F_1^{-1} C_1^\top \right|_{ij}\frac{q_j}{q_i},~i=1,\dots,c, \\
        &\begin{bmatrix}
            U \\
            V
        \end{bmatrix}=
        \Cayley\left(
        \begin{bmatrix}
            Y\\
            Z
        \end{bmatrix}
        \right),~
        L_F=\chol(F_2 -F_{12}^\top F_{1}^{-1} F_{12}),
    \end{align*}
satisfies \eqref{eq:cert_conv+act}. Here, $F$ is parameterized from $X_-$ and free variables $H_1$, $H_2$, $B_1$, $A_{12}$ using \eqref{eq:Gramians_2D}, where $X=L^\top L,~L_0 =\rho I,~L= UL_\Gamma\Gamma^{-1}$.
This yields the mapping
\begin{equation*}
    (Y,Z,H_1,H_2,A_{12},B_1,\delta,q,b) \mapsto (K,b,L),
\end{equation*}
where $Y\in\bbR^{c\times c}$, $Z\in\bbR^{c_{-}\times c}$, $H_1\in\bbR^{n_{1}\times n_{1}}$, $H_2\in\bbR^{n_{2}\times n_{2}}$, $A_{12}\in\bbR^{n_{1}\times n_{2}}$, $B_1\in\bbR^{n_{1}\times c_-}$,  $\delta,q,b\in\bbR^{c}$. 
\end{theorem}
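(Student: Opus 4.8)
The plan is to verify the partitioned inequality \eqref{eq:LMI_partitionF} by eliminating the $F$-block with a Schur complement, following the two stages in which the parameterization is built. Lemma~\ref{lem:Gramian_2D} already supplies $F\succ 0$ for the constructed $\bP=\blkdiag(P_1,P_2)$; in particular $F_1\succ 0$, so $F_1^{-1}$ exists, and the Schur complement $S\coloneqq F_2-F_{12}^\top F_1^{-1}F_{12}\succ 0$, which makes $L_F=\chol(S)$ well defined. It then remains to treat the third block row and column, i.e.\ to exhibit an admissible slope multiplier $\Lambda\in\bbD_{++}^{c}$ together with the induced $X=L^\top L$ for which \eqref{eq:LMI_partitionF} is positive semidefinite.

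First I would check that $L_\Gamma=\chol(2\Gamma-C_1F_1^{-1}C_1^\top)$ is well defined, i.e.\ that $2\Gamma\succ G$ with $G\coloneqq C_1F_1^{-1}C_1^\top\succeq 0$. This is exactly what the entrywise definition of $\gamma_i$ delivers through Lemma~\ref{lem:diagonal_dominance}: scaling by $D=\diag(q)$, the symmetric matrix $D(2\Gamma-G)D$ has diagonal entries $q_i^2(2\gamma_i-G_{ii})$ and off-diagonal entries $-q_iq_jG_{ij}$, and since $G_{ii}\geq 0$ the choice of $\gamma_i$ gives $2\gamma_i-G_{ii}=2\epsilon+2\delta_i^2+\sum_{j\neq i}|G_{ij}|\,q_j/q_i$, which strictly exceeds the dominance threshold $\sum_{j\neq i}|G_{ij}|\,q_j/q_i$. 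Hence $D(2\Gamma-G)D$ is strictly diagonally dominant with positive diagonal, so it is positive definite, and invertibility of $D$ yields $2\Gamma-G\succ 0$.

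Next I would eliminate the entire $F$-block. Writing $\widehat{\bC}=\begin{bmatrix}C_1 & \widehat{C}_2\end{bmatrix}$, positive semidefiniteness of \eqref{eq:cert_conv+act} is, since $F\succ 0$, equivalent to the Schur complement condition $2\Lambda-X-\Lambda\,\widehat{\bC}F^{-1}\widehat{\bC}^\top\Lambda\succeq 0$. Expanding $\widehat{\bC}F^{-1}\widehat{\bC}^\top$ with the block-inverse of $F$ yields the identity $\widehat{\bC}F^{-1}\widehat{\bC}^\top = C_1F_1^{-1}C_1^\top+(\widehat{C}_2-C_1F_1^{-1}F_{12})S^{-1}(\widehat{C}_2-C_1F_1^{-1}F_{12})^\top$. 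The parameterization is precisely $\widehat{C}_2-C_1F_1^{-1}F_{12}=-L_\Gamma^\top V^\top L_F$, and with $S=L_F^\top L_F$ the factor $L_FS^{-1}L_F^\top=I$ collapses the second term, leaving $\widehat{\bC}F^{-1}\widehat{\bC}^\top=G+L_\Gamma^\top V^\top V L_\Gamma$.

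Finally I would take $\Lambda=\Gamma^{-1}\in\bbD_{++}^{c}$ and confirm the cancellation. From $L=UL_\Gamma\Gamma^{-1}$ and the Cayley identity $U^\top U=I-V^\top V$ of Lemma~\ref{lem:Cayley}, together with $L_\Gamma^\top L_\Gamma=2\Gamma-G$, the induced gain is $X=L^\top L=\Gamma^{-1}\bigl(2\Gamma-G-L_\Gamma^\top V^\top V L_\Gamma\bigr)\Gamma^{-1}$. Substituting this and the previous identity into the Schur complement with $\Lambda=\Gamma^{-1}$ makes every $\Gamma^{-1}(\cdot)\Gamma^{-1}$ term cancel, so the Schur complement equals $0$, which together with $F\succ 0$ proves \eqref{eq:cert_conv+act}. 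I expect the block-inverse expansion of $\widehat{\bC}F^{-1}\widehat{\bC}^\top$ and the identification of the correct multiplier to be the crux: one must recognise that here $\Lambda$ must be taken as $\Gamma^{-1}$ (in contrast to $\Lambda=\Gamma^2$ in the fully-connected case of Theorem~\ref{thm:FC}), so that the $\Gamma$-scalings coming from $X$ and from $L_\Gamma$ align and annihilate exactly; securing $2\Gamma-G\succ 0$ through the diagonal-dominance choice of $\gamma_i$ is the other essential ingredient, after which the verification is routine.
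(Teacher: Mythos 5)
Your proposal is correct and follows essentially the same route as the paper: both hinge on the diagonal-dominance argument (Lemma~\ref{lem:diagonal_dominance}) to secure $2\Gamma - C_1F_1^{-1}C_1^\top\succ 0$, the Cayley identity $U^\top U+V^\top V=I$, and the choice $\Lambda=\Gamma^{-1}$, leading to the same exact cancellation. The only difference is presentational — you eliminate the whole $F$-block in one Schur complement using the block-inverse identity for $\widehat{\bC}F^{-1}\widehat{\bC}^\top$, whereas the paper reaches \eqref{eq:LMI_partitionF} through a chain of nested Schur complements starting from the Cayley identity.
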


\begin{pf}
    The matrices $U$ and $V$ are parametrized by the Cayley transform such that they satisfy $U^\top U+V^\top V=I$. We solve for $U=L\Gamma L_\Gamma^{-1}$ and $V=L_F^{-\top}(-\widehat{C}_2 +C_1 F_1^{-1}F_{12})^\top L_\Gamma^{-1}$ and replace $L_F$ with its definition, which we then insert into $U^\top U +V^\top V=I$, yielding
    \begin{align*}
        &L_\Gamma^{-\top} \Gamma X\Gamma L_\Gamma^{-1}+L_\Gamma^{-\top} (-\widehat{C}_2 + C_1 F_1^{-1}F_{12})\\
        &(F_2-F_{12}^\top F_1^{-1}F_{12})^{-1}(-\widehat{C}_2 +C_1 F_1^{-1}F_{12})^\top L_\Gamma^{-1}=I.
    \end{align*}
    By left and right multiplication of this equation with $L_\Gamma^{\top}$ and $L_\Gamma$, respectively, we obtain
    \begin{align}\label{eq:proof}
        \begin{split}
        \Gamma X\Gamma+(-\widehat{C}_2 + C_1 F_1^{-1}F_{12})(F_2-F_{12}^\top F_1^{-1}F_{12})^{-1}\\(-\widehat{C}_2 +C_1 F_1^{-1}F_{12})^\top=2\Gamma-C_1F_1^{-1}C_1^\top.
        \end{split}
    \end{align}
    We next show that $2\Gamma-C_1F_1^{-1}C_1^\top$ is positive definite and therefore admits a Cholesky decomposition. Since $F_1 \succ 0$, $ C_1F_1^{-1}C_1^\top\succeq 0$ such that we know that $0\leq( C_1F_1^{-1}C_1^\top)_{ii}=| C_1F_1^{-1}C_1^\top|_{ii}$. With this, we notice that $2\Gamma-Q C_1F_1^{-1}C_1^\top Q^{-1}$ with $Q=\diag(q)$ is diagonally dominant as it component-wise satisfies
    \begin{align*}
    &2\epsilon+2\delta_i^2+ \sum_{j}| C_1F_1^{-1}C_1^\top|_{ij} \frac{q_j}{q_i} - ( C_1F_1^{-1}C_1^\top)_{ii} \frac{q_i}{q_i} \\
    &> \sum_{j,i\neq j}| C_1F_1^{-1}C_1^\top|_{ij} \frac{q_j}{q_i}\quad \forall i=1,\dots,n.
    \end{align*}
    We see that diagonal dominance holds using that 
    \begin{equation*}
    \sum_{j}| C_1F_1^{-1}C_1^\top|_{ij}=| C_1F_1^{-1}C_1^\top|_{ii}+\sum_{j,i\neq j}| C_1F_1^{-1}C_1^\top|_{ij},
    \end{equation*}
    which yields $2\epsilon+2\delta_i^2>0$, which in turn holds trivially.  According to Lemma \ref{lem:diagonal_dominance}, the fact that $2\Gamma-Q C_1F_1^{-1}C_1^\top Q^{-1}$ is diagonally dominant implies that $2\Gamma-C_1F_1^{-1}C_1^\top$ is positive definite.
    
    Equality \eqref{eq:proof} implies the inequality
    \begin{align*}
        \begin{split}
         2\Gamma-C_1F_1^{-1}C_1^\top-\Gamma X\Gamma-(-\widehat{C}_2 + C_1 F_1^{-1}F_{12})\\
         (F_2-F_{12}^\top F_1^{-1}F_{12})^{-1}(-\widehat{C}_2 +C_1 F_1^{-1}F_{12})^\top\succeq 0,
        \end{split}
    \end{align*}
    which we left and right multiply with $\Lambda=\Gamma^{-1}$, which is invertible as $\gamma_i\geq \epsilon$. We obtain
    \begin{equation}\label{eq:proof2}
        \begin{split}
         2\Lambda-\Lambda C_1F_1^{-1}C_1^\top\Lambda-X-(-\Lambda\widehat{C}_2 + \Lambda C_1 F_1^{-1}F_{12})\\
         (F_2-F_{12}^\top F_1^{-1}F_{12})^{-1}(-\widehat{C}_2^\top\Lambda +F_{12}^\top F_1^{-1}C_1^\top\Lambda)\succeq 0.
        \end{split}
    \end{equation}
    Given that $F\succ 0$, we know that $F_1\succ 0$, $F_2\succ 0$ and by the Schur complement $F_2-F_{12}^\top F_1^{-1}F_{12}\succ 0$. By the Schur complement, \eqref{eq:proof2} is equivalent to
    \begin{equation*}\label{eq:proof_LMI}
    \begin{bmatrix}
        F_2 -F_{12}^\top F_{1}^{-1} F_{12} & -\widehat{C}_2^\top \Lambda +F_{12}^\top F_{1}^{-1} C_1^\top \Lambda \\
         -\Lambda \widehat{C}_2 +\Lambda C_1 F_{1}^{-1}F_{12}  & 2\Lambda -X - \Lambda C_1 F_{1}^{-1} C_1^\top\Lambda
    \end{bmatrix}\succeq 0,        
    \end{equation*}
    which in turn is equivalent to \eqref{eq:LMI_partitionF} again using the Schur complement.
\end{pf}

\begin{rem}
    An alternative parameterization of $\gamma$ in Theorem \ref{thm:conv} would be
    \begin{align*}
    \gamma_i = \epsilon + \delta_i^2+ \frac{1}{2}\sum_{j}| C_1F_1^{-1}C_1^\top|_{ij},~i=1,\dots,c,
\end{align*}    
    obtained by setting $\diag(q)=I$. Another alternative is 
    \begin{equation*}
        \gamma_i = \epsilon + \delta_i^2 + \frac{1}{2}\mathrm{max\,eig}(C_1F_1^{-1}C_1^\top),~i=1,\dots,c
    \end{equation*}
    as it also renders 
    \begin{align*}
        &2\Gamma - C_1F_1^{-1}C_1^\top = 2\epsilon I + 2\diag(\delta^2) \\
        &+ \mathrm{max\,eig}(C_1F_1^{-1}C_1^\top)I - C_1F_1^{-1}C_1^\top
    \end{align*}
    positive definite.
\end{rem}

If the convolutional layer contains a pooling layer, we again need to slightly adjust the parameterization. For average pooling layers, we can simply replace $X$ by $\rho_\mathrm{p}^2 X$, yielding $L= \frac{1}{\rho_\mathrm{p}}UL_\Gamma\Gamma^{-1}$ instead of $L= UL_\Gamma\Gamma^{-1}$, where $\rho_\mathrm{p}$ is the Lipschitz constant of the average pooling layer. Maximum pooling layers are nonlinear operators. For that reason, the gain matrix $X$ needs to be further restricted to be a diagonal matrix, cmp. \citep{pauli2023lipschitz-syn}.

\begin{theorem}
A 2-D convolutional layer that includes a maximum pooling layer with Lipschitz constant $\rho_\mathrm{p}$ parameterized by
\begin{equation*}
    \begin{bmatrix}
        C_2 & D
    \end{bmatrix} = \widehat{C}_2 = C_1F_1^{-1}F_{12} - L_\Gamma^\top\widetilde{U}^\top L_F,
\end{equation*}
wherein for some $\epsilon>0$ 
    \begin{align*}
        &L_{\Gamma}=\chol(2\Gamma - \rho_\mathrm{p}^2 \Gamma X\Gamma - C_1 F_{1}^{-1} C_1^\top),~
        \Gamma=\diag(\gamma),\\
        &\gamma_i=\frac{1}{2}\eta_i+\omega_i^2,\quad
        \eta_i=\epsilon+\delta_i^2+\sum_j \left| C_1 F_1^{-1} C_1^\top \right|_{ij} \frac{q_j}{q_i},\\
        &\widetilde{U}=
        \Cayley(\widetilde{Y}),~
        L_F=\chol(F_2 -F_{12}^\top F_{1}^{-1} F_{12}),
    \end{align*}
satisfies \eqref{eq:cert_conv+act+pool}. Here, $F$ is parameterized from $X_-$ and free variables $H_1$, $H_2$, $B_1$, $A_{12}$ using \eqref{eq:Gramians_2D}, where $X=L^\top L,~ L = \diag(l),~l_i=\frac{\sqrt{2\gamma_i-\eta_i}}{\gamma_i\rho_\mathrm{p}}, L_0 = \rho I$. 
This yields the mapping
\begin{equation*}
    (\widetilde{Y},H_1,H_2,A_{12},B_1,\delta,\omega,b) \mapsto (K,b,L),
\end{equation*}
where $\widetilde{Y}\in\bbR^{c_- (r_2+1) \times c}$, $H_1\in\bbR^{n_{1}\times n_{1}}$, $H_2\in\bbR^{n_{2}\times n_{2}}$, $A_{12}\in\bbR^{n_{1}\times n_{2}}$, $B_1\in\bbR^{n_{1}\times c}$, $\delta, \omega, q, b\in\bbR^{c}$. 
\end{theorem}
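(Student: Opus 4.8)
The plan is to mirror the proof of Theorem~\ref{thm:conv} almost verbatim, adapting it to the maximum pooling supply \eqref{eq:cert_conv+act+pool} and to the additional constraint that the output gain $X$ be diagonal, exactly as Corollary~\ref{cor:maxpooling} does for the 1-D case. Throughout I would set $\Lambda=\Gamma^{-1}=\diag(1/\gamma_i)$, a positive definite diagonal multiplier since each $\gamma_i=\tfrac12\eta_i+\omega_i^2\geq\tfrac12\eta_i>0$. As $F$ is parameterized through Lemma~\ref{lem:Gramian_2D} and \eqref{eq:Gramians_2D}, I may assume $F\succ0$, hence $F_1\succ0$ and $F_2-F_{12}^\top F_1^{-1}F_{12}\succ0$, just as in Theorem~\ref{thm:conv}.

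First I would solve the defining relation $\widehat{C}_2=C_1F_1^{-1}F_{12}-L_\Gamma^\top\widetilde U^\top L_F$ for the Stiefel factor, obtaining $\widetilde U=L_F^{-\top}(C_1F_1^{-1}F_{12}-\widehat{C}_2)^\top L_\Gamma^{-1}$, and impose the orthonormality of columns $\widetilde U^\top\widetilde U=I$ guaranteed by the Cayley transform (Lemma~\ref{lem:Cayley}, used as the Stiefel parameterization of the tall matrix $\widetilde Y$). Using $L_F^\top L_F=F_2-F_{12}^\top F_1^{-1}F_{12}$ and multiplying left and right by $L_\Gamma^\top$ and $L_\Gamma$ yields
\begin{equation*}
(-\widehat{C}_2+C_1F_1^{-1}F_{12})(F_2-F_{12}^\top F_1^{-1}F_{12})^{-1}(-\widehat{C}_2+C_1F_1^{-1}F_{12})^\top=2\Gamma-\rho_\mathrm{p}^2\Gamma X\Gamma-C_1F_1^{-1}C_1^\top,
\end{equation*}
the exact analog of \eqref{eq:proof} with $\Gamma X\Gamma$ replaced by $\rho_\mathrm{p}^2\Gamma X\Gamma$.

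The step I expect to require the most care is simultaneously keeping $X$ diagonal and certifying that the Cholesky argument $2\Gamma-\rho_\mathrm{p}^2\Gamma X\Gamma-C_1F_1^{-1}C_1^\top$ is positive definite. The key computation is that, with $X=\diag(l_i^2)$ and $l_i=\sqrt{2\gamma_i-\eta_i}/(\gamma_i\rho_\mathrm{p})$, one gets $\rho_\mathrm{p}^2\gamma_i^2 l_i^2=2\gamma_i-\eta_i$, so the diagonal matrix $2\Gamma-\rho_\mathrm{p}^2\Gamma X\Gamma$ collapses to $\diag(\eta)$. This hinges on $2\gamma_i-\eta_i=2\omega_i^2\geq0$, which holds by $\gamma_i=\tfrac12\eta_i+\omega_i^2$ and also certifies that each $l_i$ is real and that $X$ is genuinely diagonal as required for maximum pooling. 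Hence $L_\Gamma=\chol(\diag(\eta)-C_1F_1^{-1}C_1^\top)$, and positive definiteness of its argument follows from the identical diagonal-dominance argument of Theorem~\ref{thm:conv}: the choice $\eta_i=\epsilon+\delta_i^2+\sum_j|C_1F_1^{-1}C_1^\top|_{ij}q_j/q_i$ makes $\diag(\eta)-QC_1F_1^{-1}C_1^\top Q^{-1}$, with $Q=\diag(q)$, real diagonally dominant (the inequality reduces to $\epsilon+\delta_i^2>0$), so Lemma~\ref{lem:diagonal_dominance} gives $\diag(\eta)-C_1F_1^{-1}C_1^\top\succ0$.

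Finally I would close as in Theorem~\ref{thm:conv}: read the displayed identity as an inequality ($\succeq0$) and congruence-transform it by $\Lambda=\Gamma^{-1}$, which sends $2\Gamma\mapsto2\Lambda$, $C_1F_1^{-1}C_1^\top\mapsto\Lambda C_1F_1^{-1}C_1^\top\Lambda$, and crucially $\rho_\mathrm{p}^2\Gamma X\Gamma\mapsto\rho_\mathrm{p}^2 X$, producing the analog of \eqref{eq:proof2} with $\rho_\mathrm{p}^2 X$ in place of $X$. Two successive Schur complements---first on the pivot $F_2-F_{12}^\top F_1^{-1}F_{12}\succ0$, then on $F_1\succ0$---reassemble the block matrix \eqref{eq:LMI_partitionF} carrying $2\Lambda-\rho_\mathrm{p}^2 X$ in its lower-right corner, which is exactly \eqref{eq:cert_conv+act+pool}. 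The bookkeeping that $\widehat{C}_2$ recovers all remaining kernel entries via Lemma~\ref{lem:min_real_2D} and that the bias $b$ enters only affinely is unchanged from the non-pooling case.
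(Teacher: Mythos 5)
Your proposal is correct and follows essentially the same route as the paper's proof: solve for $\widetilde U$, impose $\widetilde U^\top\widetilde U=I$, congruence-transform by $L_\Gamma$, certify positive definiteness of the Cholesky argument via the diagonal-dominance lemma (your observation that $2\Gamma-\rho_\mathrm{p}^2\Gamma X\Gamma$ collapses to $\diag(\eta)$ is exactly the computation the paper performs entrywise), then congruence by $\Lambda=\Gamma^{-1}$ and two Schur complements to recover \eqref{eq:cert_conv+act+pool}. No gaps.
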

\begin{pf}
    The proof follows along the lines of the proof of Theorem~\ref{thm:conv}. We solve for $\widetilde{U}=L_F^{-\top}(-\widehat{C}_2+C_1F_1^{-1}F_{12})^\top L_\Gamma^{-1}$, which we then insert into $\widetilde{U}^\top \widetilde{U}=I$ and subsequently left/right multiply with $L_\Gamma^\top$ and $L_\Gamma$, respectively, to obtain
\begin{align}\label{eq:proof_max}
\begin{split}
    L_\Gamma^\top L_\Gamma  \! = \!  2\Gamma - \rho_\mathrm{P}^2\Gamma X\Gamma - C_1 F_{1}^{-1} C_1^\top \! = \! (-\widehat{C}_2 +C_1 F_{1}^{-1}F_{12})\\ (F_2 -F_{12}^\top F_{1}^{-1} F_{12})^{-1} (-\widehat{C}_2 +C_1 F_1^{-1}F_{12})^\top.
\end{split}
\end{align}
Using $X_{ii}=l_i^2=\frac{2\gamma_i-\eta_i}{\gamma_i^2\rho_{\mathrm{P}}^2}$, we notice that $2\Gamma - \rho_\mathrm{P}^2\Gamma X\Gamma - QC_1 F_{1}^{-1} C_1^\top Q^{-1}$ is by design diagonally dominant as it satisfies
\begin{align*}
    &2\gamma_i-2\gamma_i+\eta_i-\left|C_1F_1^{-1}C_1^\top\right|_{ii} \frac{q_i}{q_i}\\
    &=\epsilon+\delta_i^2+\sum_j \left| C_1 F_1^{-1} C_1^\top \right|_{ij} \frac{q_j}{q_i}-\left|C_1F_1^{-1}C_1^\top\right|_{ii} \frac{q_j}{q_i}\\
    &>\sum_{j,i\neq j} \left| C_1 F_1^{-1} C_1^\top \right|_{ij} \frac{q_j}{q_i}\quad \forall i=1,\dots,c.
\end{align*}
Hence, $2\Gamma-\rho_\mathrm{P}^2\Gamma X\Gamma-C_1F_1^{-1}C_1^\top\succ 0$ according to Lemma~\ref{lem:diagonal_dominance}. Equality \eqref{eq:proof_max} implies the inequality
\begin{align*}
    2\Gamma - \rho_\mathrm{P}^2\Gamma X\Gamma - C_1 F_{1}^{-1} C_1^\top - (-\widehat{C}_2 +C_1 F_{1}^{-1}F_{12})\\ (F_2 -F_{12}^\top F_{1}^{-1} F_{12})^{-1} (-\widehat{C}_2^\top +F_{12}^\top F_{1}^{-1} C_1^\top) \succeq 0, 
\end{align*}
or,  equivalently, using $\Lambda=\Gamma^{-1}$, which is invertible as $\gamma_i\geq \epsilon$,
\begin{align*}
    2\Lambda - \rho_\mathrm{P}^2 X - \Lambda C_1 F_{1}^{-1} C_1^\top \Lambda - (-\Lambda \widehat{C}_2 + \Lambda C_1 F_{1}^{-1}F_{12})\\ (F_2 -F_{12}^\top F_{1}^{-1} F_{12})^{-1} (-\widehat{C}_2^\top \Lambda +F_{12}^\top F_{1}^{-1} C_1^\top \Lambda) \succeq 0, 
\end{align*}
which by Schur complements is equivalent to \eqref{eq:cert_conv+act+pool}, cmp. proof of Theorem~\ref{thm:conv}.
\end{pf}

\subsection{The last layer}
In the last layer, we directly set $X=Q=L_Q^\top L_Q$ instead of parameterizing some $X=L^\top L$ through $L$.

\begin{corollary}
    An affine fully connected layer \eqref{eq:fc} parameterized by
    \begin{align}\label{eq:parameterization_FNN_last}
        W = L_Q^{-1} V^\top L_{-}, ~  \begin{bmatrix} U\\ V \end{bmatrix} = \Cayley\left( \begin{bmatrix} Y\\ Z \end{bmatrix} \right)
    \end{align}
    where $L_Q=\chol(Q)$, $Y\in\bbR^{c\times c}$, $Z\in\bbR^{c_-\times c}$ satisfies \eqref{eq:cert_last_FC}.
\end{corollary}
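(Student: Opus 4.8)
The plan is to verify the LMI \eqref{eq:cert_last_FC} directly, since the last affine layer carries no activation function and, by the discussion preceding \eqref{eq:cert_last_FC}, incremental dissipativity with respect to \eqref{eq:supply} for $X_l=Q$ is equivalent to $X_{l-1}-W^\top Q W\succeq 0$. Writing $X_-=X_{l-1}=L_-^\top L_-$ for the directional gain handed over by the previous layer (the same $L_-$ as in Theorems~\ref{thm:FC} and \ref{thm:conv}), it therefore suffices to show $X_- - W^\top Q W\succeq 0$ for the parameterized $W$. This is the last-layer counterpart of Theorem~\ref{thm:FC}, with the Cholesky factor $L_Q=\chol(Q)$ playing the role that $\sqrt{2}\,\Gamma$ plays there and with $X=Q$ fixed rather than parameterized.

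First I would substitute $W=L_Q^{-1}V^\top L_-$ into $W^\top Q W$ and use $Q=L_Q^\top L_Q$ to collapse the inner factor, since $L_Q^{-\top} Q L_Q^{-1}=L_Q^{-\top}L_Q^\top L_Q L_Q^{-1}=I$. This gives $W^\top Q W = L_-^\top V V^\top L_-$, so that $X_- - W^\top Q W = L_-^\top(I-VV^\top)L_-$, and it is enough to establish $VV^\top\preceq I$. Once that inequality is in hand, writing $I-VV^\top=S^\top S$ makes $L_-^\top(I-VV^\top)L_-=(SL_-)^\top(SL_-)\succeq 0$ explicit, completing the argument for any (not necessarily invertible) $L_-$.

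The only genuinely nontrivial step is reconciling the orientation of the Gram matrix: Lemma~\ref{lem:Cayley} supplies the Cayley identity $U^\top U+V^\top V=I$, which yields $V^\top V\preceq I$ (as $U^\top U\succeq 0$), whereas the computation above produces $VV^\top$. I would resolve this by observing that $V^\top V$ and $VV^\top$ share the same nonzero eigenvalues; equivalently, $V^\top V\preceq I$ is precisely the statement $\Vert V\Vert_2\le 1$, which in turn forces $VV^\top\preceq I$. Thus the $V^\top V$-versus-$VV^\top$ swap is the single point deserving a line of justification, and everything else reduces to the substitution and a Schur-free positivity argument, so I expect no serious obstacle here.

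Finally, I would note consistency of dimensions as a sanity check: with $Y\in\bbR^{c\times c}$ and $Z\in\bbR^{c_-\times c}$, Lemma~\ref{lem:Cayley} returns $U\in\bbR^{c\times c}$ and $V\in\bbR^{c_-\times c}$, so that $V^\top\in\bbR^{c\times c_-}$ and $W=L_Q^{-1}V^\top L_-\in\bbR^{c\times c_-}$ has exactly the shape of the last-layer weight $W_l\in\bbR^{c_l\times c_{l-1}}$, as required.
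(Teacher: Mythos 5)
Your proof is correct, but it runs in the opposite direction from the paper's. The paper solves the parameterization for $V=L_-^{-\top}W^\top L_Q^\top$, inserts this into the Cayley identity $U^\top U+V^\top V=I$ to get $L_Q W X_-^{-1}W^\top L_Q^\top\preceq I$, hence $WX_-^{-1}W^\top\preceq Q^{-1}$, and then recovers \eqref{eq:cert_last_FC} via two Schur complements. You instead substitute $W$ directly into $W^\top QW$, collapse $L_Q^{-\top}QL_Q^{-1}=I$, and reduce the claim to $VV^\top\preceq I$. Both arguments rest on the same key fact (the Cayley identity of Lemma~\ref{lem:Cayley}); the trade-off is that the paper's route keeps the Gram matrix in the orientation $V^\top V$ delivered by the lemma but needs $L_-$ invertible and two Schur-complement steps, whereas yours avoids inverting $L_-$ and any Schur complements at the cost of the $V^\top V$-versus-$VV^\top$ swap — which you correctly justify via the shared nonzero spectrum (equivalently $\Vert V\Vert_2\le 1$). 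Your version is slightly more elementary and, as you note, remains valid even if $L_-$ were singular, though in this setting $X_-\succ 0$ makes that moot.
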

\begin{pf}
    The proof follows along the lines of the proof in \cite[Theorem 5]{pauli2023lipschitz-syn}. We insert $V=L_-^{-\top}W_l^\top L_Q^\top$ into $U^\top U+V^\top V=I$ and obtain
    \begin{equation*}
        L_QW_lL_-^{-1} L_-^{-\top}W_l^\top L_Q^\top=I - U^\top U\preceq I
    \end{equation*}
    which by left/right multiplication with $L_Q^{-1}$/$L_Q^{-\top}$ yields
    \begin{equation*}
        W_l X_-^{-1}W_l^\top\preceq Q^{-1},
    \end{equation*}
    which in turn by two Schur complements implies \eqref{eq:cert_last_FC}.
\end{pf}

\subsection{LipKernel vs. Sandwich convolutional layers}\label{sec:comparison_Sandwich}
\begin{figure}
    \centering
    \includegraphics[width=0.45\textwidth]{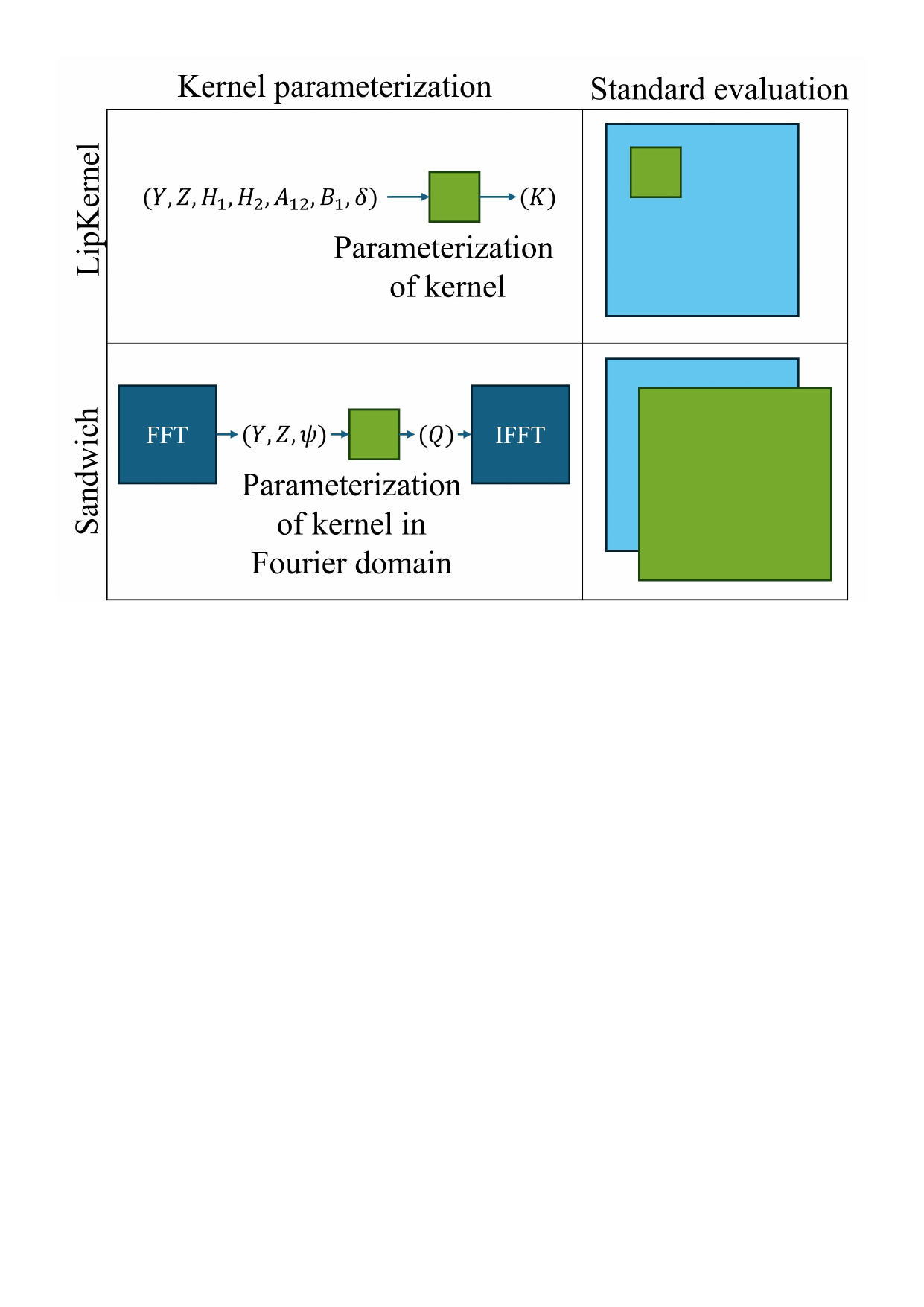}
    \caption{Differences between convolutional layers using LipKernel (ours) and Sandwich layers \citep{wang2023direct} in its parameterization complexity and its standard evaluation. The light blue boxes represent images and the green boxes the kernel.}
    \label{fig:differences_Sandwich_LipKernel}
\end{figure}

In this section, we have presented an LMI-based method for the parameterization of Lipschitz-bounded CNNs that we call LipKernel as we directly parameterize the kernel parameters. Similarly, the parameterization of Sandwich layers \citep{wang2023direct} is  based on LMIs, i.e., is also shows an increased expressivity over approaches using orthogonal layers and layers with constrained spectral norms, cmp. Subsection~\ref{sec:dissipative_layers}. In the following, we point out the differences between Sandwich and LipKernel convolutional layers, which are also illustrated in Fig.~\ref{fig:differences_Sandwich_LipKernel}. 

Both parameterizations Sandwich and LipKernel use the Cayley transform and require the computation of inverses at training time. However, LipKernel parameterizes the kernel parameters $K$ directly through the bijective mapping $(\bA,\bB,\bC,\bD)\mapsto K$ given by Lemma \ref{lem:min_real_2D}. This means that after training at inference time, we can construct $K$ from $(\bA,\bB,\bC,\bD)$ and then evaluate the trained CNN using this $K$. This is not possible using Sandwich layers \citep{wang2023direct}. At inference time Sandwich layers can either be evaluated using an full-image size kernel or in the Fourier domain, cmp. Fig.~\ref{fig:differences_Sandwich_LipKernel}. The latter requires the use of a fast Fourier transform and an inverse fast Fourier transform and the computation of inverses at inference time, making it computationally more costly than the evaluation of LipKernel layers.

We note that Sandwich requires circular padding instead of zero-padding and the implementation of \citet{wang2023direct} only takes input image sizes of the specific size of $2^n$, $n\in\bbN_0$. In this respect, LipKernel is more versatile than Sandwich, it can handle all kinds of zero-padding and accounts for pooling layers, which are not considered in \citep{wang2023direct}. 

\section{Numerical Experiments}\label{sec:experiments}

\subsection{Run-times for inference}\label{sec:runtimes}
First, we compare the run-times at inference, i.e., the time for evaluation of a fixed model after training, for varying numbers of channels, different input image sizes, and different kernel sizes for LipKernel, Sandwich, and Orthogon layers with randomly generated weights\footnote{The code is written in Python using Pytorch and was run on a standard i7 notebook. It is provided at \url{https://github.com/ppauli/2D-LipCNNs}.}.
\begin{itemize}
\item\textbf{Sandwich:} \citet{wang2023direct} suggest an LMI-based method using the Cayley transform, wherein convolutional layers are parameterized in the Fourier domain using circular padding, cmp. Subsection~\ref{sec:comparison_Sandwich}.
\item\textbf{Orthogon:} \citet{trockman2021orthogonalizing} use the Cayley transform to parameterize orthogonal layers. Convolutional layers are parameterized in the Fourier domain using circular padding.
\end{itemize}
The averaged run-times are shown in Fig.~\ref{fig:LipKernel_vs_Sandwich}. For all chosen channel, image, and kernel sizes the inference time of LipKernel is very short (from $<$1ms to around 100ms), whereas Sandwich layer and Orthogon layer evaluations are two to three orders of magnitude slower and increases significantly with channel and image sizes (from around 10ms to over 10s). Kernel size does not affect the run-time of either layer significantly.

A particular motivation of our work is to improve the robustness of NNs for use in real-time control systems. In this context, these inference-time differences can have a significant impact, both in terms of achievable sample rates (100Hz vs 0.1Hz) and latency in the feedback loop. Furthermore, it is increasingly the case that compute (especially NN inference) consumes a significant percentage of the power in mobile robots and other ``edge devices'' \citep{chen2020deep}. Significant reductions in inference time for robust NNs can therefore be a key enabler for use especially in  battery-powered systems.

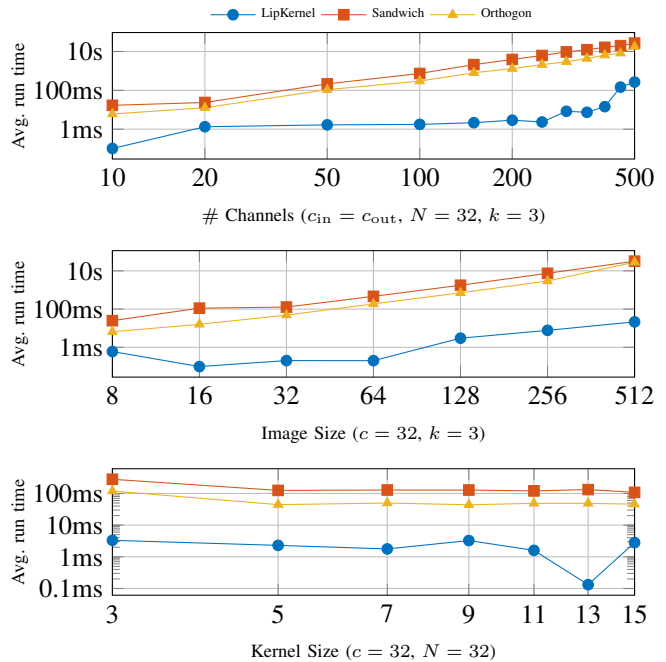
\begin{figure}
    \centering
    \definecolor{mycolor1}{rgb}{0.00000,0.44700,0.74100}%
\definecolor{mycolor2}{rgb}{0.85000,0.32500,0.09800}%
\definecolor{mycolor3}{rgb}{0.92900,0.69400,0.12500}%
\definecolor{mycolor4}{rgb}{0.49400,0.18400,0.55600}%
\definecolor{mycolor5}{rgb}{0.46600,0.67400,0.18800}%
\definecolor{mycolor6}{rgb}{0.30100,0.74500,0.93300}%
\definecolor{mycolor7}{rgb}{0.63500,0.07800,0.18400}%
    
    \begin{tikzpicture}
        \begin{axis}[
            xlabel={\scriptsize{$\#$ Channels ($c_{\mathrm{in}}=c_{\mathrm{out}}$, $N=32$, $k=3$)}},
            ylabel={\scriptsize{Avg. run time}},
            grid=both,
            legend pos=north west,
            width=0.47\textwidth,
            height=0.18\textwidth,
            xmin=10,
            xmax=500,
            xmode=log,
            ymode=log,
            xtick = {10,20,50,100,200,500},
            xticklabels = {10,20,50,100,200,500},
            ytick = {1,100,10000},
            yticklabels = {1ms,100ms,10s},
            legend style={anchor=north, at = {(0.5,1.28)}, legend columns=3, draw=none}
        ]
        \addplot[color = mycolor1, mark = *] table[x index=0, y index=1, col sep=comma] {Data_LipKernel_vs_Sandwich/lipkernel_times_channels.csv};
        \addlegendentry{\tiny{LipKernel}}
        \addplot[color = mycolor2, mark = square*] table[x index=0, y index=1, col sep=comma] {Data_LipKernel_vs_Sandwich/sandwich_times_channels.csv};
        \addlegendentry{\tiny{Sandwich}}
        \addplot[color = mycolor3, mark = triangle*] table[x index=0, y index=1, col sep=comma] {Data_LipKernel_vs_Sandwich/Orthogon_times_channels.csv};
        \addlegendentry{\tiny{Orthogon}}
        \end{axis}
        \begin{axis}[
            xlabel={\scriptsize{Image Size ($c=32$, $k=3$)}},
            ylabel={\scriptsize{Avg. run time}},
            grid=both,
            legend pos=north west,
            width=0.47\textwidth,
            height=0.18\textwidth,
            xmin=8,
            xmax=512,
            xmode=log,
            ymode=log,
            xtick = {8,16,32,64,128,256,512},
            xticklabels = {8,16,32,64,128,256,512},
            ytick = {1,100,10000},
            yticklabels = {1ms,100ms,10s},
            at={(0in,-0.16\textwidth)},
        ]
        \addplot[color = mycolor1, mark = *] table[x index=0, y index=1, col sep=comma] {Data_LipKernel_vs_Sandwich/lipkernel_times_spatial.csv};
        \addplot[color = mycolor2, mark = square*] table[x index=0, y index=1, col sep=comma] {Data_LipKernel_vs_Sandwich/sandwich_times_spatial.csv};
        \addplot[color = mycolor3, mark = triangle*] table[x index=0, y index=1, col sep=comma] {Data_LipKernel_vs_Sandwich/Orthogon_times_spatial.csv};
        \end{axis}
        \begin{axis}[
            xlabel={\scriptsize{Kernel Size ($c=32$, $N=32$)}},
            ylabel={\scriptsize{Avg. run time}},
            grid=major,
            legend pos=north west,
            width=0.47\textwidth,
            height=0.18\textwidth,
            xmin=3,
            xmax=15,
            xmode=log,
            ymode=log,
            xtick = {3,5,7,9,11,13,15},
            xticklabels = {3,5,7,9,11,13,15},
            ytick = {0.1,1,10,100},
            yticklabels = {0.1ms,1ms,10ms,100ms},
            at={(0in,-0.32\textwidth)},
        ]
        \addplot[color = mycolor1, mark = *] table[x index=0, y index=1, col sep=comma] {Data_LipKernel_vs_Sandwich/lipkernel_times_kernel.csv};
        \addplot[color = mycolor2, mark = square*] table[x index=0, y index=1, col sep=comma] {Data_LipKernel_vs_Sandwich/sandwich_times_kernel.csv};
        \addplot[color = mycolor3, mark = triangle*] table[x index=0, y index=1, col sep=comma] {Data_LipKernel_vs_Sandwich/Orthogon_times_kernel.csv};
        \end{axis}
    \end{tikzpicture} 
    \caption{Inference times for LipKernel, Sandwich, and Orthogon layers with different numbers of channels $c=c_{\mathrm{in}}=c_{\mathrm{out}}$, input image sizes $N=N_1=N_2$, and kernel sizes $k=k_1=k_2$. For all layers, we have stride equal to $1$ and average the run-time over 10 different initializations.}
    \label{fig:LipKernel_vs_Sandwich}
\end{figure}

\subsection{Accuracy and robustness comparison}\label{sec:exp_comparison}
We next compare LipKernel to three other methods developed to train Lipschitz-bounded NNs in terms of accuracy and robustness. In particular, we compare LipKernel to Sandwich and Orthogon as well as vanilla and almost-orthogonal Lipschitz (AOL) NNs:
\begin{itemize}
\item\textbf{Vanilla:} Unconstrained neural network.
\item\textbf{AOL:} \citet{prach2022almost} introduce  a rescaling-based weight matrix parametrization to obtain AOL layers which are 1-Lipschitz. Like LipKernel layers, at inference, convolutional AOL layers can be evaluated in standard form.
\end{itemize}

We train classifying CNNs on the MNIST dataset \citep{mnist} of size $32\times 32$ images with CNN architectures 2C2F: $c(16,4,2).c(32,4,2).f(100).f(10)$, 2CP2F: $c(16,4,1).p(\mathrm{av},2,2).c(32,4,1).p(\mathrm{av},2,2).f(100)$ $.f(10)$, wherein by $c(C,K,S)$, we denote a convolutional layer with $C$ output channels, kernel size $K$, and stride $S$, by $f(N)$ a fully connected layer with $N$ output neurons, and by $p(\text{type},K,S)$ an `$\mathrm{av}$' or `$\mathrm{max}$' pooling layer.

In Table \ref{table:main_results}, we show the clean accuracy, i.e., the test accuracy on unperturbed test data, the certified robust accuracy, and the robustness under the $\ell_2$ projected gradient descent (PGD) adversarial attack of the trained NNs. The certified robust accuracy is a robustness metric for NNs that gives the fraction of test data points that are guaranteed to remain correct under all perturbations from an $\epsilon$-ball. It is obtained by identifying all test data points $x$ with classification margin $\calM_f(x)$ greater than $\sqrt{2}\rho\epsilon$, where $\rho$ is the NN's upper bound on the Lipschitz constant \citep{tsuzuku2018lipschitz}. The $\ell_2$ PGD attack is a white box multi-step attack that modifies each input data point by maximizing the loss within an $\ell_2$ $\epsilon$-ball around that point \citep{madry2017towards}. The accuracy under $\ell_2$ PGD attacks gives the fraction of attacked test data points which are correctly classified.

First, we note that LipKernel is general and flexible in the sense that we can use it in both the 2C2F and the 2CP2F architectures, whereas Sandwich and Orthogon are limited to image sizes of $2^n$ and to circular padding and AOL does not support strided convolutions. Comparing LipKernel to Orthogon and AOL, we notice better expressivity in the higher clean accuracy and significantly better robustness with the stronger Lipschitz bounds of 1 and 2. In comparison to Sandwich, LipKernel achieves comparable but slightly lower expressivity and robustness. However as discussed above it is more flexible in terms of architecture and has a significant advantage in terms of inference times.

In Figure \ref{fig:robustness_accuracy_tradeoff_2D}, we plot the achieved clean test accuracy over the Lipschitz lower bound for 2C2F and 2CP2F for LipKernel and the other methods, clearly recognizing the trade-off between accuracy and robustness. Again, we see that LipKernel shows better expressivity than Orthogon and AOL and similar performance to Sandwich.

\begin{table*}[t]
  \centering
  \caption{Empirical lower Lipschitz bounds, clean accuracy, certified robust accuracy and adversarial robustness under $\ell_2$ PGD attack for vanilla, AOL, Orthogon, Sandwich, and LipKernel NNs using the architectures 2C2F and 2CP2F with ReLU activations, each trained for 20 epochs and averaged for three different initializations.
}
  \renewcommand{\arraystretch}{0.9}
    \resizebox{\textwidth}{!}{%
    \begin{tabular}{lcrrrrrrrrrr}
    \toprule
    \multirow{2}{*}{\textbf{Model}} & \multirow{2}{*}{\textbf{Method}} & \multirow{2}{*}{\textbf{Cert. UB}} & \multirow{2}{*}{\textbf{Emp. LB}} & \multirow{2}{*}{\textbf{Test acc.}} & \multicolumn{3}{c}{\textbf{Cert. robust acc.}}  & \multicolumn{3}{c}{\textbf{$\ell_2$ PGD Adv. test acc.}} \\ \cmidrule{6-11}
    &&&&&$\epsilon=\frac{36}{255}$ & $\epsilon=\frac{72}{255}$ & $\epsilon=\frac{108}{255}$ &$\epsilon=1.0$ & $\epsilon=2.0$ & $\epsilon=3.0$ \\
    \midrule 
     \multirow{10}{*}{2C2F} 
     & Vanilla  & -- & 221.7 &	99.0\% &	0.0\%  & 0.0\%  & 0.0\% & 69.5\% &	61.9\% & 59.6\% \\ \cmidrule{2-11}
     & Orthogon & 1  & 0.960 &	94.6\% &	92.9\% & 91.0\% & 88.3\% & 83.7\% &	65.2\% & 60.4\% \\
     & Sandwich & 1  & 0.914	&   97.3\% &    96.3\% & 95.2\% & 93.8\% & 90.5\% &76.5\% & 72.0\% \\
     & LipKernel  & 1  & 0.952 &	96.6\% &	95.6\% & 94.3\% & 92.6\% & 88.3\% &	72.2\% & 67.8\% \\ \cmidrule{2-11}
     & Orthogon & 2  & 1.744	&   97.7\% &	96.3\% & 94.4\% & 91.8\% & 89.1\% &	66.0\% & 58.2\% \\
     & Sandwich & 2  & 1.703	&   98.9\% &	98.2\% & 97.0\% & 95.4\% & 93.1\% &	74.0\% & 67.2\% \\
     & LipKernel  & 2  & 1.703 & 98.2\% &    97.1\% &    95.6\% & 93.6\% & 89.8\% &    66.1\% & 58.9\% \\ \cmidrule{2-11}
     & Orthogon & 4  & 2.894 &	98.8\% &	97.4\% & 94.4\% & 88.6\% & 89.6\% &	56.0\% & 46.0\% \\
     & Sandwich & 4  & 2.969	&   99.3\% &    98.4\% & 96.9\% & 93.6\% & 92.5\% &   63.3\%  & 54.0\% \\
     & LipKernel  & 4  & 3.110 &	98.9\% &	97.5\% &  95.3\% & 91.3\% &  88.6\% & 49.6\% & 39.7\% \\  \midrule 
     \multirow{7}{*}{2CP2F} 
     & Vanilla & -- & 148.0 &   99.3\%  &     0.0\% &     0.0\% & 0.0\% &73.2\% &56.2\% & 53.7\% \\ \cmidrule{2-11}   
     & AOL      & 1  & 0.926 & 	88.7\% &    85.5\% &    81.7\% & 77.2\% &	70.6\% &	49.2\% & 44.6\% \\
     & LipKernel  & 1  & 0.759	&   91.7\% & 88.0\% & 83.1\% & 77.3\% &	77.3\% &	57.2\% & 52.2\% \\ \cmidrule{2-11}
     & AOL      & 2  & 1.718 &	93.0\% &	89.9\% &    85.9\% & 80.4\% & 75.8\% &	46.6\% & 38.1\% \\ 
     & LipKernel  & 2  & 1.312	&   94.9\% & 91.1\% & 85.4\% & 77.8\% &	80.9\% &53.8\% & 45.8\% \\ \cmidrule{2-11}
     & AOL      & 4  & 2.939 & 	95.9\% &	92.4\% & 86.2\% & 76.3\% & 78.2\% &	37.0\% & 29.6\% \\
     & LipKernel  & 4  & 2.455	&   97.1\% & 93.7\% & 87.2\% & 75.7\% &	80.0\% &	36.8\% & 29.0\% \\
    \bottomrule 
    \end{tabular}%
    }    
  \label{table:main_results}%
\end{table*}%

\begin{figure}
    \centering
%
%
\definecolor{mycolor1}{rgb}{0.00000,0.44700,0.74100}%
\definecolor{mycolor2}{rgb}{0.85000,0.32500,0.09800}%
\definecolor{mycolor3}{rgb}{0.92900,0.69400,0.12500}%
\definecolor{mycolor4}{rgb}{0.49400,0.18400,0.55600}%
\definecolor{mycolor5}{rgb}{0.46600,0.67400,0.18800}%
\definecolor{mycolor6}{rgb}{0.30100,0.74500,0.93300}%
\definecolor{mycolor7}{rgb}{0.63500,0.07800,0.18400}%
\begin{tikzpicture}

\begin{axis}[%
width=1.2in,
height=1in,
at={(0.406in,0.244in)},
scale only axis,
xmode=log,
xmin=0.15,
xmax=250,
xminorticks=true,
ymin=0.85,
ymax=1,
xlabel={\scriptsize{Lipschitz constant}},
xlabel near ticks,
ylabel={\scriptsize{test accuracy}},
ylabel near ticks,
axis background/.style={fill=white},
legend style={at={(0.96,0.33)}, anchor=east, legend cell align=left, align=left, draw=white!15!black},
every tick label/.append style={font=\footnotesize}
]
\addplot [color=mycolor1, line width=1.5pt, only marks, mark size=3.0pt, mark=o, mark options={solid, mycolor1}]
  table[row sep=crcr]{%
221.65	0.99\\
};
\addlegendentry{\tiny{Vanilla}}

\addplot [color=mycolor5, line width=1.5pt, only marks, mark size=3.0pt, mark=x, mark options={solid, mycolor6}]
  table[row sep=crcr]{%
  0.242	0.900\\
};
\addlegendentry{\tiny{LipKernel}}

\addplot [color=mycolor6, line width=1.5pt, only marks, mark size=3.0pt, mark=square, mark options={solid, mycolor6}]
  table[row sep=crcr]{%
0.242 0.859\\
};
\addlegendentry{\tiny{Orthogon}}

\addplot [color=mycolor6, line width=1.5pt, only marks, mark size=3.0pt, mark=+, mark options={solid, mycolor6}]
  table[row sep=crcr]{%
0.240 0.902\\
};
\addlegendentry{\tiny{Sandwich}}

\addplot [color=mycolor5, line width=1.5pt, only marks, mark size=3.0pt, mark=x, mark options={solid, mycolor5}]
  table[row sep=crcr]{%
0.487	0.930\\
};

\addplot [color=mycolor2, line width=1.5pt, only marks, mark size=3.0pt, mark=x, mark options={solid, mycolor2}]
  table[row sep=crcr]{%
0.952	0.966\\
};

\addplot [color=mycolor3, line width=1.5pt, only marks, mark size=3.0pt, mark=x, mark options={solid, mycolor3}]
  table[row sep=crcr]{%
1.703	0.982\\
};

\addplot [color=mycolor4, line width=1.5pt, only marks, mark size=3.0pt, mark=x, mark options={solid, mycolor4}]
  table[row sep=crcr]{%
3.110	0.989\\
};

\addplot [color=mycolor5, line width=1.5pt, only marks, mark size=3.0pt, mark=square, mark options={solid, mycolor5}]
  table[row sep=crcr]{%
0.487 0.896\\
};

\addplot [color=mycolor2, line width=1.5pt, only marks, mark size=3.0pt, mark=square, mark options={solid, mycolor2}]
  table[row sep=crcr]{%
0.960 0.946\\
};

\addplot [color=mycolor3, line width=1.5pt, only marks, mark size=3.0pt, mark=square, mark options={solid, mycolor3}]
  table[row sep=crcr]{%
1.744 0.977\\
};

\addplot [color=mycolor4, line width=1.5pt, only marks, mark size=3.0pt, mark=square, mark options={solid, mycolor4}]
  table[row sep=crcr]{%
2.894 0.988\\
};

\addplot [color=mycolor5, line width=1.5pt, only marks, mark size=3.0pt, mark=+, mark options={solid, mycolor5}]
  table[row sep=crcr]{%
0.477 0.934\\
};

\addplot [color=mycolor2, line width=1.5pt, only marks, mark size=3.0pt, mark=+, mark options={solid, mycolor2}]
  table[row sep=crcr]{%
0.914 0.973\\
};

\addplot [color=mycolor3, line width=1.5pt, only marks, mark size=3.0pt, mark=+, mark options={solid, mycolor3}]
  table[row sep=crcr]{%
1.703 0.989\\};

\addplot [color=mycolor4, line width=1.5pt, only marks, mark size=3.0pt, mark=+, mark options={solid, mycolor4}]
  table[row sep=crcr]{%
2.969 0.993\\
};

\addplot [color=mycolor6, dashed, forget plot]
  table[row sep=crcr]{%
0.25	0.6\\
0.25	1\\
};

\addplot [color=mycolor5, dashed, forget plot]
  table[row sep=crcr]{%
0.5	0.6\\
0.5	1\\
};
\addplot [color=mycolor2, dashed, forget plot]
  table[row sep=crcr]{%
1	0.6\\
1	1\\
};
\addplot [color=mycolor3, dashed, forget plot]
  table[row sep=crcr]{%
2	0.6\\
2	1\\
};
\addplot [color=mycolor4, dashed, forget plot]
  table[row sep=crcr]{%
4	0.6\\
4	1\\
};
\end{axis}

\begin{axis}[%
width=1.2in,
height=1in,
at={(1.95in,0.244in)},
scale only axis,
xmode=log,
xmin=0.15,
xmax=170,
xminorticks=true,
ymin=0.78,
ymax=1,
xlabel={\scriptsize{Lipschitz constant}},
xlabel near ticks,
ylabel near ticks,
axis background/.style={fill=white},
legend style={at={(0.96,0.26)}, anchor=east, legend cell align=left, align=left, draw=white!15!black},
every tick label/.append style={font=\footnotesize}
]
\addplot [color=mycolor1, line width=1.5pt, only marks, mark size=3.0pt, mark=o, mark options={solid, mycolor1}]
  table[row sep=crcr]{%
148.029	0.993\\
};
\addlegendentry{\tiny{Vanilla}}

\addplot [color=mycolor5, line width=1.5pt, only marks, mark size=3.0pt, mark=x, mark options={solid, mycolor6}]
  table[row sep=crcr]{%
  0.203	0.854\\
};
\addlegendentry{\tiny{LipKernel}}

\addplot [color=mycolor6, line width=1.5pt, only marks, mark size=3.0pt, mark=square, mark options={solid, mycolor6}]
  table[row sep=crcr]{%
0.227 0.794\\
};
\addlegendentry{\tiny{AOL}}

\addplot [color=mycolor5, line width=1.5pt, only marks, mark size=3.0pt, mark=x, mark options={solid, mycolor5}]
  table[row sep=crcr]{%
0.380	0.886\\
};

\addplot [color=mycolor2, line width=1.5pt, only marks, mark size=3.0pt, mark=x, mark options={solid, mycolor2}]
  table[row sep=crcr]{%
0.759	0.917\\
};

\addplot [color=mycolor3, line width=1.5pt, only marks, mark size=3.0pt, mark=x, mark options={solid, mycolor3}]
  table[row sep=crcr]{%
1.312	0.949\\
};

\addplot [color=mycolor4, line width=1.5pt, only marks, mark size=3.0pt, mark=x, mark options={solid, mycolor4}]
  table[row sep=crcr]{%
2.455	0.971\\
};

\addplot [color=mycolor7, line width=1.5pt, only marks, mark size=3.0pt, mark=x, mark options={solid, mycolor7}]
  table[row sep=crcr]{%
5.401380341	0.984300017\\
};

\addplot [color=mycolor6, line width=1.5pt, only marks, mark size=3.0pt, mark=x, mark options={solid, mycolor6}]
  table[row sep=crcr]{%
8.297845529	0.986800015\\
};

\addplot [color=mycolor5, line width=1.5pt, only marks, mark size=3.0pt, mark=x, mark options={solid, mycolor5}]
  table[row sep=crcr]{%
12.56657442	0.986299992\\
};

\addplot [color=mycolor5, line width=1.5pt, only marks, mark size=3.0pt, mark=square, mark options={solid, mycolor5}]
  table[row sep=crcr]{%
0.464 0.825\\
};

\addplot [color=mycolor2, line width=1.5pt, only marks, mark size=3.0pt, mark=square, mark options={solid, mycolor2}]
  table[row sep=crcr]{%
0.926 0.887\\
};

\addplot [color=mycolor3, line width=1.5pt, only marks, mark size=3.0pt, mark=square, mark options={solid, mycolor3}]
  table[row sep=crcr]{%
1.718 0.930\\
};

\addplot [color=mycolor4, line width=1.5pt, only marks, mark size=3.0pt, mark=square, mark options={solid, mycolor4}]
  table[row sep=crcr]{%
2.939 0.959\\
};

\addplot [color=mycolor7, line width=1.5pt, only marks, mark size=3.0pt, mark=square, mark options={solid, mycolor7}]
  table[row sep=crcr]{%
3.645500618	0.971400023\\
};

\addplot [color=mycolor6, line width=1.5pt, only marks, mark size=3.0pt, mark=square, mark options={solid, mycolor6}]
  table[row sep=crcr]{%
4.505931759	0.98360002\\
};

\addplot [color=mycolor5, line width=1.5pt, only marks, mark size=3.0pt, mark=square, mark options={solid, mycolor5}]
  table[row sep=crcr]{%
5.769053197	0.988900006\\
};

\addplot [color=mycolor6, dashed, forget plot]
  table[row sep=crcr]{%
0.25 	0.6\\
0.25	    1\\
};

\addplot [color=mycolor5, dashed, forget plot]
  table[row sep=crcr]{%
0.5	0.6\\
0.5	1\\
};
\addplot [color=mycolor2, dashed, forget plot]
  table[row sep=crcr]{%
1	0.6\\
1	1\\
};
\addplot [color=mycolor3, dashed, forget plot]
  table[row sep=crcr]{%
2	0.6\\
2	1\\
};
\addplot [color=mycolor4, dashed, forget plot]
  table[row sep=crcr]{%
4	0.6\\
4	1\\
};
\addplot [color=mycolor7, dashed, forget plot]
  table[row sep=crcr]{%
8	0.6\\
8	1\\
};
\addplot [color=mycolor6, dashed, forget plot]
  table[row sep=crcr]{%
16	0.6\\
16	1\\
};
\addplot [color=mycolor5, dashed, forget plot]
  table[row sep=crcr]{%
32	0.6\\
32	1\\
};
\end{axis}\end{tikzpicture}%
    \caption{Robustness accuracy trade-off for 2C2F (left) 2CP2F (right) for NNs averaged over three initializations.}
    \label{fig:robustness_accuracy_tradeoff_2D}
\end{figure}
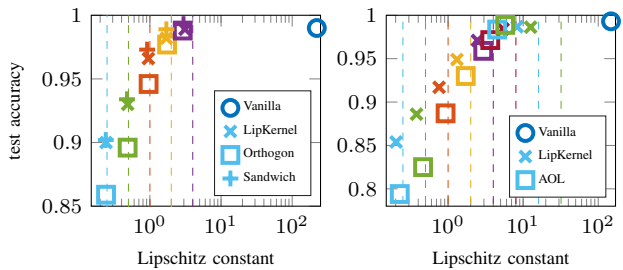

\section{Conclusion}
We have introduced LipKernel, an expressive and versatile parameterization for Lipschitz-bounded CNNs. Our parameterization of convolutional layers is based on a 2-D state space representation of the Roesser type that, unlike  parameterizations in the Fourier domain, allows to directly parameterize the kernel parameters of convolutional layers. This in turn enables fast evaluation at inference time making LipKernel  especially useful for real-time control systems. Our parameterization satisfies layer-wise LMI constraints that render the individual layers incrementally dissipative and the end-to-end mapping Lipschitz-bounded. Furthermore, our general framework can incorporate any dissipative layer. 


\renewcommand*{\bibfont}{\fontsize{9}{9.1}\selectfont}
\bibliographystyle{ifacconf-harvard}        
\bibliography{autosam}           

@article{chen2020deep,
  title={Deep learning on mobile and embedded devices: State-of-the-art, challenges, and future directions},
  author={Chen, Yanjiao and Zheng, Baolin and Zhang, Zihan and Wang, Qian and Shen, Chao and Zhang, Qian},
  journal={ACM Computing Surveys (CSUR)},
  volume={53},
  number={4},
  pages={1--37},
  year={2020},
  publisher={ACM New York, NY, USA}
}

@article{brunke2022safe,
  title={Safe learning in robotics: From learning-based control to safe reinforcement learning},
  author={Brunke, Lukas and Greeff, Melissa and Hall, Adam W and Yuan, Zhaocong and Zhou, Siqi and Panerati, Jacopo and Schoellig, Angela P},
  journal={Annual Review of Control, Robotics, and Autonomous Systems},
  volume={5},
  number={1},
  pages={411--444},
  year={2022},
  publisher={Annual Reviews}
}

@inproceedings{pauli2023lipschitz-syn,
  title={Lipschitz-bounded {1D} convolutional neural networks using the {C}ayley transform and the controllability {G}ramian},
  author={Pauli, Patricia and Wang, Ruigang and Manchester, Ian R and Allg{\"o}wer, Frank},
  booktitle={62nd Conference on Decision and Control},
  pages={5345--5350},
  year={2023},
  organization={IEEE}
}

@article{tsuzuku2018lipschitz,
  title={Lipschitz-margin training: Scalable certification of perturbation invariance for deep neural networks},
  author={Tsuzuku, Yusuke and Sato, Issei and Sugiyama, Masashi},
  journal={Advances in neural information processing systems},
  volume={31},
  year={2018}
}

@inproceedings{pauli2023lipschitz-ana,
  title={Lipschitz constant estimation for 1D convolutional neural networks},
  author={Pauli, Patricia and Gramlich, Dennis and Allg{\"o}wer, Frank},
  booktitle={Learning for Dynamics and Control Conference},
  pages={1321--1332},
  year={2023},
  organization={PMLR}
}

@article{pauli2024lipschitz,
  title={Lipschitz constant estimation for general neural network architectures using control tools},
  author={Pauli, Patricia and Gramlich, Dennis and Allg{\"o}wer, Frank},
  journal={arXiv:2405.01125},
  year={2024}
}

@article{gramlich2023convolutional,
  title={Convolutional neural networks as 2-D systems},
  author={Gramlich, Dennis and Pauli, Patricia and Scherer, Carsten W and Allg{\"o}wer, Frank and Ebenbauer, Christian},
  journal={arXiv:2303.03042},
  year={2023}
}

@article{revay2020lipschitz,
  title={Lipschitz bounded equilibrium networks},
  author={Revay, Max and Wang, Ruigang and Manchester, Ian R},
  journal={arXiv:2010.01732},
  year={2020}
}

@article{revay2020convex,
  title={A convex parameterization of robust recurrent neural networks},
  author={Revay, Max and Wang, Ruigang and Manchester, Ian R},
  journal={IEEE Control Systems Letters},
  volume={5},
  number={4},
  pages={1363--1368},
  year={2020},
  publisher={IEEE}
}

@article{revay2023recurrent,
  title={Recurrent equilibrium networks: Flexible dynamic models with guaranteed stability and robustness},
  author={Revay, Max and Wang, Ruigang and Manchester, Ian R},
  journal={IEEE Transactions on Automatic Control},
  year={2023},
  publisher={IEEE}
}

@inproceedings{wang2023direct,
  title={Direct parameterization of {L}ipschitz-bounded deep networks},
  author={Wang, Ruigang and Manchester, Ian},
  booktitle={International Conference on Machine Learning},
  pages={36093--36110},
  year={2023},
  organization={PMLR}
}

@article{fazlyab2019efficient,
  title={Efficient and accurate estimation of {L}ipschitz constants for deep neural networks},
  author={Fazlyab, Mahyar and Robey, Alexander and Hassani, Hamed and Morari, Manfred and Pappas, George},
  journal={Advances in Neural Information Processing Systems},
  volume={32},
  year={2019}
}

@book{chen1984linear,
  title={Linear system theory and design},
  author={Chen, Chi-Tsong},
  year={1984},
  publisher={Saunders college publishing}
}

@article{mnist,
  title = {{MNIST} handwritten digit database},
  author = {LeCun, Yann and Cortes, Corinna},
  year = 2010
}

@article{combettes2020lipschitz,
  title={Lipschitz certificates for layered network structures driven by averaged activation operators},
  author={Combettes, Patrick L and Pesquet, Jean-Christophe},
  journal={SIAM Journal on Mathematics of Data Science},
  volume={2},
  number={2},
  pages={529--557},
  year={2020},
  publisher={SIAM}
}

@article{virmaux2018lipschitz,
  title={Lipschitz regularity of deep neural networks: analysis and efficient estimation},
  author={Virmaux, Aladin and Scaman, Kevin},
  journal={Advances in Neural Information Processing Systems},
  volume={31},
  year={2018}
}

@inproceedings{latorre2020lipschitz,
  title={Lipschitz constant estimation of Neural Networks via sparse polynomial optimization},
  author={Fabian Latorre and Paul Rolland and Volkan Cevher},
  booktitle={International Conference on Learning Representations},
  year={2020},
}

@article{pauli2021training,
  title={Training robust neural networks using {L}ipschitz bounds},
  author={Pauli, Patricia and Koch, Anne and Berberich, Julian and Kohler, Paul and Allg{\"o}wer, Frank},
  journal={IEEE Control Systems Letters},
  volume={6},
  pages={121--126},
  year={2021},
  publisher={IEEE}
}

@inproceedings{anil2019sorting,
  title={Sorting out {L}ipschitz function approximation},
  author={Anil, Cem and Lucas, James and Grosse, Roger},
  booktitle={International Conference on Machine Learning},
  pages={291--301},
  year={2019},
  organization={PMLR}
}

@inproceedings{jordan2020exactly,
  title={Exactly computing the local {L}ipschitz constant of {ReLU} networks},
  author={Jordan, Matt and Dimakis, Alexandros G},
  booktitle={Advances in Neural Information Processing Systems},
  pages={7344--7353},
  year={2020}
}

@article{fazlyab2020safety,
  title={Safety verification and robustness analysis of neural networks via quadratic constraints and semidefinite programming},
  author={Fazlyab, Mahyar and Morari, Manfred and Pappas, George J},
  journal={IEEE Transactions on Automatic Control},
  year={2020},
  publisher={IEEE}
}

@inproceedings{singla2022improved,
  title={Improved deterministic l2 robustness on {CIFAR}-10 and {CIFAR}-100},
  author={Sahil Singla and Surbhi Singla and Soheil Feizi},
  booktitle={International Conference on Learning Representations},
  year={2022},
}

@inproceedings{trockman2021orthogonalizing,
title={Orthogonalizing Convolutional Layers with the {C}ayley Transform},
author={Asher Trockman and J Zico Kolter},
booktitle={International Conference on Learning Representations},
year={2021},
}

@inproceedings{prach2022almost,
  title={Almost-orthogonal layers for efficient general-purpose {L}ipschitz networks},
  author={Prach, Bernd and Lampert, Christoph H},
  booktitle={Computer Vision--ECCV 2022: 17th European Conference},
  year={2022},
}

@inproceedings{helfrich2018orthogonal,
  title={Orthogonal recurrent neural networks with scaled {C}ayley transform},
  author={Helfrich, Kyle and Willmott, Devin and Ye, Qiang},
  booktitle={International Conference on Machine Learning},
  pages={1969--1978},
  year={2018},
  organization={PMLR}
}

@article{jin2020stability,
  title={Stability-certified reinforcement learning: A control-theoretic perspective},
  author={Jin, Ming and Lavaei, Javad},
  journal={IEEE Access},
  volume={8},
  pages={229086--229100},
  year={2020},
}

@inproceedings{madry2017towards,
	title={Towards Deep Learning Models Resistant to Adversarial Attacks},
	author={Aleksander Madry and Aleksandar Makelov and Ludwig Schmidt and Dimitris Tsipras and Adrian Vladu},
	booktitle={International Conference on Learning Representations},
	year={2018}
}

@inproceedings{berkenkamp2017safe,
  title={Safe model-based reinforcement learning with stability guarantees},
  author={Berkenkamp, Felix and Turchetta, Matteo and Schoellig, Angela and Krause, Andreas},
  booktitle={Advances in Neural Information Processing Systems},
  pages={908--918},
  year={2017}
}

@inproceedings{araujoICLR2023,
  title={A Unified Algebraic Perspective on {L}ipschitz Neural Networks},
  author={Araujo, Alexandre and Havens, Aaron J and Delattre, Blaise and Allauzen, Alexandre and Hu, Bin},
  booktitle={International Conference on Learning Representations},
year={2023}
}

@article{anderson1986stability,
  title={Stability and the matrix Lyapunov equation for discrete 2-dimensional systems},
  author={Anderson, Brian and Agathoklis, Panajotis and Jury, E and Mansour, M},
  journal={IEEE Transactions on Circuits and Systems},
  volume={33},
  number={3},
  pages={261--267},
  year={1986},
  publisher={IEEE}
}

@inproceedings{szegedy2013intriguing,
  title={Intriguing Properties of Neural Networks},
  author={Christian Szegedy and Wojciech Zaremba and Ilya Sutskever and Joan Bruna and Dumitru Erhan and Ian Goodfellow and Rob Fergus},
  booktitle={International Conference on Learning Representations},
  year={2014}
}

@article{roesser1975discrete,
	title={A discrete state-space model for linear image processing},
	author={Roesser, Robert},
	journal={IEEE Transactions on Automatic Control},
	volume={20},
	number={1},
	year={1975},
	publisher={IEEE}
}

@inproceedings{pauli2022neural,
  title={Neural network training under semidefinite constraints},
  author={Pauli, Patricia and Funcke, Niklas and Gramlich, Dennis and Msalmi, Mohamed Amine and Allg{\"o}wer, Frank},
  booktitle={61st Conference on Decision and Control},
  pages={2731--2736},
  year={2022},
  organization={IEEE}
}

@book{Goodfellow-et-al-2016,
    title={Deep Learning},
    author={Ian Goodfellow and Yoshua Bengio and Aaron Courville},
    publisher={MIT Press},
    year={2016}
}

@article{lecun2015deep,
  title={Deep learning},
  author={LeCun, Yann and Bengio, Yoshua and Hinton, Geoffrey},
  journal={nature},
  volume={521},
  number={7553},
  pages={436--444},
  year={2015},
  publisher={Nature Publishing Group UK London}
}

@article{bishop1994neural,
  title={Neural networks and their applications},
  author={Bishop, Chris M},
  journal={Review of scientific instruments},
  volume={65},
  number={6},
  pages={1803--1832},
  year={1994},
  publisher={American Institute of Physics}
}

@article{li2021survey,
  title={A survey of convolutional neural networks: analysis, applications, and prospects},
  author={Li, Zewen and Liu, Fan and Yang, Wenjie and Peng, Shouheng and Zhou, Jun},
  journal={IEEE Transactions on Neural Networks and Learning Systems},
  volume={33},
  number={12},
  pages={6999--7019},
  year={2021},
  publisher={IEEE}
}

@article{gouk2021regularisation,
  title={Regularisation of neural networks by enforcing {L}ipschitz continuity},
  author={Gouk, Henry and Frank, Eibe and Pfahringer, Bernhard and Cree, Michael J},
  journal={Machine Learning},
  volume={110},
  pages={393--416},
  year={2021},
  publisher={Springer}
}

@article{pauli2024state,
  title={State space representations of the {R}oesser type for convolutional layers},
  author={Pauli, Patricia and Gramlich, Dennis and Allg{\"o}wer, Frank},
  journal={IFAC-PapersOnLine},
  volume={58},
  number={17},
  pages={344--349},
  year={2024},
  publisher={Elsevier}
}

@article{guo2006relation,
  title={On the relation between stability of continuous-and discrete-time evolution equations via the {C}ayley transform},
  author={Guo, Bao-Zhu and Zwart, Hans},
  journal={Integral Equations and Operator Theory},
  volume={54},
  pages={349--383},
  year={2006},
  publisher={Springer}
}

@article{byrnes1994losslessness,
  title={Losslessness, feedback equivalence, and the global stabilization of discrete-time nonlinear systems},
  author={Byrnes, Christopher I and Lin, Wei},
  journal={IEEE Transactions on Automatic Control},
  volume={39},
  number={1},
  pages={83--98},
  year={1994},
  publisher={IEEE}
}

@article{aquino2022robustness,
  title={Robustness against adversarial attacks in neural networks using incremental dissipativity},
  author={Aquino, Bernardo and Rahnama, Arash and Seiler, Peter and Lin, Lizhen and Gupta, Vijay},
  journal={IEEE Control Systems Letters},
  volume={6},
  pages={2341--2346},
  year={2022},
  publisher={IEEE}
}

@inproceedings{wang2024monotone,
    author = {Wang, Ruigang and Dvijotham, Krishnamurthy and Manchester, Ian R},
    title = {Monotone, Bi-{L}ipschitz, and {Polyak- \L ojasiewicz} Networks},
    booktitle = {International Conference on Machine Learning},
    year = {2024},
    organization={PMLR}
}

@article{chen2019residual,
  title={Residual flows for invertible generative modeling},
  author={Chen, Ricky TQ and Behrmann, Jens and Duvenaud, David K and Jacobsen, J{\"o}rn-Henrik},
  journal={Advances in Neural Information Processing Systems},
  volume={32},
  year={2019}
}

@inproceedings{behrmann2019invertible,
  title={Invertible residual networks},
  author={Behrmann, Jens and Grathwohl, Will and Chen, Ricky TQ and Duvenaud, David and Jacobsen, J{\"o}rn-Henrik},
  booktitle={International Conference on Machine Learning},
  pages={573--582},
  year={2019},
  organization={PMLR}
}

@article{perugachi2021invertible,
  title={Invertible densenets with concatenated lipswish},
  author={Perugachi-Diaz, Yura and Tomczak, Jakub and Bhulai, Sandjai},
  journal={Advances in Neural Information Processing Systems},
  volume={34},
  pages={17246--17257},
  year={2021}
}




\bgroup
\setlength{\columnsep}{4pt}
\small
\begin{wrapfigure}[10]{l}{1in}
	\includegraphics[width=1in,height=1.25in,clip,keepaspectratio]{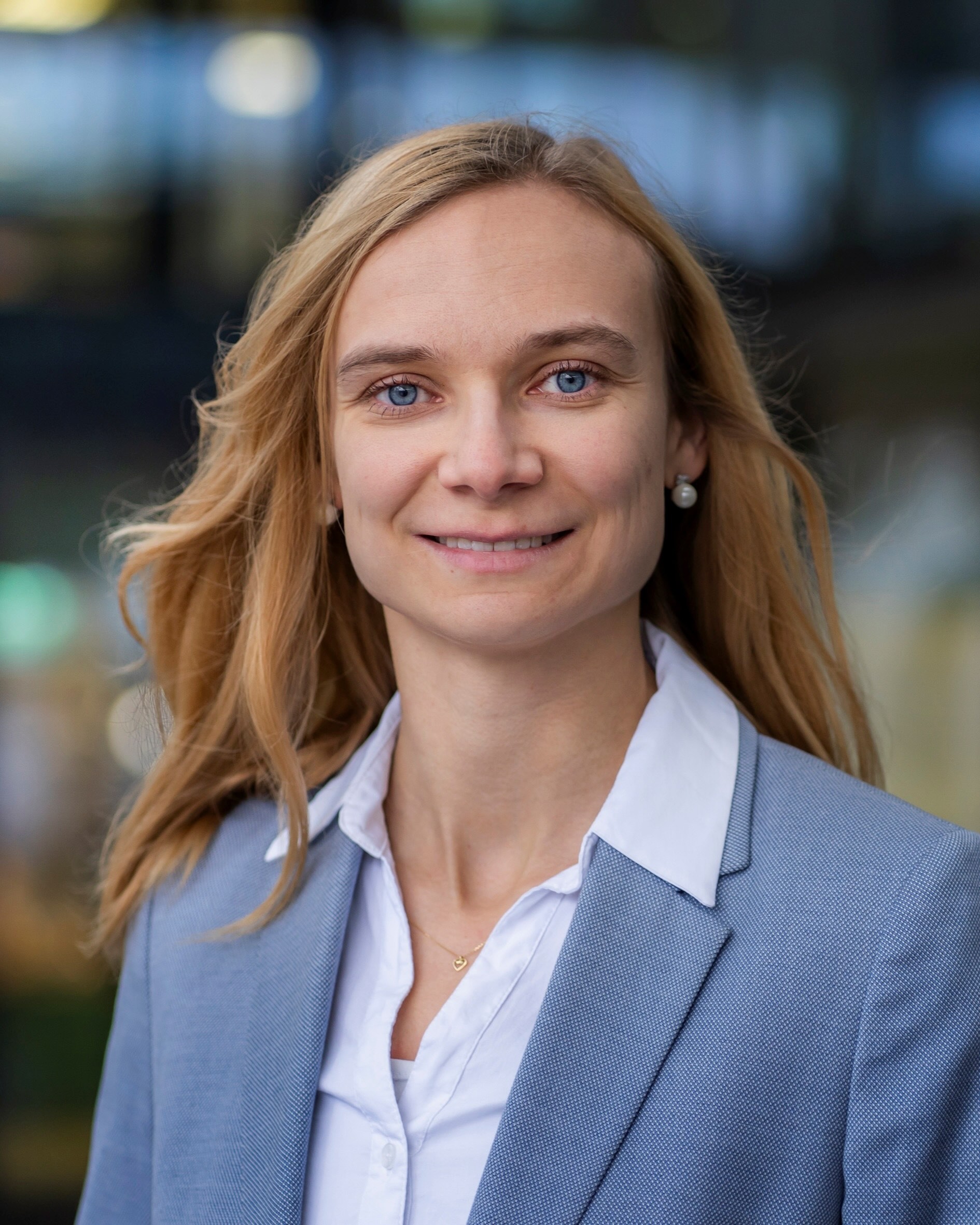}
	\vspace{-12pt}
\end{wrapfigure}
\noindent
\textbf{Patricia Pauli} received master's degrees in mechanical engineering and computational engineering from the Technical University of Darmstadt, Germany, in 2019. She received a PhD from the University of Stuttgart, Germany, in 2025. Since 2025, she has been an Assistant Professor in the Department of Mechanical Engineering at Eindhoven University of Technology. Her research interests are in robust machine learning and learning-based control.

\begin{wrapfigure}[10]{l}{1in}
	\vspace{-8pt}
	\includegraphics[width=1in,height=1.25in, clip, keepaspectratio]{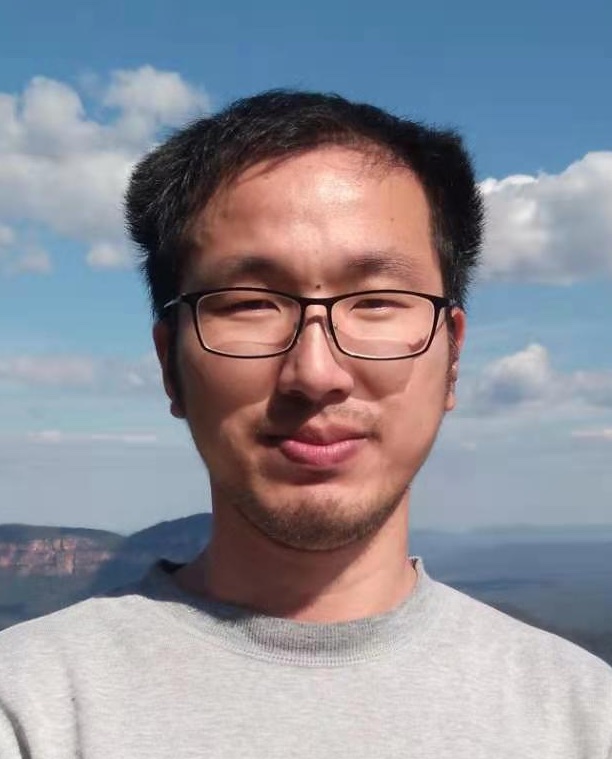}
	\vspace{-12pt}
\end{wrapfigure}
\noindent
\textbf{Ruigang Wang} received the Ph.D. degree in chemical engineering from The University of New South Wales (UNSW), Sydney, NSW, Australia, in 2017. From 2017 to 2018, he worked as a Postdoctoral Fellow with the UNSW. He is currently a Postdoctoral Fellow with the Australian Centre for Robotics, The University of Sydney, Sydney. His research interests include contraction-based control, estimation, and learning for nonlinear systems.

\begin{wrapfigure}[10]{l}{1in}
	\vspace{-8pt}
	\includegraphics[width=1in,height=1.25in,clip,keepaspectratio]{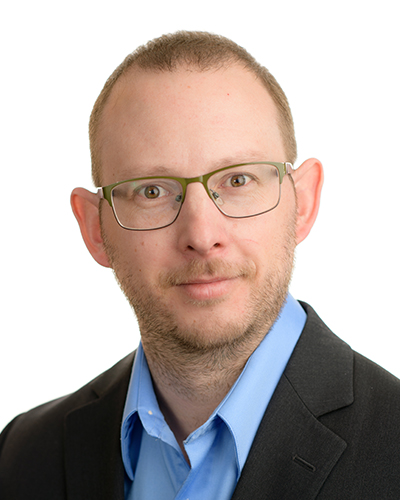}
	\vspace{-12pt}
\end{wrapfigure}
\noindent
\textbf{Ian R. Manchester} received the B.E. (Hons 1) and Ph.D. degrees in Electrical Engineering from the University of New South Wales, Australia, in 2002 and 2006, respectively. He was a Researcher with Umeå University, Umeå, Sweden, and the Massachusetts Institute of Technology, Cambridge, MA, USA. In 2012, he joined the Faculty with the University of Sydney, Camperdown, NSW, Australia, where he is currently a Professor of mechatronic engineering, the Director of the Australian Centre for Robotics (ACFR), and Director of the Australian Robotic Inspection and Asset Management Hub (ARIAM). His research interests include optimization and learning methods for nonlinear system analysis, identification, and control, and the applications in robotics and biomedical engineering.

\begin{wrapfigure}[10]{l}{1in}
	\vspace{-8pt}
	\includegraphics[width=1in,height=1.3in,clip,keepaspectratio]{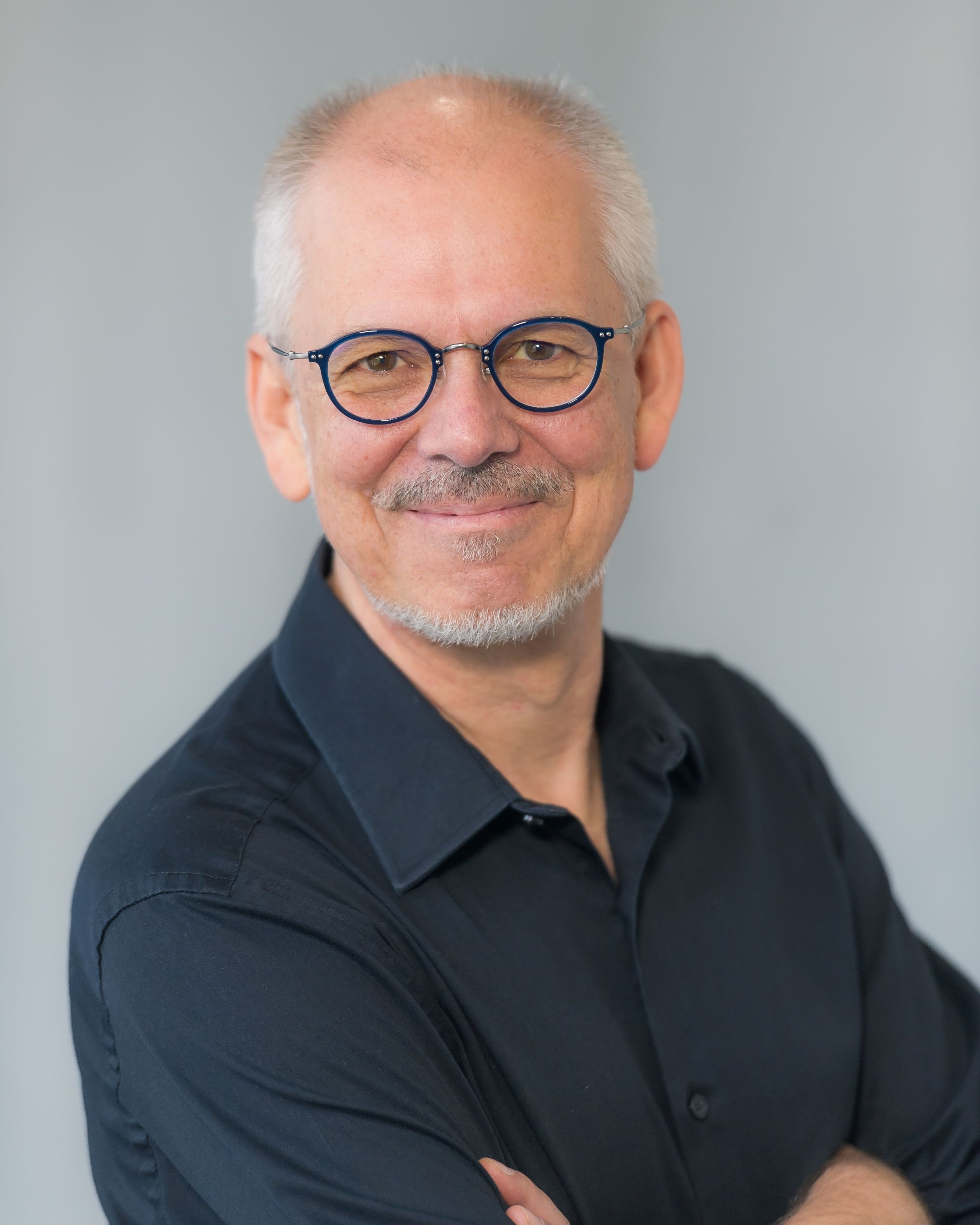}
	\vspace{-12pt}
\end{wrapfigure}
\noindent
\textbf{Frank Allg\"ower} studied engineering cybernetics and applied mathematics in Stuttgart and with the University of California, Los Angeles (UCLA), CA, USA, respectively, and received the Ph.D. degree from the University of Stuttgart, Stuttgart, Germany. Since 1999, he has been the Director of the Institute for Systems Theory and Automatic Control and a professor with the University of Stuttgart. His research interests include predictive control, data-based control, networked control, cooperative control, and nonlinear control with application to a wide range of fields including systems biology. Dr. Allgöwer was the President of the International Federation of Automatic Control (IFAC) in 2017–2020 and the Vice President of the German Research Foundation DFG in 2012–2020.

\egroup

\end{document}